\def \tree {\textsf{Tree}}
\theoremstyle{definition}
\newtheorem{theorem}{Theorem}
\newtheorem{definition}{Definition}
\newtheorem{lemma}{Lemma}
\newtheorem{remark}{Remark}
\def\lc{\left\lceil}
\def\rc{\right\rceil}
\renewcommand{\tilde}{\widetilde}
\newcommand*\diff{\mathop{}\!\mathrm{d}}
\newcommand{\BlackBox}{\rule{1.5ex}{1.5ex}}  
\newenvironment{proof}{\par\noindent{\bf Proof\ }}{\hfill\BlackBox\\[2mm]}
\begin{document}

%
\runningtitle{Optimal Rates of (Locally) Differentially Private Heavy-tailed Multi-Armed Bandits}

%

\twocolumn[

\aistatstitle{Optimal Rates of (Locally) Differentially Private Heavy-tailed \\
Multi-Armed Bandits}

\aistatsauthor{ Youming Tao \footnotemark[1]
  \And Yulian Wu \footnotemark[1] \And  Peng Zhao \And Di Wang }

\aistatsaddress{ Shandong University \And  KAUST \And Nanjing University \And KAUST } ]

\footnotetext[1]{The first two authors contributed equally. Part of the work was done when Youming Tao was a research intern at KAUST, and Yulian Wu was a student at East China Normal University.}

\begin{abstract}
  In this paper we investigate the problem of stochastic multi-armed bandits (MAB) in the (local) differential privacy (DP/LDP) model. Unlike previous results that assume bounded/sub-Gaussian reward distributions, we focus on the setting where each arm's reward distribution only has $(1+v)$-th moment with some $v\in (0, 1]$. In the first part, we study the problem in the central $\epsilon$-DP model. We first provide a near-optimal result by developing a private and robust Upper Confidence Bound (UCB) algorithm. Then, we improve the result via a private and robust version of the Successive Elimination (SE) algorithm. Finally, we establish the lower bound to show that the instance-dependent regret of our improved algorithm is optimal. In the second part, we study the problem in the $\epsilon$-LDP model. We propose an algorithm that can be seen as locally private and robust version of SE algorithm, which provably achieves (near) optimal rates for both instance-dependent and instance-independent regret. Our results reveal differences between the problem of private MAB with bounded/sub-Gaussian rewards and heavy-tailed rewards. To achieve these (near) optimal rates, we develop several new hard instances and private robust estimators as byproducts, which might be used to other related problems. Finally, experiments also support our theoretical findings and show the effectiveness of our algorithms. 
\end{abstract}

\section{INTRODUCTION}
As one of the most fundamental problems in statistics and machine learning, (stochastic) Multi-Armed Bandits (MAB), and its general form, bandit learning, have already been studied for more than half a century, starting from \citet{thompson1933likelihood} and~\citet{robbins1952some}. They find numerous applications in many areas such as medicine \citep{gutierrez2017multi}, finance \citep{shen2015portfolio}, social science \citep{nakayama2017nash}, and clinical research \citep{press2009bandit}. The wide applications of bandit learning also present some new challenges to existing methods. Particularly, due to the existence of sensitive data  and their  distributed nature in many applications like recommendation system, biomedicine, and genomics, it is often challenging to preserve the privacy of such data, which makes the data
extremely difficult to aggregate and learn from.

To preserve the privacy of these sensitive data, Differential Privacy (DP) \citep{dwork2006calibrating} has received a great deal of attention and now has established itself as a de facto notation of privacy for data analysis. Over the past decade, differentially private bandit learning has been extensively studied from various setups including classical stochastic MAB~\citep{mishra2015nearly,tossou2016algorithms,sajed2019optimal,ren2020multi,kalogerias2020best}, combinatorial semi-bandits~\citep{chen2020locally}, and contextual bandits~\citep{shariff2018differentially,hannun2019privacy,malekzadeh2020privacy,zheng2020locally}. Additionally, \citet{wang2020global,dubey2020differentially,dubey2020private} recently investigated bandit learning in the federated/distributed setting.

\begin{table*}[htbp]
\caption{Summary of our contributions and comparison with the bounded/sub-Gaussian reward distribution. All the results are in the expected regret form. For the heavy-tailed reward distribution case, we assume the $(1+v)$-th moment of each reward distribution is bounded by $1$ for some known $v\in (0, 1]$. For the bounded reward distribution case, we assume the rewards are bounded by 1.  For the sub-Gaussian reward distribution case, we assume the variance of each reward distributed is bounded by $1$. Here $K$ is the number of arms, $T$ is the number of rounds, and $\Delta_a$ is the mean reward gap of arm $a$.}

\begin{center}
\resizebox{\textwidth}{!}{	
\begin{tabular}{|c|l|l|l|}
\hline
Problem                              & Model          & Upper Bound                                                                                                      & Lower Bound                                                                                                                                                                                                                                                                                                            \\ \hline
\multirow{2}{*}{\begin{tabular}[c]{@{}c@{}}Heavy-tailed Reward\\ (Instance-dependent Bound)\end{tabular}}   & $\epsilon$-DP  & $ O\left(\frac{\log T}{\epsilon}\sum_{\Delta_a>0}{(\frac{1}{\Delta_a})^{\frac{1}{v}}}+\max_a\Delta_a\right)$     & $\Omega \left(\frac{\log T}{\epsilon}\sum_{\Delta_a>0}{(\frac{1}{\Delta_a})^{\frac{1}{v}}}\right) $                    \\ \cline{2-4}
                                     & $\epsilon$-LDP & ${O} \left(\frac{\log T}{\epsilon^2}\sum_{\Delta_a>0}{(\frac{1}{\Delta_a})^{\frac{1}{v}}}+\max_a\Delta_a\right)$ & {$\Omega \left(\frac{\log T}{\epsilon^2}{\sum_{\Delta_a>0}(\frac{1}{\Delta_a})^{\frac{1}{v}}}\right) $                                 }                                                                                                            \\ \hline
\multirow{2}{*}{\begin{tabular}[c]{@{}c@{}}Bounded/sub-Gaussian Reward\\ (Instance-dependent Bound)\end{tabular}}      & $\epsilon$-DP  & $O\left(\frac{K\log T}{\epsilon}+\sum_{\Delta_a>0} \frac{\log T}{\Delta_a}\right)$ \citep{sajed2019optimal}                                                                                                                &         $\Omega\left(\frac{K\log T}{\epsilon}+\sum_{\Delta_a>0} \frac{\log T}{\Delta_a}\right)$                              \citep{shariff2018differentially}                                                                                                                                                                                                                                                                               \\ \cline{2-4}
                                     & $\epsilon$-LDP &     $O\left(\frac{1}{\epsilon^2}\sum_{\Delta_a>0} \frac{\log T}{\Delta_a}+\Delta_a\right)$ \citep{ren2020multi}                                                                                                            &      $\Omega\left(\frac{1}{\epsilon^2}\sum_{\Delta_a>0} \frac{\log T}{\Delta_a}\right)$  \citep{ren2020multi}                                                                                                                                                                                                                                                                                                                                            \\ \hline

\multirow{2}{*}{\begin{tabular}[c]{@{}c@{}}Heavy-tailed Reward\\ (Instance-independent Bound)\end{tabular}}     & $\epsilon$-DP  &     $O\left(\left(\frac{K\log T}{\epsilon}\right)^{\frac{v}{1+v}}T^{\frac{1}{1+v}}\right)$                                                                                                                &               -----                                                                                                                                                                                                                                                                                                                                \\ \cline{2-4}
                                     & $\epsilon$-LDP &                                                                                                  $O\left(\left(\frac{K\log T}{\epsilon^2}\right)^{\frac{v}{1+v}}T^{\frac{1}{1+v}}\right)$      &         $\Omega\left(\left(\frac{K}{\epsilon^2}\right)^{\frac{v}{1+v}}T^{\frac{1}{1+v}}\right)$                                                                                                                                                                                                                                                                                                                                      \\ \hline

\multirow{2}{*}{\begin{tabular}[c]{@{}c@{}}Bounded/sub-Gaussian Reward\\ (Instance-independent Bound)\end{tabular}}     & $\epsilon$-DP  &     $O\left(\sqrt{KT\log T}+ \frac{K\log T}{\epsilon}\right)$     \citep{sajed2019optimal}                                                                                                            &                  $\Omega\left(\sqrt{KT}+ \frac{K\log T}{\epsilon}\right)$ \citep{sajed2019optimal}                                                                                                                                                                                                                                                                                                                 \\ \cline{2-4}
                                     & $\epsilon$-LDP &      $O\left(\frac{\sqrt{KT\log T}}{\epsilon}\right)$ \citep{ren2020multi}                                                                                                            &          $\Omega(\frac{\sqrt{KT}}{\epsilon})$ \citep{basu2019differential}                                                                                                                                                                                                                                                                                                                                   \\ \hline
\end{tabular}
}
\label{Table:1}
\end{center}
\end{table*}
However, these problems are still not well-understood. For example, all of the previous results and methods need to assume that the rewards are sampled from some bounded (or sub-Gaussian) distributions to guarantee the DP property. However, such assumptions may not hold when designing decision-making algorithms for complicated real-world systems. In particular, previous papers have shown that the rewards or the interactions in such systems often lead to heavy-tailed and power law distributions \citep{dubey2019thompson}, such as modeling stock prices \citep{bradley2003financial}, preferential attachment in social networks \citep{mahanti2013tale}, and online behavior on websites \citep{kumar2010characterization}. Thus, it is necessary to develop new methods to deal with these heavy-tailed rewards in the private bandit learning.

To address the above issue, in this paper, we focus on the most fundamental bandit model, {\em i.e.,} multi-armed bandits, with heavy-tailed rewards.  We conduct a comprehensive and the first study on MAB with heavy-tailed rewards in both central and local DP models, where the reward distribution of each arm only has the $(1+v)$-th moment for some $v\in (0, 1]$. Our contributions are summarized as follows.
\begin{itemize}
    \item In the first part (Section~\ref{sec:central}), we consider the problem in the central $\epsilon$-DP model. Specifically, we first propose a method based on a robust version of the Upper Confidence Bound (UCB) algorithm, and also design a new mechanism that could be seen as an adaptive version of the Tree-based mechanism~\citep{dwork2010differential}. To further improve the result, we then develop a private and robust version of the Successive Elimination (SE) algorithm and show that the (expected) regret bound is improved by a factor of $\log^{1.5+\frac{1}{v}} T$, where $T$ is the number of rounds. Moreover, we establish the lower bound and show that the instance-dependent regret bound of $ O\left(\frac{\log T}{\epsilon}\sum_{\Delta_a>0}{(\frac{1}{\Delta_a})^{\frac{1}{v}}}+\max_a\Delta_a\right)$ achieved by our second algorithm is optimal (up to $\text{poly}(\log\log \frac{1}{\Delta_a})$ factors), where $\Delta_a$ is the mean gap defined in Section~\ref{sec:3.1}.
    \item In the second part (Section~\ref{sec:LDP}), we study the problem in the $\epsilon$-LDP model. We first develop a LDP version of the SE algorithm which achieves an instance-dependent regret bound of ${O} \left(\frac{\log T}{\epsilon^2}\sum_{\Delta_a>0}{(\frac{1}{\Delta_a})^{\frac{1}{v}}}+\max_a\Delta_a\right)$ and an $\tilde{O}\left(\left(\frac{K}{\epsilon^2}\right)^{\frac{v}{1+v}}T^{\frac{1}{1+v}}\right)$ instance-independent bound. Then, we show that the above instance-dependent regret bound is optimal and the instance-independent regret bound is near-optimal (up to $\text{poly}(\log T)$ factors).
    \item All of our results also reveal the differences between the problem of private MAB with bounded/sub-Guassian rewards and that with heavy-tailed rewards (see Table \ref{Table:1} for details).  To achieve these (near) optimal results, we develop several new hard instances, mechanisms and private robust estimators as byproducts, which could be used to other related problems, such as private contextual bandits~\citep{shariff2018differentially} or private reinforcement learning~\citep{vietri2020private}.
    \end{itemize}

Due to space limitation, all the technical lemmas and proofs are included in the appendix. The source code is also included in Supplementary Materials.

\section{RELATED WORK}
As mentioned earlier, there are enormous previous works on either MAB with bounded/sub-Gaussian reward distributions in  the (local) DP model \citep{mishra2015nearly,tossou2016algorithms,gajane2018corrupt,shariff2018differentially,basu2019differential,sajed2019optimal,ren2020multi,vietri2020private,zheng2020locally} or MAB with heavy-tailed reward distributions \citep{bubeck2013bandits,lee2020optimal,yu2018pure,lattimore2017scale,agrawal2021regret,vakili2013deterministic,agrawal2020optimal}. However, to the best of our knowledge, MAB with heavy-tailed reward in the (local) DP model has not been studied before. In the following we only discuss previous works that are the most close to ours.

In the previous studies of MAB with bounded/sub-Gaussian rewards, to guarantee DP property, the most direct way is to modify the classical UCB algorithm~\citep{mishra2015nearly,tossou2016algorithms,ren2020multi}. Our first algorithm is motivated by a robust version of the UCB algorithm in \citep{bubeck2013bandits}. However, there are several differences. First, unlike the non-private setting, in this paper we show that this approach could only achieve a suboptimal instance-dependent (expected) regret bound, due to the noises added by the Tree-based mechanism \citep{chan2011private}. Secondly, due to the added noises, parameters such as the thresholds are quite different with the non-private case. To achieve an improved regret bound, our second algorithm is based on the Successive Elimination (SE) \citep{even2006action} algorithm, whose private version has been studied in \citep{sajed2019optimal} for the bounded reward case. In this paper we extend the method to the heavy-tailed case and the LDP model. Our algorithms are provably (near) optimal. For the lower bounds, previous papers established hard instances for either private MAB with bounded rewards \citep{basu2019differential,sajed2019optimal,ren2020multi} or heavy-tailed MAB in the non-private case \citep{bubeck2013bandits}. However, these instances cannot be used to our problem and this paper builds new hard instances.

Private and robust estimation has drawn much attention in recent years. \citet{barber2014privacy} provided the first study on private mean estimation for distributions with bounded moment, which is extended by \citet{kamath2020private,brunel2020propose,liu2021robust} recently. However, all of them need to assume the underlying distribution has the second-order moment, while in this paper we only need to assume the reward distributions have the $(1+v)$-th moment for some $v\in (0, 1]$. Moreover, all of these works only focus on the central DP model and offline setting, and it is generally unclear whether they could be extended to the stream setting. Thus, our problem is more general. In addition to the mean estimation problem, recently \citet{wang2020differentially} studied differentially private stochastic convex optimization with heavy-tailed data, while  their work still requires to assume the distribution of gradient has second-order moment and cannot be used to the stream setting.

\section{PRELIMINARIES}

In this section, we present some preliminaries for MAB with heavy-tailed rewards and differential privacy.

\subsection{MAB with Heavy-tailed Rewards} \label{sec:3.1}
In a stochastic multi-armed bandits (MAB) problem, there is a learner interacting with the environment sequentially over $T$ rounds. The learner is faced with a set of $K$ independent arms $\{1,\ldots, K\}$. In each round $t\in[T]$, the learner selects an arm $a_t \in[K]$ to pull and then obtains a reward $x_{t}$ drawn i.i.d. from a fixed but unknown probability distribution $\mathcal{X}_{a_t}$ associated with the chosen arm. Denote by $\mu_a$ the mean of each distribution $\mathcal{X}_a$ for $a\in [K]$, and by  $\mu^*=\max_{a\in [K]}\mu_a$ the maximum. Define $\Delta_a\triangleq\mu^*-\mu_a$ as the mean reward gap for arm $a$. The learner aims to maximize her/his expected cumulative reward over time, {\em i.e.,} to minimize the (expected) cumulative \textit{regret}, defined as
\begin{equation}
    \mathcal{R}_T\triangleq T\mu^*-\mathbb{E}\left[\sum\limits_{t=1}^{T}{x_t}\right],
\end{equation}
where the expectation is taken with respect to all the randomness. This paper considers a heavy-tailed setting where each arm's reward distribution only has finite raw moments of order $1+v$ for some $v\in(0,1]$. Concretely, we assume that there is a constant $u>0$ such that for each reward distribution $\mathcal{X}_a$,
\begin{equation}\label{eq:2}
    \mathbb{E}_{X\sim \mathcal{X}_a} [|X|^{1+v}] \leq u.
\end{equation}
In this paper, we assume both $v$ and $u$ are known constants, {\em i.e.,} for any constant $c$ we regard $c^\frac{1}{v}$ as a constant. Note that the assumption is commonly used in robust estimation \citep{catoni2012challenging,kamath2020private,wang2020differentially} and MAB with heavy-tailed rewards \citep{bubeck2013bandits,dubey2019thompson,lee2020optimal,agrawal2021regret}.

Instead of the assumption on the raw moment in (\ref{eq:2}), there is another assumption on the central moment instead, {\em i.e., $ \mathbb{E}_{X\sim \mathcal{X}_a} [|X-\mathbb{E}(X)|^{1+v}] \leq u$}. We note that both of the raw moment and central moment assumptions have been studied in the previous work on private robust estimation \citep{wang2020differentially,kamath2020private} for the mean estimation of distributions with bounded second-order moment. Here, we claim that, the bounded raw moment implies that the bounded central moment, and vice versa. See Lemma~\ref{lemma-raw-central} in Appendix for details.
\subsection{Differential Privacy}
We introduce the definition of differential privacy (DP) in the stream setting since rewards are released continually.
According to \citep{dwork2010differential}, for data streams there are two different settings, {\em i.e.,}  event-level setting and user-level setting. This paper will focus on the event-level setting, {\em i.e.,} two data streams $\sigma$ and $\sigma^\prime$ are adjacent if they differ at exactly one timestep. Intuitively, an algorithm is differentially private  if it cannot be used to distinguish any two adjacent streams.
\begin{definition}[Differential Privacy \citep{dwork2010differential}]
   An algorithm $\mathcal{M}$ is $\epsilon$-differentially private (DP) if for any adjacent streams $\sigma$ and $\sigma^\prime$, and any measurable subset $\mathcal{O}$ of the output space of $\mathcal{M}$, we have
       $\mathbb{P}\left[\mathcal{M}(\sigma)\in\mathcal{O}\right]
       \le e^\epsilon\cdot\mathbb{P}\left[\mathcal{M}(\sigma^\prime)\in\mathcal{O}\right].$
\end{definition}

Compared with DP, Local Differential Privacy~(LDP) is a stronger notion of privacy. In LDP, each data is perturbed before collection to ensure privacy.

\begin{definition}[Local Differential Privacy]
   An algorithm $\mathcal{M}:\mathcal{X}\to\mathcal{Y}$ is said to be $\epsilon$-locally differentially private (LDP) if for any $x, x^\prime\in\mathcal{X}$, and any measurable subset $\mathcal{O}\subset\mathcal{Y}$, it holds that
$ \mathbb{P}\left[\mathcal{M}(x)\in\mathcal{O}\right]
       \le e^\epsilon\cdot\mathbb{P}\left[\mathcal{M}(x^\prime)\in\mathcal{O}\right].
 $
\end{definition}
In this paper, we will mainly use the Laplacian and an adaptive version of the Tree-based mechanism (see Section \ref{sec:central} for details), and the parallel composition theorem to guarantee the DP property.
\begin{lemma}[Parallel Composition]
    Suppose there are $n$ $\epsilon$-differentially private mechanisms $\{\mathcal{M}_i\}_{i=1}^n$ and $n$ disjoint datasets denoted by $\{D_i\}_{i=1}^n$. Then the algorithm, which applies each $\mathcal{M}_i$ on the corresponding $D_i$, preserves $\epsilon$-DP in total.
\end{lemma}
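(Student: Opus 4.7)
The plan is to reduce the statement directly to the $\epsilon$-DP guarantee of the single mechanism that "sees" the differing record. Let $\sigma$ and $\sigma'$ be adjacent inputs to the overall algorithm, differing at exactly one element. Since the sub-datasets $\{D_i\}_{i=1}^n$ are disjoint, this differing element lands in a unique partition $D_k$; equivalently, when we compare the induced sub-datasets, $D_i = D'_i$ for every $i \neq k$, while $(D_k, D'_k)$ is an adjacent pair in the sense of Definition~1. This localization of the perturbation is the key structural observation.

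Next, I would write the output of the overall algorithm as the tuple $(\mathcal{M}_1(D_1), \ldots, \mathcal{M}_n(D_n))$, using the fact that distinct mechanisms draw independent internal randomness. For any product event $\mathcal{O}_1 \times \cdots \times \mathcal{O}_n$, this independence factors the probability as $\prod_{i=1}^n \mathbb{P}[\mathcal{M}_i(D_i) \in \mathcal{O}_i]$. For every $i \neq k$, the corresponding factor is identical under $\sigma$ and $\sigma'$ because $D_i = D'_i$, so those factors cancel in the ratio. The only surviving ratio is the $k$-th, which is bounded by $e^\epsilon$ by the $\epsilon$-DP property of $\mathcal{M}_k$ applied to the adjacent pair $(D_k, D'_k)$.

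Finally, I would lift the bound from product events to arbitrary measurable subsets $\mathcal{O}$ of the joint output space via a standard $\pi$-$\lambda$ / monotone class argument, since the collection of events satisfying the $e^\epsilon$ ratio forms a $\lambda$-system that contains the generating $\pi$-system of product rectangles. No step here is genuinely hard; the only point requiring mild care is ensuring the independence of the $\mathcal{M}_i$'s is explicit (so the joint measure really is the product measure on the product $\sigma$-algebra), and ensuring that the disjointness hypothesis is used precisely to guarantee that at most one $D_k$ changes. Everything else is formal bookkeeping.
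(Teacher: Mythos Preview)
Your argument is correct and is the standard proof of parallel composition. Note, however, that the paper does not actually supply a proof of this lemma: it is stated as a known tool in the preliminaries and invoked directly (e.g., in the proof of Theorem~\ref{theorem-privacy-ucb}) without further justification. So there is no ``paper's own proof'' to compare against; your write-up is exactly the elementary argument one would give if asked to fill in the details, and nothing in it is problematic.
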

\begin{definition}[Laplacian Mechanism]\label{def:3}
		Given a function $f : \mathcal{X}^n\rightarrow \mathbb{R}^d$, the Laplacian Mechanism is defined as:
		$\mathcal{M}_L(D,f,\epsilon)=f(D)+ (Y_1, Y_2, \cdots, Y_d),$
		where $Y_i$ is i.i.d. drawn from a Laplacian distribution $\text{Lap}(\frac{\Delta_1(f)}{\epsilon}),$ where $\Delta_1(f)$ is the $\ell_1$-sensitivity of the function $f$, {\em i.e.,}
		$\Delta_1(f)=\sup_{D\sim D^\prime}||f(D)-f(D')||_1$. Here, $D\sim D^\prime$ denotes that $D$ and $D^\prime$ are neighbouring datasets, \emph{i.e.}, those that differ in exactly on entry. For a parameter $\lambda$, the Laplacian distribution has the density function $\text{Lap}(\lambda) (x)=\frac{1}{2\lambda}\exp(-\frac{|x|}{\lambda})$.
		Laplacian Mechanism preserves $\epsilon$-DP.
\end{definition}

\section{DP HEAVY-TAILED MAB}\label{sec:central}
In this section, we will study the problem of designing $\epsilon$-DP algorithms for MAB with heavy-tailed rewards. Recall that, in the classical setting where the rewards follow some bounded distributions, the most commonly used approach is using the Tree-based mechanism to privately calculate the sum of rewards and then modify the Upper Confidence Bound (UCB) algorithm \citep{auer2002finite}, such as \citep{mishra2015nearly,tossou2016algorithms}. However, their methods cannot be directly generalized to the heavy-tailed setting, since now the reward is unbounded. Thus, the most natural idea is to first preprocess the rewards to make them bounded and then use the Tree-based mechanism and UCB algorithm.

To address MAB with heavy-tailed in the non-private case, \citet{bubeck2013bandits} presented a general near-optimal framework called robust-UCB by combining the UCB algorithm with several robust mean estimators. Specifically, the framework first truncates the rewards to some bounded value and then performs a robust version of UCB. Building upon the framework, we first design a method for DOP heavy-tailed MAB based on the above non-private robust-UCB algorithm, see Algorithm \ref{Algo-UCB} for details.

\begin{algorithm}[!t]
    \caption{DP Robust Upper Confidence Bound}
    \label{Algo-UCB}
    \begin{algorithmic}[1]
        \Require time horizon $T$, parameters $\epsilon, v,u$.
        \State Create an empty tree $\tree_{a}$ for each arm $a\in[K]$.
        \State Initialize pull number $n_a\gets 0$ for each arm $a\in[K]$.
        \State Denote $B_{n}$ as $(\frac{\epsilon un}{\log^{1.5}T})^{1/(1+v)}$ for any $n\in\mathbb{N}^+$.
        \For {$t= 1,\ldots,K$}
            \State Pull arm $t$ and observe a reward $x_t$.
            \State Update the pull number $n_t\gets n_t+1$.
            \State Truncate the reward by $\widetilde{x}_t\gets x_t\cdot\mathbb{I}_{| x_t|\le B_{n_t}}$.
            \State Insert $\widetilde{x}_t$ into $\tree_t$.
        \EndFor
        \For {$t= K+1,\ldots,T$}
            \State Obtain $\widehat{S}_a(t)$ for each $a\in[K]$ via Algorithm \ref{Algo-Tree}.
            \State Pull arm $$a_t=\mathop{\arg\max}_{a} \frac{\widehat{S}_a(t)}{n_a}+18u^{\frac{1}{1+v}}(\frac{\log(2t^4)\log^{1.5+\frac{1}{v}}T}{n_a\epsilon})^{\frac{v}{1+v}}$$ and observe the reward $x_{t}$.
            \State Update the pull number $n_{a_t}\gets n_{a_t}+1$.
            \State Truncate the reward by $\widetilde{x}_{t}\gets x_{t}\cdot\mathbb{I}_{|x_{t}|\le B_{n_{a_t}}}$.
            \State Insert $\widetilde{x}_t$ into $\tree_{a_t}$.
        \EndFor
    \end{algorithmic}
\end{algorithm}


The key idea of our algorithm is that, in the first $K$ rounds, we establish a tree instance $\tree_a$ for each arm $a\in[K]$ (step 4-9). After that, at round $t$, when the arm $a_t$ is pulled, we truncate the newly generated reward by a certain range $B_{n_{a_{t}}}$ and insert the truncated reward to $\tree_{a_t}$ (step 10-15). Here we use a robust version of UCB to select the arm, where the sum of rewards is given by the Tree-based mechanism (since we only insert the truncated rewards, we can use the mechanism). We note that in the original Tree-based mechanism in \citep{chan2011private,dwork2010differential} each element in the data steam is bounded by a uniform constant $B$. However, here the bound $B_{n_{a_{t}}}$ is adaptive and non-decreasing. Thus, we need a finer tree mechanism. To this end, we propose an adaptive Tree-based mechanism based on the earlier works~\citep{chan2011private,DworkNRRV_STOC09}, whose procedures are presented in Algorithm~\ref{Algo-Tree}. By the same proof as in \citep{chan2011private,DworkNRRV_STOC09} we have the following guarantees as shown in Lemma~\ref{lemma-tree}.


\begin{definition}[p-sum]\label{def-psum}
        A p-sum is a partial sum of consecutive data items. Let $1\le i\le j$. For a data stream $\sigma$ of length $T$, we use $\sigma(t)$ to denote the data item at time $t\in[T]$ and $\sum[i,j]\triangleq\sum_{k=i}^j\sigma(k)$ to denote a partial sum involving data items $i$ through $j$. We use the notation $\alpha_i^t$ to denote the p-sum $\sum[t-2^i+1,t]$.
\end{definition}

\begin{lemma}[{(Adaptive) Tree-based Mechanism}]
\label{lemma-tree}
Given a stream $\sigma$ such that $\sigma(t)\in[-B_t, B_t]$ for $\forall t\in[T]$, where $B_t$ is non-decreasing with $t$, we want to privately and continually release the sum of the stream $S(t)\triangleq\sum\nolimits_{i=1}^{t}{\sigma(i)}$ for each $t\in [T]$. Tree-based Mechanism (Algorithm \ref{Algo-Tree}) outputs an estimation $\widehat{S}(t)$ for $S(t)$ at each $t\in[T]$ such that $\widehat{S}(t)$ preserves $\epsilon$-differential privacy and guarantees the following noise bound with probability at least $1-\delta$ for any $\delta>0$,
\begin{equation}
    \left| \widehat{S}(t) - S(t) \right| \leq  \frac{2B_t}{\epsilon} \cdot\log^{1.5} T \cdot \log\frac{1}{\delta}.
\end{equation}
\end{lemma}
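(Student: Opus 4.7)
The plan is to follow the standard tree-based mechanism analysis of Chan et al. and Dwork et al., making only the minimal modification needed to accommodate the time-varying bound $B_t$. The structural backbone of Algorithm~\ref{Algo-Tree} is unchanged: any prefix sum $S(t)$ decomposes into at most $\lceil \log T \rceil$ p-sums indexed by the 1-bits in the binary expansion of $t$, and each p-sum is noised exactly once at the time it is formed. The two things to check are then (i) $\epsilon$-DP and (ii) the high-probability error bound.

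For privacy, I would fix adjacent streams $\sigma,\sigma'$ differing only at some timestep $s$, with $\sigma(s),\sigma'(s)\in[-B_s,B_s]$. The position $s$ lies in at most one p-sum per level, so at most $\lceil \log T \rceil$ p-sums are affected by the change. Each such affected p-sum $\alpha_i^{t'}$ is formed at some time $t'\ge s$ and has $\ell_1$-sensitivity at most $|\sigma(s)-\sigma'(s)|\le 2B_s$. The Laplace noise added at that moment has scale $2B_{t'}/\epsilon'$, and since $B_t$ is non-decreasing we have $2B_s \le 2B_{t'}$, so by the standard Laplace mechanism guarantee each affected release is $\epsilon'$-DP. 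Basic composition over the at most $\lceil\log T\rceil$ affected releases, combined with $\epsilon'=\epsilon/\log T$, yields $\epsilon$-DP overall; unaffected p-sums contribute zero privacy loss.

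For accuracy, at any fixed $t$ the error $\widehat{S}(t)-S(t)$ equals a sum of at most $\lceil \log T \rceil$ independent centered Laplace random variables, each of scale at most $2B_t/\epsilon'$ (again using monotonicity of $B$). A standard sub-exponential tail bound for sums of independent Laplacians gives, with probability $1-\delta$, a deviation of order $O\bigl((B_t/\epsilon')\sqrt{\log T}\,\log(1/\delta)\bigr)$. Substituting $\epsilon'=\epsilon/\log T$ produces the claimed bound $\tfrac{2B_t}{\epsilon}\log^{1.5} T\,\log(1/\delta)$; note the square-root factor is what turns the $\log T$ composition into the $\log^{1.5} T$ appearing in the statement.

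The only place the proof genuinely departs from the uniform-bound proof in the literature is the adaptive-sensitivity handling in the privacy argument: one must verify that the noise scale $2B_{t'}/\epsilon'$ used at release time $t'$ is still large enough to mask a change at an earlier timestep $s\le t'$, even though the mechanism does not know $s$ in advance. The non-decreasing assumption on $B_t$ is precisely what closes this gap, since it guarantees $B_s \le B_{t'}$ for every $s\le t'$. Without this monotonicity, one could construct adversarial streams where an early item falls into a p-sum formed later and becomes under-noised relative to its own sensitivity, breaking privacy. I expect this adaptive-sensitivity verification to be the main (if modest) obstacle; the rest is a transcription of the standard tree-mechanism proof.
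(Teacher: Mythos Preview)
Your proposal is correct and follows exactly the route the paper intends: the paper's own proof of this lemma consists of a single sentence deferring to the standard analyses of Chan et al.\ and Dwork et al., and your sketch is precisely that analysis with the one adaptive-sensitivity observation (that $B_s\le B_{t'}$ whenever a changed item at time $s$ falls into a p-sum released at time $t'$) made explicit. If anything, you have written out more than the paper does; there is nothing to add.
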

When $B_t=B$, Algorithm \ref{Algo-Tree} will be the same as the original one. Theorem~\ref{theorem-privacy-ucb} presents the privacy guarantee of overall algorithm (Algorithm~\ref{Algo-UCB} and Algorithm~\ref{Algo-Tree}).

\begin{algorithm}[!t]
    \caption{(Adaptive) Tree-based Mechanism}
    \label{Algo-Tree}
    \begin{algorithmic}[1]
        \Require time horizon $T$, privacy budget $\epsilon$, a stream $\sigma$.
        \Ensure A private version $\widehat{S}(t)$ for $S(t)=\sum\nolimits_{i=1}^{t}{\sigma(i)}$ at each $t\in[T]$
        \State Initialize each p-sum $\alpha_i$ and noisy p-sum $\widehat{\alpha}_i$ to $0$.
        \State $\epsilon^\prime\gets\epsilon/\log T$.
        \For {$t = 1,\ldots,T$}
            \State Express $t$ in binary form: $t=\sum_j {\rm Bin}_j(t)\cdot 2^j$.
            \State $i\gets\min\{j:{\rm Bin}_j(t)\neq 0\}$.
            \State $\alpha_i\gets\sum_{j<i}{\alpha_j}+\sigma(t)$.
            \For {$j= 0,\ldots,i-1$}
                \State $\alpha_j\gets0$, $\widehat{\alpha}_j\gets0$.
            \EndFor
            \State $\widehat{\alpha}_i\gets\alpha_i+{\rm Lap}(2B_t/\epsilon^\prime)$.
            \State \Return $\hat{S}(t)\gets\sum_{j:{\rm Bin}_j(t)=1}{\widehat{\alpha}_j}$.
        \EndFor
    \end{algorithmic}
\end{algorithm} 

\begin{theorem} \label{theorem-privacy-ucb}
   For any $\epsilon>0$, the overall algorithm (Algorithm~\ref{Algo-UCB} and Algorithm~\ref{Algo-Tree}) is $\epsilon$-differentially private.
\end{theorem}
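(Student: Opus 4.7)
The plan is to reduce the overall privacy guarantee to Lemma~\ref{lemma-tree} applied to each per-arm tree, combined with a coupling argument that handles the adaptive partitioning of rewards across trees. First I would observe that Algorithm~\ref{Algo-UCB} maintains $K$ independent instances of the adaptive Tree-based mechanism (Algorithm~\ref{Algo-Tree}), one per arm, each run with privacy budget $\epsilon$. For a fixed arm $a$, the items inserted into $Tree_a$ are the truncated values $\widetilde{x}_t$ at those times when $a_t=a$, and the truncation bounds $B_{1}\le B_{2}\le\cdots$ encountered in insertion order are non-decreasing, so the hypothesis of Lemma~\ref{lemma-tree} is met. Consequently, each $Tree_a$ viewed in isolation is $\epsilon$-DP with respect to any single change in its own input substream.

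Next I would fix two adjacent streams $\sigma$ and $\sigma^\prime$ differing only at time $t^*$, and let $a^*$ be the arm pulled at $t^*$, which is the same in both runs because it is a deterministic function of the tree outputs prior to $t^*$. The entire transcript of Algorithm~\ref{Algo-UCB}---noisy sums, arm selections, truncated insertions---is a post-processing of the per-tree noisy outputs, so it suffices to bound the density ratio of these outputs. I would couple the two runs by sharing all Laplace noises in trees other than $Tree_{a^*}$ and shifting the noises inside $Tree_{a^*}$ so that its output sequence coincides across the two runs. Under this coupling all arm selections $a_1,\ldots,a_T$ agree, the substreams into every tree other than $Tree_{a^*}$ are identical, and the only residual difference is the single truncated reward $\widetilde{x}_{t^*}$ into $Tree_{a^*}$, whose magnitude is at most $2B_{n_{a^*}}$ at the moment of insertion. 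The proof underlying Lemma~\ref{lemma-tree} then bounds the density ratio of the two noise configurations achieving this alignment by $e^{\epsilon}$, since the differing item traverses $O(\log T)$ p-sums each with Laplace scale $2B_{n_{a^*}}/\epsilon^\prime$ and $\epsilon^\prime=\epsilon/\log T$.

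The main obstacle is precisely the adaptivity of the reward partition: because which reward enters which tree is itself a function of the private outputs, the standard parallel composition lemma does not directly apply. The coupling step above is the workaround that localizes the privacy loss to the single tree actually receiving the differing reward, reducing the argument to one invocation of Lemma~\ref{lemma-tree}. Assembling the pieces yields $\mathbb{P}[\mathcal{M}(\sigma)\in\mathcal{O}]\le e^{\epsilon}\,\mathbb{P}[\mathcal{M}(\sigma^\prime)\in\mathcal{O}]$ for every measurable $\mathcal{O}$, establishing the claimed $\epsilon$-differential privacy of Algorithm~\ref{Algo-UCB}.
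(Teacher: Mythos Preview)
Your proposal is correct and follows the same high-level route as the paper: each $Tree_a$ is $\epsilon$-DP by Lemma~\ref{lemma-tree}, and the trees partition the reward stream, so one invokes parallel composition. The paper's own proof is a single sentence that does exactly this and nothing more.

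Where you differ is in rigor. You explicitly flag that the assignment of rewards to trees is \emph{adaptive}---it depends on the noisy tree outputs---so the textbook Parallel Composition Theorem (stated for a fixed partition) does not apply verbatim. You then supply the standard coupling fix: the arm $a_{t^*}$ is determined before the reward at $t^*$ is seen, and by aligning the Laplace noises inside $Tree_{a^*}$ one forces all subsequent arm choices to coincide, localizing the privacy loss to a single tree and a single item. The paper's proof silently relies on this same structure but does not spell it out. Your extra paragraph buys a genuinely self-contained argument; the paper's version buys brevity at the cost of leaving the adaptivity issue to the reader.
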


In fact, the $\widehat{S}_a(t)/n_a$ term in step 12, which is denoted by $\widehat{\mu}_{a}(n_a, t)$, could be seen as a robust and private estimator of the mean $\mu_a$ after total $n_a$ pulls of arm $a$ till time $t$. Our selection strategy in step 12 is based on the following estimation error between $\widehat{\mu}_{a}(n_a, t)$ and $\mu_a$, which is also a key lemma that will be used to bound the regret of Algorithm~\ref{Algo-UCB}.

\begin{lemma}\label{lemma-esterr}
    In Algorithm \ref{Algo-UCB}, for a fixed arm $a$ and $t$, we have the following estimation error with probability at least $1-t^{-4}$,
    \begin{equation}
        \widehat{\mu}_{a}(n_a, t)\le \mu_a+18u^{\frac{1}{1+v}}
        \left(\frac{\log(2t^4)\log^{1.5+\frac{1}{v}}T}
        {n_a\epsilon}
        \right)^{\frac{v}{1+v}}.
    \end{equation}

\end{lemma}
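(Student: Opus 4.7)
The plan is to decompose the estimation error into three natural pieces---the \emph{truncation bias}, the \emph{statistical fluctuation} of the empirical mean of the truncated rewards, and the \emph{tree-mechanism noise}---control each separately, and union-bound. Writing $\widetilde{x}_1,\ldots,\widetilde{x}_{n_a}$ for the truncated rewards of arm $a$ in pull order (so $\widetilde{x}_i=x_i\mathbbm{1}_{|x_i|\le B_i}$ with $B_i=(\epsilon u i/\log^{1.5}T)^{1/(1+v)}$) and $N:=S_a(t)-\sum_{i=1}^{n_a}\widetilde{x}_i$ for the accumulated tree-mechanism noise, one has
\begin{equation*}
\widehat{\mu}_a(n_a,t)-\mu_a \;=\; \underbrace{\tfrac{1}{n_a}\sum_{i=1}^{n_a}\bigl(\mathbb{E}[\widetilde{x}_i]-\mu_a\bigr)}_{(\mathrm{I})} \;+\; \underbrace{\tfrac{1}{n_a}\sum_{i=1}^{n_a}\bigl(\widetilde{x}_i-\mathbb{E}[\widetilde{x}_i]\bigr)}_{(\mathrm{II})} \;+\; \tfrac{N}{n_a}.
\end{equation*}

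For (I), the $(1+v)$-th moment assumption \eqref{eq:2} gives $|\mathbb{E}[\widetilde{x}_i]-\mu_a|=|\mathbb{E}[x_i\mathbbm{1}_{|x_i|>B_i}]|\le u/B_i^v$; substituting $B_i$ and summing via $\sum_{i=1}^{n_a}i^{-v/(1+v)}\le(1+v)n_a^{1/(1+v)}$ yields $|(\mathrm{I})|=O(u^{1/(1+v)}(\log^{1.5}T/(\epsilon n_a))^{v/(1+v)})$. For (II), the summands are independent and zero-mean, bounded in $[-2B_{n_a},2B_{n_a}]$ with second moment $\mathbb{E}[\widetilde{x}_i^2]\le u B_i^{1-v}$ (because $x^2\le|x|^{1+v}B_i^{1-v}$ whenever $|x|\le B_i$); Bernstein's inequality with failure probability $t^{-4}/3$ together with $\sum_{i=1}^{n_a}B_i^{1-v}=O(n_a B_{n_a}^{1-v})$ then gives $|(\mathrm{II})|=O(u^{1/(1+v)}(\log(2t^4)/n_a)^{v/(1+v)})$, recovering the non-private rate of \cite{bubeck2013bandits}. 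For the noise, Lemma~\ref{lemma-tree} applied with $\delta=t^{-4}/3$ guarantees $|N|\le\tfrac{2B_{n_a}}{\epsilon}\log^{1.5}T\cdot\log(3t^4)$ outside a probability-$t^{-4}/3$ event, and substituting $B_{n_a}$ turns $|N|/n_a$ into $O(u^{1/(1+v)}(\log T)^{1.5v/(1+v)}\log(3t^4)(\epsilon n_a)^{-v/(1+v)})$.

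A union bound over the three failure events costs $t^{-4}$; a term-by-term comparison then shows that each of the three bounds is dominated by the target $18u^{1/(1+v)}(\log(2t^4)\log^{1.5+1/v}T/(n_a\epsilon))^{v/(1+v)}$. Indeed, the extra $(\log T)^{1/(1+v)}$ factor in the target (coming from $\log^{1/v}T$) absorbs the $\log(2t^4)^{1-v/(1+v)}=\log(2t^4)^{1/(1+v)}\le O((\log T)^{1/(1+v)})$ surplus present in the noise bound (using $t\le T$ and $\log(2t^4)\le 5\log T$), while (I) and (II) fit inside with room to spare. The constant $18$ consolidates the constants from Bernstein's inequality, Lemma~\ref{lemma-tree}, and the union bound.

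The main obstacle will be the careful bookkeeping of log factors and harmonic sums: because the truncation threshold $B_i$ varies with the pull index $i$, both $\sum_i B_i^{-v}$ (bias) and $\sum_i B_i^{1-v}$ (variance) require the standard fractional-power partial-sum estimate $\sum_{i=1}^{n}i^{\alpha}=O(n^{\alpha+1})$ for $\alpha\in(-1,1)$, and the choice $B_n=(\epsilon u n/\log^{1.5}T)^{1/(1+v)}$ is calibrated precisely so that (I) and $|N|/n_a$ share the same order---this is the conceptual reason behind the unusual $\log^{1.5}T$ factor inside $B_n$. Once this balance is verified, collapsing the three contributions into the stated single expression becomes essentially algebra.
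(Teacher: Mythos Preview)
Your proposal is correct and follows essentially the same route as the paper: the same three-way decomposition into truncation bias, Bernstein fluctuation of the centered truncated rewards, and tree-mechanism noise, controlled respectively via the moment inequality $|\mathbb{E}[X\mathbbm{1}_{|X|>B}]|\le u/B^v$, Bernstein's inequality with variance proxy $uB^{1-v}$, and Lemma~\ref{lemma-tree}. The only cosmetic difference is that the paper treats the bias term as deterministic (so only two random events are union-bounded, with $\delta=1/(2t^4)$ each) whereas you budget $t^{-4}/3$ across three events; since (I) requires no probabilistic event this is harmless slack.
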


We have the following instance-dependent regret bound by the proof of Theorem 1 in \citep{auer2002finite}.
\begin{theorem}\label{thm:2}
 Under our assumptions, for any $0<\epsilon\leq 1$
    the instance-dependent expected regret of Algorithm~\ref{Algo-UCB} satisfies
    \begin{equation}
        \mathcal{R}_T\leq O \left(\sum_{a:\Delta_a>0}{\Big( \frac{\log^{2.5+\frac{1}{v}} T}{\epsilon}\Big(\frac{u}{\Delta_a}\Big)^{\frac{1}{v}}+\Delta_a\Big)}\right).
    \end{equation}
\end{theorem}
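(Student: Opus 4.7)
The plan is to follow the template of the classical UCB regret proof of Auer, Cesa-Bianchi and Fischer (2002), but with the private/robust confidence radius $C_a(n_a,t) := 18u^{1/(1+v)}\bigl(\log(2t^4)\log^{1.5+1/v}T/(n_a\epsilon)\bigr)^{v/(1+v)}$ taking the place of the usual $\sqrt{2\log t/n_a}$ width. Writing $N_a(T)$ for the number of pulls of arm $a$ by time $T$, the regret decomposes as $\mathcal R_T=\sum_{a:\Delta_a>0}\Delta_a\,\mathbb E[N_a(T)]$, so the whole proof reduces to controlling $\mathbb E[N_a(T)]$ for every suboptimal $a$.

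The main steps are as follows. First, for any threshold $\ell\ge 1$, upper bound
\[
\mathbb E[N_a(T)] \le \ell+\sum_{t=K+1}^{T}\mathbb P\bigl[a_t=a,\ N_a(t-1)\ge \ell\bigr].
\]
Next, note that if $a_t=a$ then by the selection rule in step 12 we have $\widehat\mu_a(n_a,t)+C_a(n_a,t)\ge \widehat\mu_{a^\star}(n_{a^\star},t)+C_{a^\star}(n_{a^\star},t)$, which forces at least one of the three events
\[
(\text{i})\ \widehat\mu_{a^\star}(n_{a^\star},t)+C_{a^\star}(n_{a^\star},t)<\mu^\star,\quad
(\text{ii})\ \widehat\mu_a(n_a,t)>\mu_a+C_a(n_a,t),\quad
(\text{iii})\ \mu^\star<\mu_a+2C_a(n_a,t).
\]
Lemma 2 (applied to $a$ and, by a symmetric argument using the same truncation estimator, to the centered/negated stream of $a^\star$) shows that (i) and (ii) each occur with probability at most $t^{-4}$, uniformly over the possible values of $n_a$ and $n_{a^\star}$ after a union bound over at most $t$ values each (absorbed into the $\log(2t^4)$ factor). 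Then I choose $\ell=\ell^\star$ as the smallest integer making $2C_a(\ell^\star,T)\le \Delta_a$, which yields
\[
\ell^\star=O\!\left(\frac{\log(2T^4)\log^{1.5+\frac{1}{v}}T}{\epsilon}\,\Big(\frac{u}{\Delta_a}\Big)^{\!\frac{1}{v}}\right)
=O\!\left(\frac{\log^{2.5+\frac{1}{v}}T}{\epsilon}\,\Big(\frac{u}{\Delta_a}\Big)^{\!\frac{1}{v}}\cdot\frac{1}{\Delta_a}\right),
\]
so that event (iii) is impossible once $n_a\ge\ell^\star$. Summing the probabilities of (i) and (ii) over $t\le T$ contributes a $O(1)$ term, hence $\mathbb E[N_a(T)]\le \ell^\star+O(1)$, and multiplying by $\Delta_a$ and summing over suboptimal arms gives the stated bound (the $O(1)$ term produces the additive $\max_a\Delta_a$ contribution hidden in $\sum_a\Delta_a$).

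The routine parts will be the algebraic manipulation of the $C_a$ expression and the geometric-series style tail sum $\sum_t t^{-4}$; the obstacle worth being careful about is the two-sided use of Lemma 2. Because the estimator $\widehat\mu_a(n_a,t)$ combines a deterministic truncation bias (bounded by $u B_n^{-v}$ since $\mathbb E|X|^{1+v}\le u$) with noise from the adaptive tree mechanism (controlled by Lemma 1), the same derivation that gives the upper tail in Lemma 2 gives a symmetric lower tail with the same rate, provided the choice $B_n=(\epsilon u n/\log^{1.5}T)^{1/(1+v)}$ balances the bias and the Laplace-noise-induced variance. Verifying that the constant $18$ is large enough to absorb both the truncation bias and the $\log^{1.5}T\log(1/\delta)$ factor from Lemma 1 with $\delta=t^{-4}$ is the one spot where the private case differs meaningfully from the non-private robust-UCB analysis of Bubeck, Cesa-Bianchi and Lugosi; once that constant is confirmed, the rest of the argument is a direct transcription of the standard UCB proof.
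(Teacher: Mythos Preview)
Your proposal is correct and follows essentially the same route as the paper: both use the Auer--Cesa-Bianchi--Fischer UCB template, decompose $\mathcal{R}_T=\sum_a\Delta_a\,\mathbb{E}[N_a(T)]$, identify the same three events, invoke Lemma~\ref{lemma-esterr} (whose proof in the paper is in fact two-sided, bounding $|\widehat\mu_a-\mu_a|$) to control events (i) and (ii) with probability $O(t^{-4})$, and choose $\ell$ so that event (iii) becomes impossible. The only cosmetic difference is that the paper handles the random counts $n_a,n_{a^\star}$ by an explicit double sum $\sum_{s=1}^t\sum_{s_a=\ell}^t$ rather than the union-bound phrasing you use, but these are equivalent and both lead to the convergent tail $\sum_t t^{-2}$.
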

Compared with the non-private version of robust UCB~\citep{bubeck2013bandits}, the main difference is the threshold value $B_{n_{a_t}}=(\frac{\epsilon un_{a_t}}{\log^{1.5}T})^{\frac{1}{1+v}}$, where \citet{bubeck2013bandits} set it as $(\frac{ un_{a_t}}{\log (t^2)})^{\frac{1}{1+v}}$. Informally speaking, this is caused by the fact that, due to the privacy, the number of efficient samples now becomes $n\epsilon$. Specifically, due to privacy constraint, the estimation error could be decomposed into three parts: the bias, variance due to the truncation, and the noise we added. We can show that setting $B_{n_{a_t}}$ as the threshold could provide an improved bound of error. Compared with the $O(\sum_{a:\Delta_a>0}{[\log T(\frac{u}{\Delta_a})^{\frac{1}{v}}+\Delta_a]})$ optimal rate of the regret in the non-private version~\citep{bubeck2013bandits}, we can see that there is an additional factor of $\frac{\log^{1.5+\frac{1}{v}} T}{\epsilon}$ in the private case. In contrast, in the problem where the reward distributions are bounded, it has been shown by \citet{shariff2018differentially} that, there is only an additional factor of $\frac{1}{\epsilon}$ compared with the non-private case.
Thus, a natural question arises here is \emph{whether it is possible to further improve the regret}.  We answer this question affirmatively by designing an optimal algorithm, see Algorithm~\ref{Algo-DPSE} for details.

Our algorithm is based on the Successive Elimination (SE) algorithm proposed by \citet{even2006action}, whose DP variant has been studied by~\citet{sajed2019optimal}. Briefly speaking, we first set all the arms as viable options (step 1), then in each epoch we pull all the viable arms to get the same (private) confidence interval around their empirical rewards (step 4-18). Finally we eliminate the arms with lower empirical rewards from the viable options if they are sub-optimal compared with other viable arms (step 21-25).


\begin{algorithm}[!t]
    \caption{DP Robust Successive Elimination}
    \label{Algo-DPSE}
    \begin{algorithmic}[1]
        \Require confidence $\beta$, parameters $\epsilon, v,u$.
        \State $\mathcal{S}\gets\{1,\cdots,K\}$
        \State Initialize: $t\gets0$, $\tau\gets0$.
        \Repeat
            \State $\tau\gets \tau+1$.
            \State Set $\bar{\mu}_a=0$ for all $a\in\mathcal{S}$.
            \State $r\gets 0$, $D_\tau\gets2^{-\tau}$.
            \State $ R_\tau\gets \lc u^{\frac{1}{v}}
            (
            \frac
            {
                24^{(1+v)/v}
                \log
                    (
                        4|\mathcal{S} |\tau^2/\beta
                    )
            }
            {
                \epsilon D_\tau^{(1+v)/v}
            }
            )+1 \rc$.
            \State $B_\tau\gets
                    (
                    \frac
                    {
                        uR_\tau\epsilon
                    }
                    {
                        \log( 4|\mathcal{S}|\tau^2/\beta)
                    }
                    )^{1/(1+v)}$.
            \While {$r<R_\tau$}
                \State $r\gets r+1$.
                \For {$a\in\mathcal{S}$}
                    \State $t\gets t+1$.
                    \State Sample a reward $x_{a,r}$.
                \State $\widetilde{x}_{a,r}\gets x_{a,r}\cdot\mathbb{I}_{\{|x_{a,r}|\le B_\tau\}}$.
                \EndFor
            \EndWhile
            \State For each $a\in \mathcal{S}$, compute $\bar{\mu}_a\gets(\sum\limits_{l=1}^{R_\tau}\widetilde{x}_{a,l})/R_\tau$.
            \State Set $\widetilde{\mu}_a\gets\bar{\mu}_a+{\rm Lap}(\frac{2B_\tau}{R_\tau\epsilon})$ for all $a\in\mathcal{S}$.
            \State $\widetilde{\mu}_{\rm max}\gets\max_{a\in\mathcal{S}}\widetilde{\mu}_a$.
            \State $err_\tau\gets u^{1/(1+v)}
            (\frac
            {\log(4|\mathcal{S}|\tau^2/\beta)}
            {R_\tau\epsilon})^{v/(1+v)}$.
            \For {all viable arm $a$}
                \If {$\widetilde{\mu}_{\rm max}-\widetilde{\mu}_a>12err_\tau$} \State Remove arm $a$ from $\mathcal{S}$. \EndIf
            \EndFor
        \Until {$|\mathcal{S}|=1$}
        \State Pull the arm in $\mathcal{S}$ in all remaining $T-t$ rounds.
    \end{algorithmic}
\end{algorithm} 

\begin{theorem}\label{thm:3}
 For any $\epsilon>0$, Algorithm~\ref{Algo-DPSE} is $\epsilon$-differentially private.

\end{theorem}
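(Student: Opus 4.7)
The plan is to reduce $\epsilon$-DP of the whole algorithm to $\epsilon$-DP of a single Laplace release together with post-processing. Fix two event-level adjacent reward streams $\sigma,\sigma'$ that differ only at some timestep $t^*$, and couple the Laplace draws across the two runs so that the $k$-th noise sample is identical in both executions. Every control decision of Algorithm~\ref{Algo-DPSE}---the current epoch index $\tau$, the parameters $R_\tau,B_\tau$, the viable set $\mathcal{S}$, and the arm chosen at each round---depends only on rewards observed strictly before the current time and on noise drawn strictly before the current time, so the two executions coincide up to and including time $t^*$. In particular, the epoch $\tau^*$ containing $t^*$, the arm $a^*$ pulled at $t^*$, and the parameters $B_{\tau^*},R_{\tau^*}$ all agree across the two runs.

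Each timestep is routed into the sample mean of exactly one arm in exactly one epoch, so among all the statistics $\bar\mu_a$ computed in step~15, only $\bar\mu_{a^*}$ for epoch $\tau^*$ can differ under $\sigma$ versus $\sigma'$. The truncation in step~13 ensures every summand lies in $[-B_{\tau^*},B_{\tau^*}]$, and $B_{\tau^*}$ is a deterministic function of $\tau^*$ and $|\mathcal{S}|$ fixed at the very start of epoch $\tau^*$ (hence independent of the reward at $t^*$). Therefore changing the reward at time $t^*$ perturbs $\bar\mu_{a^*}$ by at most $2B_{\tau^*}/R_{\tau^*}$, giving an $\ell_1$-sensitivity bound of $2B_{\tau^*}/R_{\tau^*}$ for the map $\sigma\mapsto\bar\mu_{a^*}$. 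The release $\tilde\mu_{a^*}=\bar\mu_{a^*}+\mathrm{Lap}\bigl(2B_{\tau^*}/(R_{\tau^*}\epsilon)\bigr)$ in step~17 is then $\epsilon$-DP by Definition~\ref{def:3}.

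Every other $\bar\mu_a$ (any arm in any epoch other than $(a^*,\tau^*)$) is a function solely of stream entries at times $\ne t^*$, which agree under the coupling, so its value is identical in the two runs and its perturbation is the coupled noise draw. Consequently, conditioned on the value of the single affected release $\tilde\mu_{a^*}$, the entire remaining output of Algorithm~\ref{Algo-DPSE}---subsequent eliminations, all later $\mathcal{S},R_\tau,B_\tau$, and every arm pull---is a deterministic function of data that agree in the two runs. The full output is thus a post-processing of a single $\epsilon$-DP Laplace release, which yields $\epsilon$-DP for Algorithm~\ref{Algo-DPSE}. The main subtlety to articulate cleanly is that the partition of the stream into $(\text{arm},\text{epoch})$ buckets is itself data-dependent through past noisy estimates; the coupling argument sidesteps this, since the two executions cannot diverge \emph{before} the one affected Laplace release has been produced, and any divergence afterwards is post-processing.
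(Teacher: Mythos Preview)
Your proof is correct and rests on the same idea as the paper's: the single statistic that can change between adjacent streams is one truncated sample mean $\bar\mu_{a^*}$ in one epoch $\tau^*$, with $\ell_1$-sensitivity $2B_{\tau^*}/R_{\tau^*}$, so the Laplace release $\tilde\mu_{a^*}$ is $\epsilon$-DP and everything downstream is post-processing. The paper's own proof is a two-sentence version of exactly this sensitivity calculation; your write-up is more careful in spelling out the coupling and in explicitly addressing the adaptivity issue (that the epoch partition and parameters are themselves functions of past noisy releases), which the paper leaves implicit.
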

\begin{remark}
As mentioned earlier, \citet{sajed2019optimal} also studied a DP variant of the SE algorithm. However, there are several differences between their result and ours. Their algorithm is only for bounded reward distributions, while here we focus on the heavy-tailed ones. Due to the irregularity of rewards, we need to preprocess and shrink these rewards. Moreover, the forms of parameters are also more complicated than the bounded distributions case. Finally, in the later section, we also extend Algorithm~\ref{Algo-DPSE} to the local model and show its optimality.
\end{remark}
The following lemma claims that the number of rounds to pull each arm $a$ is at most $\tilde{O}(\frac{1}{\epsilon (\Delta_a)^\frac{1+v}{v}})$.
\begin{lemma}\label{lemma-pulltime}
    For any instance of the $K$-MAB problem, denote by $a^*$ its optimal arm. Fix the time horizon $T$ and confidence level $\beta \in (0,1)$. Then, with probability at least $1-\beta$, in Algorithm~\ref{Algo-DPSE}, the total number of rounds to pull each sub-optimal arm $a\ne a^*$ is at most
    \begin{equation}
        \min \left\{T, O\left( \frac{u^{\frac{1}{1+v}}}{\epsilon(\Delta_a)^{\frac{1+v}{v}}}\left(\log(\frac{K}{\beta})+\log\log(\frac{1}{\Delta_a})\right)\right)\right\}.
    \end{equation}

\end{lemma}

\begin{theorem}[DP Upper Bound]
\label{thm:4}
If we set $\beta=\frac{1}{T}$ in Algorithm~\ref{Algo-DPSE}, then for sufficiently large $T$ and any $\epsilon \in (0,1]$,  the instance-dependent expected regret of Algorithm~\ref{Algo-DPSE} satisfies
    \begin{equation}
          \mathcal{R}_T\leq  {O} \left(\frac{u^{\frac{1}{1+v}}\log T}{\epsilon}\sum_{\Delta_a>0}{\Big(\frac{1}{\Delta_a}\Big)^{\frac{1}{v}}}+\max_a\Delta_a\right).
    \end{equation}
Moreover, the instance-independent expected regret of Algorithm~\ref{Algo-DPSE} satisfies
  \begin{equation}
     \mathcal{R}_T \leq  O\left(u^{\frac{v}{(1+v)^2}}\left(\frac{K\log T}{\epsilon}\right)^{\frac{v}{1+v}}T^{\frac{1}{1+v}}\right),
  \end{equation}
  where the ${O}(\cdot)$-notation omits $\log\log \frac{1}{\Delta_a}$ terms.

\end{theorem}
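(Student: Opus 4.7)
The plan is to bound the expected regret by combining the standard decomposition $\mathcal{R}_T = \sum_{a:\Delta_a>0}\Delta_a\,\mathbb{E}[T_a]$ with the high-probability pull-count bound from Lemma~\ref{lemma-pulltime}. I would set $\beta = 1/T$ so that Lemma~\ref{lemma-pulltime} holds with probability at least $1-1/T$, and split the expectation into a ``good event'' $\mathcal{E}$ (on which every sub-optimal arm has been pulled at most $\tilde{O}\bigl(\frac{u^{1/(1+v)}\log T}{\epsilon\Delta_a^{(1+v)/v}}\bigr)$ times) and its complement $\mathcal{E}^c$. On $\mathcal{E}^c$ I would trivially upper bound the per-round regret by $\max_a \Delta_a$, so that the contribution is at most $\Pr[\mathcal{E}^c]\cdot T\cdot \max_a \Delta_a = \max_a \Delta_a$, which is exactly the additive term in the theorem.

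For the instance-dependent bound, I would simply substitute the bound on $T_a$ from Lemma~\ref{lemma-pulltime} into the decomposition on $\mathcal{E}$:
\begin{equation*}
\sum_{a:\Delta_a>0}\Delta_a \cdot O\!\left(\frac{u^{1/(1+v)}\log T}{\epsilon\,\Delta_a^{(1+v)/v}}\right)
= O\!\left(\frac{u^{1/(1+v)}\log T}{\epsilon}\sum_{\Delta_a>0}\left(\frac{1}{\Delta_a}\right)^{1/v}\right),
\end{equation*}
where the $\log(K/\beta) = \log(KT)$ factor is absorbed into $\log T$ for $K \le T$, and the $\log\log(1/\Delta_a)$ factor is suppressed into the $O(\cdot)$ as stated. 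Adding in the $\mathcal{E}^c$ contribution yields the claimed instance-dependent bound.

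For the instance-independent bound, I would use the standard threshold trick. Choose a cutoff $\Delta^\star>0$ to be optimized, and split the sum:
\begin{equation*}
\mathcal{R}_T \;\le\; \sum_{a:\Delta_a\le\Delta^\star}\Delta_a\,\mathbb{E}[T_a] \;+\; \sum_{a:\Delta_a>\Delta^\star}\Delta_a\,\mathbb{E}[T_a].
\end{equation*}
The first sum is at most $T\Delta^\star$ since the total number of pulls is $T$. For the second sum, using Lemma~\ref{lemma-pulltime} (plus the $\max_a\Delta_a$ failure contribution absorbed into lower-order terms), each term is bounded by $O\bigl(\frac{u^{1/(1+v)}\log T}{\epsilon\,\Delta_a^{1/v}}\bigr)\le O\bigl(\frac{u^{1/(1+v)}\log T}{\epsilon\,(\Delta^\star)^{1/v}}\bigr)$, and summing over at most $K$ arms gives $O\bigl(\frac{Ku^{1/(1+v)}\log T}{\epsilon\,(\Delta^\star)^{1/v}}\bigr)$. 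Setting $\Delta^\star = u^{v/(1+v)^2}\bigl(\frac{K\log T}{\epsilon T}\bigr)^{v/(1+v)}$ to balance the two terms yields $T\Delta^\star = O\bigl(u^{v/(1+v)^2}(K\log T/\epsilon)^{v/(1+v)} T^{1/(1+v)}\bigr)$, matching the theorem.

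The step I expect to require the most care is the bookkeeping of the low-probability failure event and the suppressed $\log\log(1/\Delta_a)$ factor: one has to verify that choosing $\beta=1/T$ is admissible (in particular that for sufficiently large $T$ the $\log(K/\beta) = \log(KT)$ term behaves as $O(\log T)$) and that the contribution from the failure event can indeed be charged to the additive $\max_a \Delta_a$ term rather than blowing up the main bound. Once that is secured, both bounds follow from essentially routine summation and threshold optimization.
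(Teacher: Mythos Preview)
Your proposal is correct and mirrors the paper's proof: the instance-dependent bound is obtained exactly as you describe, by setting $\beta=1/T$, applying Lemma~\ref{lemma-pulltime} on the good event, and charging the failure event of probability at most $1/T$ to the additive $\max_a\Delta_a$ term. For the instance-independent bound the paper uses the same threshold argument (phrased, following \cite{sajed2019optimal}, by defining $\Delta^*$ as the gap equating $\Delta^*T/K$ with $C\,u^{1/(1+v)}\log T/(\epsilon(\Delta^*)^{1/v})$ and then partitioning the arms), which is mathematically identical to your optimization of $\Delta^\star$.
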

From Theorem \ref{thm:4} we can see that compared with the regret bound $O(\frac{\log^{2.5} T}{\epsilon}\sum_{\Delta_a>0}{(\frac{1}{\Delta_a})^{\frac{1}{v}}})$ in Theorem \ref{thm:2}, we achieve an improved bound of $O(\frac{\log T}{\epsilon}\sum_{\Delta_a>0}{(\frac{1}{\Delta_a})^{\frac{1}{v}}})$. { We think the main reason for the improvement is that the UCB-based method needs to make a reward-dependent choice in each round whereas the SE-based method only makes the reward-dependent choices in $K-1$ special rounds when it performs arm elimination.} Moreover, we also have an instance-independent regret bound. While in the bounded rewards case it has been shown that a DP variant of the SE algorithm is  optimal \citep{sajed2019optimal}, it is still unknown whether Algorithm \ref{Algo-DPSE} is optimal in the heavy-tailed case. In the following we study the lower bound of regret for  heavy-tailed MAB problem in the $\epsilon$-DP model.
We start from the two-armed instance-dependent regret lower bound which is specified in Theorem \ref{thm:6} to show that the dependency on the term of $\frac{1}{\epsilon}(\frac{1}{\Delta})^\frac{1}{v}$ is unavoidable in general. Due to space limitation, we put Theorem \ref{thm:6} and its proof in Appendix. Then we extend to the $K$-arm case and show the instance-dependent regret bound presented in Theorem \ref{thm:4} is {\bf optimal}.
The lower bound of the instance-independent regret is still unclear, and we leave it as an open problem.

\begin{theorem}[DP Instance-dependent Lower Bound]
\label{thm:7}
There exists a heavy-tailed $K$-armed bandit instance with $u\le1$ in (\ref{eq:2}), $\mu_a\le\frac{1}{6}$ and $\Delta_a\in(0,\frac{1}{12})$, such that for any $\epsilon$-DP ($0<\epsilon\le 1$) algorithm $\mathcal{A}$ whose expected regret is at most $T^{\frac{3}{4}}$, we have
    \begin{equation}
        \mathcal{R}_T\ge\Omega \left(\frac{\log T}{\epsilon}\sum_{\Delta_a>0}{\big(\frac{1}{\Delta_a}\big)^{\frac{1}{v}}}\right).
    \end{equation}
\end{theorem}
Below we will sketch the proof of Theorem \ref{thm:7}. Notably, previous hard instances in the bounded rewards case cannot provide tight lower bounds in our setting. Moreover, the hard instance in the non-private heavy-tailed MAB case \citep{bubeck2013bandits} is only for two-armed setting and cannot be generalized to $K$ arms. Therefore, it is necessary to construct new hard instance to prove Theorem \ref{thm:7}. Specifically, we design the following heavy-tailed MAB problem as a hard instance: the instance $\bar{P}$ where the distribution of each arm $a\in[K]$ is $\nu_a=\Big(1-\frac{s_a^{1+v}}{2}\Big)\delta_0+\frac{s_a^{1+v}}{2}\delta_{1/s_a},$
    with some $\frac{1}{2}\ge\mu_1\ge\cdots\ge\mu_K$ and $s_a=(2\mu_a)^{\frac{1}{v}}$. That is when $X\sim \nu_a$, $X=0$ with probability  $(1-\frac{s_a^{1+v}}{2})$, and $X=\frac{1}{s_a}$ with probability $\frac{s_a^{1+v}}{2}$. It is easy to verify for each $a\in[K]$ that $\mathbb{E}[\nu_a]=\mu_a$ and the $(1+v)$-th moment is bounded by 1.
    Now, we fix an arm $a\ne1$ and denote $\mathcal{E}_a$ as the event that the arm $a$ is pulled at most $t_a\triangleq\frac{\log T}{100\epsilon 4^{1/v}}(\frac{1}{\Delta_a})^\frac{1+v}{v}$ times. We will mainly show  that $\mathbb{P}_{\mathcal{A},\bar{P}}(\mathcal{E}_a)\le\frac{1}{2K}$. To prove this, we consider another instance $\bar{Q}_a$ where the distribution of any arm $a^\prime\ne a$ remains unchanged, and the distribution of arm $a$ is $\nu_a^\prime=[ 1-(\frac{s_a^{1+v}}{2}+2\Delta_a\gamma_a)]\delta_0+(\frac{s_a^{1+v}}{2} )\delta_{1/s_a}+(2\Delta_a\gamma_a)\delta_{1/\gamma_a},$
    where $\gamma_a=(4\Delta_a)^{\frac{1}{v}}$. Note that when setting $\mu_a^{1+v}\le\frac{1}{6}$ and $\Delta_a^{1+v}\le\frac{1}{12}$, we have $\frac{s_a^{1+v}}{2}+2\Delta_a\gamma_a=2^{\frac{1}{v}}\mu_a^{\frac{1+v}{v}}+2\cdot4^{\frac{1}{v}}\cdot\Delta_a^{\frac{1+v}{v}}\le(\frac{1}{3})^{\frac{1}{v}}+2\cdot(\frac{1}{3})^{\frac{1}{v}}\le1,$
    thus the postulated $\nu_a^\prime$ is reasonable. It is also easy to verify that $\mathbb{E}[\nu_a^\prime]=\mu_a+2\Delta_a=\mu_1+\Delta_a \leq 1$ and the $(1+v)$-th moment  of $v_a^\prime$ is bounded by 1. With the coupling lemma in \citet{KarwaV_ITCS18}, we can prove that $\mathbb{P}_{\mathcal{A},\bar{P}}(E_a)\le\frac{1}{2K}$. Thus, by taking the union bound we can get with probability at least $\frac{1}{2}$, the regret of $\bar{P}$ for any private algorithm $\mathcal{A}$ is $\Omega(\frac{\log T}{\epsilon}\sum_{\Delta_a>0}{(\frac{1}{\Delta_a})^{\frac{1}{v}}})$.
\begin{remark}\label{remark2}
In the MAB with bounded rewards case, it has been shown that the optimal rate of the expected rate is $O(\frac{K\log T}{\epsilon}+\sum_{\Delta_a>0} \frac{\log T}{\Delta_a})$ \citep{sajed2019optimal}. Compared with the optimal rate $O(\frac{\log T}{\epsilon}\sum_{\Delta_a>0}{(\frac{1}{\Delta_a})^{\frac{1}{v}}})$ in the heavy-tailed case, we can see there is a huge difference. First, the dependency on $\frac{1}{\Delta_a}$ now becomes to $(\frac{1}{\Delta_a})^{\frac{1}{v}}$. Secondly, the price of privacy in the bounded rewards case is an additional term of $O(\frac{K \log T}{\epsilon})$ compared with the non-private rate, while in the heavy-tailed case, there is an additional factor of $\frac{1}{\epsilon}$ compared with the non-private one.
\end{remark}

\section{LDP HEAVY-TAILED MAB}
\label{sec:LDP}
In this section, we will investigate upper and lower bounds for heavy-tailed MAB in the \emph{local} DP model. We start from the upper bounds. To design an $\epsilon$-LDP algorithm, most of the previous works on MAB with bounded rewards modifies the UCB algorithm and uses the Laplacian mechanism to guarantee the LDP property, such as \citep{chen2020locally,zhou2020local}. For sub-Gaussian rewards, \citet{ren2020multi} first map the unbounded rewards with a Sigmoid function and then use the Laplacian mechanism and UCB algorithm. Moreover, it has been shown that this type of method is near-optimal \citep{chen2020locally,ren2020multi}. Thus, for MAB with heavy-tailed rewards, a straightforward way is to modify the UCB-based method. Specifically, each reward will be shrunken to a certain range and then added Laplacian noise, subsequently the algorithm uses the confidence bound on these perturbed rewards to pull an arm. However, such an approach may cause enormous error. The reason is that, similar to Algorithm \ref{Algo-UCB}, here the threshold of each reward depends on $n_a$. That is, the Laplacian noise we added for each reward will be proportional to $n_a$. As a result, unlike the Tree-based mechanism in the central model, this LDP version of the UCB algorithm will introduce a huge amount of error to estimate the mean.

To achieve a better utility, we propose an $\epsilon$-LDP version of the SE algorithm, see Algorithm \ref{Algo-LDPSE} for details. The basic idea is similar to Algorithm \ref{Algo-DPSE}, where the algorithm now maintains (private) confidence interval for each arm via the perturbed rewards instead of the noisy average. However, compared with the above LDP version of the UCB algorithm, we can see that here the Laplacian noise added to each reward is \emph{independent} on the number of rounds  $n_a$, which could be much smaller than the noise added in the LDP version of UCB method when $T$ is sufficiently large. The following two theorems provide the privacy and utility guarantees for Algorithm~\ref{Algo-LDPSE}, respectively.


\begin{theorem}\label{DPofLDP}
For any $\epsilon>0$, Algorithm \ref{Algo-LDPSE} is $\epsilon$-local differentially private.
\end{theorem}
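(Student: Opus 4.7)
The plan is to verify the $\varepsilon$-LDP property directly from the definition by isolating the per-user release and invoking the Laplacian mechanism (Definition \ref{def:3}). In the LDP setting each pulled reward belongs to a distinct user, so I only need to exhibit an $\varepsilon$-LDP channel for a single raw reward; privacy of the composite algorithm then follows automatically since no single user's datum ever enters more than one release.

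First I would identify the per-user mechanism. At epoch $\tau$ and intra-epoch round $r$, when arm $a$ is sampled, the user holding $x_{a,r}$ first truncates it to $\widetilde{x}_{a,r}=x_{a,r}\cdot\mathbbm{1}_{\{|x_{a,r}|\le B_\tau\}}$ and then releases $\widetilde{x}_{a,r}+\mathrm{Lap}(2B_\tau/\varepsilon)$, matching the analogue in Algorithm \ref{Algo-DPSE}. Since $|\widetilde{x}_{a,r}|\le B_\tau$ deterministically, for any two possible raw rewards $x,x'$ we have $|\widetilde{x}-\widetilde{x'}|\le 2B_\tau$, so the $\ell_1$-sensitivity of the truncation map is at most $2B_\tau$. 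A direct application of the Laplacian mechanism with noise scale $2B_\tau/\varepsilon$ therefore gives that the single released value is $\varepsilon$-LDP with respect to the raw reward.

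Second, I would check that the clipping threshold $B_\tau$ is actually public information from the user's viewpoint, not a function of any raw reward. Inspecting its definition, $B_\tau$ depends only on $\tau$, $\varepsilon$, $\beta$, $u$, $v$, and on the current active set $\mathcal{S}$; in turn $\mathcal{S}$ is updated by the learner solely from past \emph{noisy} releases $\widetilde{\mu}_a$, which are themselves post-processings of $\varepsilon$-LDP outputs. Hence $B_\tau$ can be treated as a deterministic function of public randomness and already-privatized data, and revealing it (or using it as the noise scale) does not leak additional information about the current user.

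Finally, I would conclude by noting that each raw reward $x_{a,r}$ is generated and privatized by a distinct user, so there is no sequential composition penalty; the learner's subsequent computations ($\bar{\mu}_a$, $\widetilde{\mu}_a$, elimination) are pure post-processing of the $\varepsilon$-LDP outputs and preserve the guarantee. The main obstacle, and the only subtle point, is the second bullet above: one must verify carefully that every quantity used on the user's side (in particular $B_\tau$ and the indicator of membership in $\mathcal{S}$) depends only on publicly available information and on previously privatized outputs. Once this bookkeeping is clean, the LDP guarantee reduces to a one-line Laplacian-mechanism argument.
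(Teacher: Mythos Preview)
Your proposal is correct and follows essentially the same approach as the paper's proof, which is a one-line application of the Laplacian mechanism: since $|\widetilde{x}_{a,r}|\le B_\tau$, adding $\mathrm{Lap}(2B_\tau/\varepsilon)$ yields $\varepsilon$-LDP for each release. Your additional care in verifying that $B_\tau$ and $\mathcal{S}$ depend only on previously privatized outputs, and that each user contributes exactly one reward, makes the argument more complete than the paper's terse version, but the core idea is identical.
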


\begin{theorem}[LDP Upper Bound]
\label{UpperBoundLDP}
Set $\beta=\frac{1}{T}$ in Algorithm~\ref{Algo-LDPSE}. For any $\epsilon \in (0,1]$ and sufficiently large $T$, the instance-dependent expected regret of Algorithm~\ref{Algo-LDPSE} satisfies
    \begin{equation}
          \mathcal{R}_T \leq {O} \left(\frac{ u^{\frac{2}{v}}\log T}{\epsilon^2}\sum_{\Delta_a>0}{\Big(\frac{1}{\Delta_a}\Big)^{\frac{1}{v}}}+\max_a\Delta_a\right).
    \end{equation}
Moreover, the instance-independent expected regret of Algorithm~\ref{Algo-LDPSE} satisfies
  \begin{equation}
    \label{eq:instance-independent-local-upper}
     \mathcal{R}_T \leq  O\left(u^{\frac{2}{1+v}}\left(\frac{K\log T}{\epsilon^2}\right)^{\frac{v}{1+v}}T^{\frac{1}{1+v}}\right),
  \end{equation}
where {the ${O}(\cdot)$-notations omit $\log \log \frac{1}{\Delta_a}$ terms.}

\end{theorem}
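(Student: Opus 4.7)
The plan is to mirror the proof of Theorem \ref{thm:4} for the central DP-SE algorithm, replacing the single tree-mechanism Laplace analysis with a per-sample one so as to pick up the extra $1/\epsilon$ factor demanded by local privacy. The first step is a per-epoch concentration lemma for the private empirical mean $\widetilde{\mu}_a$. Since Algorithm \ref{Algo-LDPSE} first truncates each sampled reward to $[-B_\tau,B_\tau]$ and then adds fresh Laplace noise $Z_{a,r}\sim\mathrm{Lap}(2B_\tau/\epsilon)$, the deviation $\widetilde{\mu}_a-\mu_a$ decomposes into a clipping bias, a sum of i.i.d.\ bounded zero-mean fluctuations of the truncated reward, and an average of $R_\tau$ i.i.d.\ Laplace noises. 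The bias is bounded by $u/B_\tau^v$ via the $(1{+}v)$-th moment assumption (\ref{eq:2}); the truncated fluctuations by Bernstein's inequality using $\mathbb{E}[\widetilde{X}^2]\le uB_\tau^{1-v}$; and the Laplace sum by a sub-exponential Bernstein tail, which gives an error of order $B_\tau\sqrt{\log(1/\delta)}/(\epsilon\sqrt{R_\tau})$ in the Gaussian regime plus a lower-order correction in the exponential regime. Plugging in the algorithm's prescribed $B_\tau$ balances the three terms and yields $|\widetilde{\mu}_a-\mu_a|\le err_\tau$ with probability at least $1-\beta/(4|\mathcal{S}|\tau^2)$ for every viable arm.

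Next, a further union bound over all epochs $\tau\ge 1$ produces a clean event $\mathcal{E}$ of probability at least $1-\beta$ on which the above estimate holds throughout. On $\mathcal{E}$, the optimal arm $a^*$ is never eliminated, because any gap $\widetilde{\mu}_{a^*}-\widetilde{\mu}_a$ differs from its true counterpart by at most $2\,err_\tau$, too small to trigger the removal rule. Conversely, every sub-optimal arm $a$ is dropped by the first epoch $\tau_a$ for which $err_{\tau_a}\le\Delta_a/24$, and because $err_\tau\asymp\Delta_\tau=2^{-\tau}$ we get $\tau_a=O(\log(1/\Delta_a))$. A geometric summation then bounds the number of pulls $T_a\le\sum_{\tau\le\tau_a}R_\tau=O(R_{\tau_a})$. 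Setting $\beta=1/T$ and substituting the explicit formula for $R_\tau$ from Algorithm \ref{Algo-LDPSE} yields the LDP analogue of Lemma \ref{lemma-pulltime}: $T_a=\widetilde{O}\bigl(u^{2/v}\log T/(\epsilon^2\Delta_a^{(1+v)/v})\bigr)$ with probability at least $1-1/T$.

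Multiplying $T_a$ by $\Delta_a$ and summing over sub-optimal arms delivers the instance-dependent regret bound, while the failure event $\mathcal{E}^c$ contributes at most $T\cdot\max_a\Delta_a\cdot\mathbb{P}(\mathcal{E}^c)\le\max_a\Delta_a$ to the expectation. For the instance-independent bound I would use the standard two-regime split: arms with $\Delta_a\le\Delta^*$ contribute at most $T\Delta^*$ in aggregate, whereas arms with $\Delta_a\ge\Delta^*$ are controlled by the instance-dependent inequality in which $(1/\Delta_a)^{1/v}$ is replaced by $(1/\Delta^*)^{1/v}$. Balancing the two regimes at $\Delta^*=\Theta\bigl((Ku^{2/v}\log T/(\epsilon^2 T))^{v/(1+v)}\bigr)$ yields the advertised rate $\widetilde{O}\bigl(u^{2/(1+v)}(K\log T/\epsilon^2)^{v/(1+v)}T^{1/(1+v)}\bigr)$.

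The main obstacle is the concentration step: unlike the central model, where a single Laplace draw suffices to privatise each per-arm prefix sum and the tree mechanism inflates only by a $\log^{1.5}T$ factor, here a fresh Laplace noise is injected for every single sample, so the noise term of $\widetilde{\mu}_a$ is a sum of $R_\tau$ i.i.d.\ Laplaces that must be handled by sub-exponential concentration. Pinning down the exponents of $u$ and $\epsilon$ in $B_\tau$ so that the clipping bias matches this noise under the heavy-tailed moment constraint is what drives the $1/\epsilon^2$ dependence in both regret bounds; once $B_\tau$ and $R_\tau$ are verified to force $err_\tau\lesssim\Delta_\tau$, the rest of the argument is a near-verbatim adaptation of the DP-SE proof.
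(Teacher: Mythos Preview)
Your proposal is correct and follows essentially the same route as the paper's proof: decompose $\widetilde{\mu}_a-\mu_a$ into truncation bias ($\le u/B_\tau^v$), Bernstein-controlled fluctuation of the truncated rewards, and the average of $R_\tau$ i.i.d.\ Laplace noises (the paper invokes Lemma~\ref{alemma5} rather than a generic sub-exponential Bernstein, but the resulting rate $B_\tau\sqrt{\log(1/\delta)}/(\epsilon\sqrt{R_\tau})$ is identical), then union-bound over arms and epochs, argue $a^*$ survives while every sub-optimal arm is eliminated once $\Delta_\tau$ drops below $\Delta_a$, and sum $R_\tau$ geometrically. Two cosmetic discrepancies to fix: in Algorithm~\ref{Algo-LDPSE} the schedule is $\Delta_\tau=4^{-\tau}$, not $2^{-\tau}$, and the concentration gives $|\widetilde{\mu}_a-\mu_a|\le 7\,err_\tau$ (matching the removal threshold $14\,err_\tau$), not $err_\tau$; neither affects the asymptotics.
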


\begin{algorithm}[!t]
    \caption{LDP Robust Successive Elimination}
    \label{Algo-LDPSE}
    \begin{algorithmic}[1]
        \Require Confidence $\beta$, parameters $\epsilon,v,u$.
        \State $\mathcal{S}\gets\{1,\cdots,K\}$
        \State Initialize: $t\gets0$, $\tau\gets0$.
        \Repeat
            \State $\tau\gets \tau+1$.
            \State Set $\bar{\mu}_a=0$ for all $a\in\mathcal{S}$.
            \State $r\gets 0$, $D_\tau\gets4^{-\tau}$.
            \State $R_\tau\gets  \lc u^{\frac{2}{v}}
            (
            \frac
            {
                28^{2(1+v)/v}
                \log
                    (
                        8|\mathcal{S} |\tau^2/\beta
                    )
            }
            {
                \epsilon^2 D_\tau^{2(1+v)/v}
            }
            )+\log
                    (
                        \frac{8|\mathcal{S} |\tau^2}{\beta}
                    ) \rc$.
            \State $B_\tau\gets
                    (
                    \frac
                    {
                        u\sqrt{R_\tau}\epsilon
                    }
                    {
                       \sqrt{ \log( 8|\mathcal{S}|\tau^2/\beta)}
                    }
                    )^{1/(1+v)}$.
            \While {$r<R_\tau$}
                \State $r\gets r+1$.
                \For {$a\in\mathcal{S}$}
                    \State $t\gets t+1$.
                    \State Sample a reward $x_{a,r}$ for each arm $a\in\mathcal{S}$.
                \State $\widetilde{x}_{a,r}\gets x_{a,r}\cdot\mathbb{I}_{\{|x_{a,r}|\le B_\tau\}}$.
                \State $\widehat{x}_{a,r}\gets\widetilde{x}_{a,r}+{\rm Lap}(\frac{2B_\tau}{\epsilon})$
                \EndFor
            \EndWhile
             \State For each $a\in \mathcal{S}$, compute $\bar{\mu}_a\gets(\sum\limits_{l=1}^{R_\tau}\widehat{x}_{a,l})/R_\tau$.
             \State $\bar{\mu}_{\rm max}\gets\max_{a\in\mathcal{S}}\bar{\mu}_i$.
             \State $err_\tau\gets u^{1/(1+v)}
            (\frac
            {\sqrt{\log(8|\mathcal{S}|\tau^2/\beta)}}
            {R_\tau\epsilon})^{v/(1+v)}$.
            \For {all viable arm $a$}
                \If {$\widetilde{\mu}_{\rm max}-\widetilde{\mu}_a>14err_\tau$} \State Remove arm $a$ from $\mathcal{S}$. \EndIf
            \EndFor
        \Until {$|\mathcal{S}|=1$}
        \State Pull the arm in $\mathcal{S}$ in all remaining $T-t$ rounds.
    \end{algorithmic}
\end{algorithm} 
In the following, we derive both instance-dependent and instance-independent lower bounds for heavy-tailed MAB in the $\epsilon$-LDP model. Similar to instance-dependent lower bounds in central DP, we also first analyze the two-armed case to show that the dependency on the term of $\frac{1}{\epsilon^2}(\frac{1}{\Delta})^\frac{1}{v}$ is unavoidable in general (see Theorem \ref{LDPInstDepend} in Appendix~\ref{sec:appendix-LDP}) and then extend to $K$-armed case in the following result.

\begin{theorem}[LDP Instance-dependent Lower Bound]
\label{LDPKIDLB}
There exists a heavy-tailed K-armed  bandit instance with $u\le 1$ in (\ref{eq:2}) and $\Delta_a\triangleq\mu_1-\mu_a \in (0,\frac{1}{5})$, such that for any $\epsilon$-LDP ($0<\epsilon\le 1$) algorithm whose regret $\leq o(T^\alpha)$ for any $\alpha > 0$, the regret satisfies
\[
\liminf _{T \rightarrow \infty} \frac{\mathcal{R}_T}{\log T}\geq  \Omega \left(\frac{1}{\epsilon^2}\sum_{\Delta_a>0}(\frac{1}{\Delta_a})^{\frac{1}{v}} \right).
\]
\end{theorem}

\begin{remark}
Theorem \ref{LDPInstDepend} reveals that the term of $\frac{1}{\epsilon^2 \Delta^{\frac{1}{v}}}$ is unavoidable in the regret bound. As a result, the attained bound in Theorem \ref{UpperBoundLDP} is  optimal. Compared with the optimal rate $O(\frac{1}{\Delta^{\frac{1}{v}}})$ in the non-private case, we can see the price of privacy is an additional factor of $\frac{1}{\epsilon^2}$, which is similar to other MAB with bounded/sub-Gaussian rewards problems  in the LDP model~\citep{zhou2020local,ren2020multi}.
\end{remark}
\begin{theorem}[LDP Instance-independent Lower Bound]
\label{LDPIndepLower}
There exists a heavy-tailed $K$-armed bandit instance with the $(1+v)$-th bounded moment of each reward distribution is bounded by $1$. Moreover, if $T$ is large enough, for any the $\epsilon$-LDP algorithm $\mathcal{A}$ with $\epsilon \in (0,1]$, the expected regret must satisfy
$$\mathcal{R}_T \geq \Omega\left( \Big(\frac{K}{\epsilon^2}\Big)^{\frac{v}{1+v}} T^{\frac{1}{1+v}}\right).$$
\end{theorem}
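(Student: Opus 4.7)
My plan is to adapt the standard minimax lower bound technique for stochastic multi-armed bandits to the locally private heavy-tailed setting, by combining a heavy-tailed $K$-armed hard family with the Duchi--Jordan--Wainwright information contraction for $\epsilon$-LDP channels. The rough picture is: in the non-private heavy-tailed analysis, the $\Omega(K^{v/(1+v)}T^{1/(1+v)})$ lower bound follows from balancing the per-sample KL $\asymp\Delta^{(1+v)/v}$ against the regret $\Delta T$; under $\epsilon$-LDP this per-sample KL is further contracted to $O(\epsilon^{2})\cdot\mathrm{TV}(P,Q)^{2}=O(\epsilon^{2}\Delta^{2(1+v)/v})$ for the two-point heavy-tailed construction, and re-balancing the resulting degraded information budget against $\Delta T$ yields the target $(K/\epsilon^{2})^{v/(1+v)}T^{1/(1+v)}$ rate.

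Concretely, for a gap $\Delta\in(0,\tfrac{1}{5})$ chosen last, I would build $K+1$ hypotheses $\bar P,\bar Q_1,\dots,\bar Q_K$ mirroring the central-model construction used for Theorem~\ref{thm:7}: in $\bar P$ every arm uses $\delta_0$, and in $\bar Q_a$ every arm except $a$ uses $\delta_0$ while arm $a$ uses the two-point distribution $\nu_a=(1-\Delta^{(1+v)/v})\delta_0+\Delta^{(1+v)/v}\delta_{\Delta^{-1/v}}$, which has mean $\Delta$, $(1+v)$-th moment exactly $1$, and total-variation distance $\Delta^{(1+v)/v}$ from $\delta_0$. The Duchi--Jordan--Wainwright inequality for sequentially interactive $\epsilon$-LDP protocols, together with the adaptive bandit chain rule, then gives for every $a$
\[
\mathrm{KL}\!\left(\mathbb P^{\bar P}_{Z^{1:T}}\,\bigm\|\,\mathbb P^{\bar Q_a}_{Z^{1:T}}\right)\le c\,\epsilon^{2}\,\Delta^{2(1+v)/v}\,\mathbb E_{\bar P}[N_a],
\]
where $N_a$ is the pull count of arm $a$ and $Z^{1:T}$ is the full public transcript.

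A pigeonhole argument furnishes some $a^\star\in[K]$ with $\mathbb E_{\bar P}[N_{a^\star}]\le T/K$, and Pinsker's inequality then yields
\[
\mathbb E_{\bar Q_{a^\star}}[N_{a^\star}]\le T/K+T\sqrt{(cT/2K)\,\epsilon^{2}\Delta^{2(1+v)/v}}.
\]
Taking $\Delta$ to be the largest value for which this expression does not exceed $T/2$, namely $\Delta\asymp (K/(T\epsilon^{2}))^{v/(2(1+v))}$, forces the regret on $\bar Q_{a^\star}$ to be at least $\Delta(T-\mathbb E_{\bar Q_{a^\star}}[N_{a^\star}])\ge \Delta T/2$, which is of order $T^{(v+2)/(2(v+1))}(K/\epsilon^{2})^{v/(2(v+1))}$ and therefore dominates the claimed bound $(K/\epsilon^{2})^{v/(1+v)}T^{1/(1+v)}$ in the meaningful regime $T\gtrsim K/\epsilon^{2}$ (outside which the claimed bound already exceeds the trivial cap $T$). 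Converting the in-probability statement into the stated expectation lower bound reuses the coupling argument of \cite{KarwaV_ITCS18} that was already invoked for Theorem~\ref{thm:7}. The main obstacle will be correctly handling the sequentially interactive nature of $\epsilon$-LDP, since the per-round privatiser is allowed to depend on past public messages; this precludes the naive product-form DJW contraction and requires its interactive refinement to validate the chain-rule decomposition of the transcript KL displayed above.
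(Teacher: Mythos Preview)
Your approach is structurally close to the paper's---both build a family of $K$ alternatives, pick by pigeonhole an arm that is under-pulled on the base instance, and then argue via an LDP information contraction that the algorithm cannot recognise when that arm becomes optimal---but the key lemma you invoke is genuinely different. The paper uses Lemma~\ref{LDPKLDecomp} (from~\cite{basu2019differential}), which bounds the transcript KL by $c\epsilon^{2}\sum_a\mathbb E[N_a]\,\mathrm{KL}(\nu_a\|\nu_a')$; for its two-point construction the raw $\mathrm{KL}$ scales like $\Delta^{(1+v)/v}$, and optimising gives $\Delta\asymp(K/(T\epsilon^{2}))^{v/(1+v)}$ and exactly the stated bound. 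You instead use the Duchi--Jordan--Wainwright contraction $\mathrm{KL}(MP\|MQ)\le c\epsilon^{2}\,\mathrm{TV}(P,Q)^{2}$, which for your two-point family gives the much smaller per-sample information $\epsilon^{2}\Delta^{2(1+v)/v}$. This leads to the \emph{larger} choice $\Delta\asymp(K/(T\epsilon^{2}))^{v/(2(1+v))}$ and a lower bound $T^{(v+2)/(2(1+v))}(K/\epsilon^{2})^{v/(2(1+v))}$ that, as you observe, dominates the theorem's target whenever $T\gtrsim K/\epsilon^{2}$. So your route is tighter than the paper's and a fortiori proves Theorem~\ref{LDPIndepLower}; note, however, that the extra strength you extract actually overshoots the instance-independent \emph{upper} bound~\eqref{eq:instance-independent-local-upper} claimed in Theorem~\ref{UpperBoundLDP}, so while your lower-bound argument is sound, you should be aware that it points to a tension elsewhere in the paper rather than to an error in your own proof.

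Two small clean-ups. First, your final sentence about ``converting the in-probability statement'' via the Karwa--Vadhan coupling is out of place: your argument through Pinsker on $\mathbb E[N_{a^\star}]$ is already an expectation bound, and the coupling of~\cite{KarwaV_ITCS18} is a central-DP device that plays no role under LDP. Second, the paper's base instance $\bar P_1$ has a genuine optimal arm (so regret is controlled on both sides of the Bretagnolle--Huber inequality), whereas your $\bar P$ with all arms equal to $\delta_0$ has zero regret; this is fine since you only need the regret on $\bar Q_{a^\star}$, but it is worth stating explicitly that the lower bound is realised on that alternative.
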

\begin{remark}
From Theorem \ref{LDPIndepLower}, we can see the upper bound~\eqref{eq:instance-independent-local-upper} of Algorithm \ref{Algo-LDPSE} is nearly optimal. However, compared with instance-independent lower bound, there is still a $\text{poly}(\log T)$ factor gap. We conjecture this factor could be removed by using some more advanced robust estimator, such as the estimator in \citet{lee2020optimal} and we will leave it as an open problem. For MAB with bounded rewards in the LDP model, \citet{basu2019differential} shows that its instance-dependent regret bound is always at least $\Omega(\frac{\sqrt{KT}}{\epsilon})$, {\em i.e.,} there is an additional factor of $\frac{1}{\epsilon}$ compared with the non-private case. However, for heavy-tailed MAB, compared with the lower bound of $\Omega(K^{\frac{v}{1+v}}T^{\frac{1}{1+v}})$ in the non-private case, from Theorem \ref{LDPIndepLower} we can observe that the difference is  a factor of $(\frac{1}{\epsilon^2})^{\frac{v}{1+v}}$. Thus, combining with Remark \ref{remark2}, we can conclude that heavy-tailed MAB and bounded MAB are quite different in both central and local differential privacy models.
\end{remark}
\section{EXPERIMENTS}
\label{sec:experiments}
In this section, we conduct experiments on synthetic datasets to evaluate the performance of our algorithms. Since this is the first paper studying DP/LDP heavy-tailed MAB and there is no previous methods, we will only evaluate the performance of our algorithms. Our experiments consist of two parts. In the first part, we empirically compare our Algorithm~\ref{Algo-UCB} and Algorithm~\ref{Algo-DPSE} in the central DP model. In the second part, we will evaluate our Algorithm~\ref{Algo-LDPSE} in the LDP model.
\paragraph{Datasets and Setting}
For the data generation, we follow similar settings as in the previous work on MAB with heavy-tailed rewards, such as \citet{lee2020optimal}. Specifically, we set $K=5$ and restrict the mean of each arm within $[0.1, 0.9]$ throughout the experiment. We consider three instances, denoted by $S_1$, $S_2$ and $S_3$. In $S_1$, we let the gaps of sub-optimal arms decrease linearly where the largest mean is always $0.9$ and the smallest mean is always $0.1$ (so the means are \{0.9, 0.7, 0.5, 0.3, 0.1\}). In $S_2$, we consider the case that a larger fraction of arms have large sub-optimal gaps, hence we set the mean of each arm $a$ by a quadratic convex function $\mu_a=0.05(a-5)^2+0.1$ (so the means are \{0.9, 0.55, 0.3, 0.15, 0.1\}). In $S_3$, we consider the case that a larger fraction of arms have small sub-optimal gaps, hence we set the mean of each arm $a$ by a quadratic concave function $\mu_a=-0.05(a-1)^2+0.9$ (so the means are \{0.9, 0.85, 0.7, 0.45, 0.1\}).
In all the settings, the reward of each arm $a\in[K]$ at each pull is drawn from a Pareto distribution with shape parameter $\alpha$ and scale parameter $\lambda_a$. Specifically, each time after pulling arm $a$, the learner receives a reward $x$ which follows the following probability density function:
\begin{equation*}
    f(x)=
    \begin{cases}
        \frac{\alpha\lambda_a^\alpha}{x^{\alpha+1}} &x\ge\lambda_a\\
        0 &x<\lambda_a.
    \end{cases}
\end{equation*}
We adopt the Pareto distribution because  it is common in practice. We set $\alpha=1.05+v$ such that the $1+v$-th moment of reward distribution always exists and is bounded by $\frac{\alpha\lambda_a^{1+v}}{\alpha-(1+v)}$. For a given $\alpha$, we set $\lambda_a=\frac{(\alpha-1)\mu_a}{\alpha}$ since the mean of a Pareto distribution with parameters $\alpha$ and $\lambda_a$ is $\frac{\alpha\lambda_a}{\alpha-1}$. We take the maximum of $\frac{\alpha\lambda_a^{1+v}}{\alpha-(1+v)}$ among all arms $a\in[K]$ as $u$ in the experiments.

For each algorithm we run $90$ independent repetitions for each case. In figures, we show the average of cumulative regret (represented by the solid line) for comparing the performance of algorithms and error bars of a quarter standard deviation (represented by the shaded region) for comparing the robustness of algorithms.


\begin{figure}[!t]
    \centering
    \subcaptionbox{$v=0.5, \epsilon=0.5$}{\includegraphics[width=.493\linewidth]{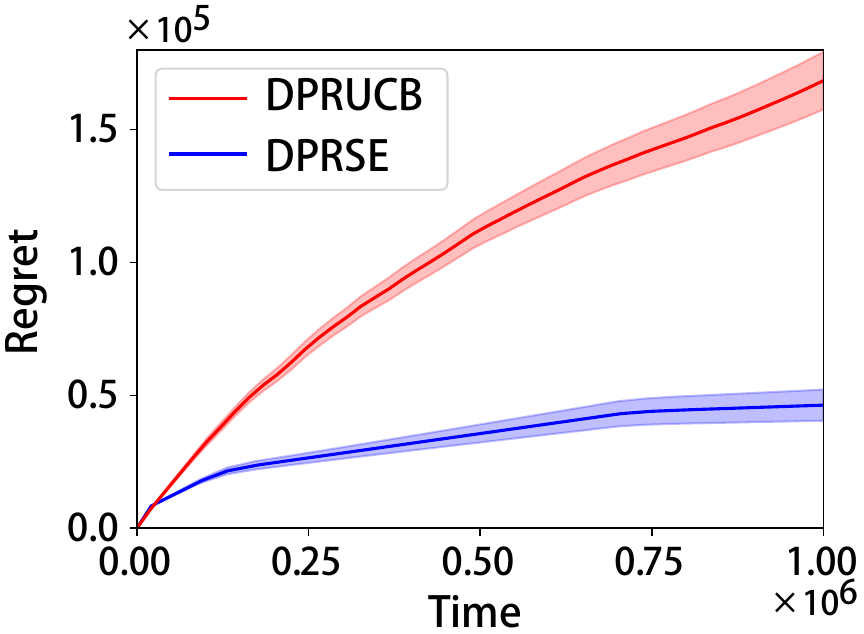}}
    \subcaptionbox{$v=0.5, \epsilon=1.0$}{\includegraphics[width=.493\linewidth]{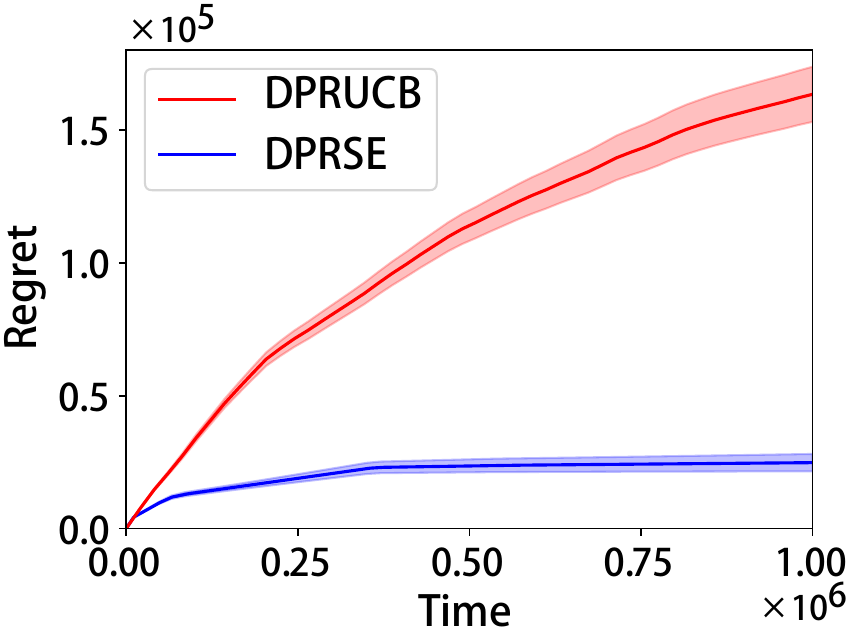}}
    \subcaptionbox{$v=0.9, \epsilon=0.5$}{\includegraphics[width=.493\linewidth]{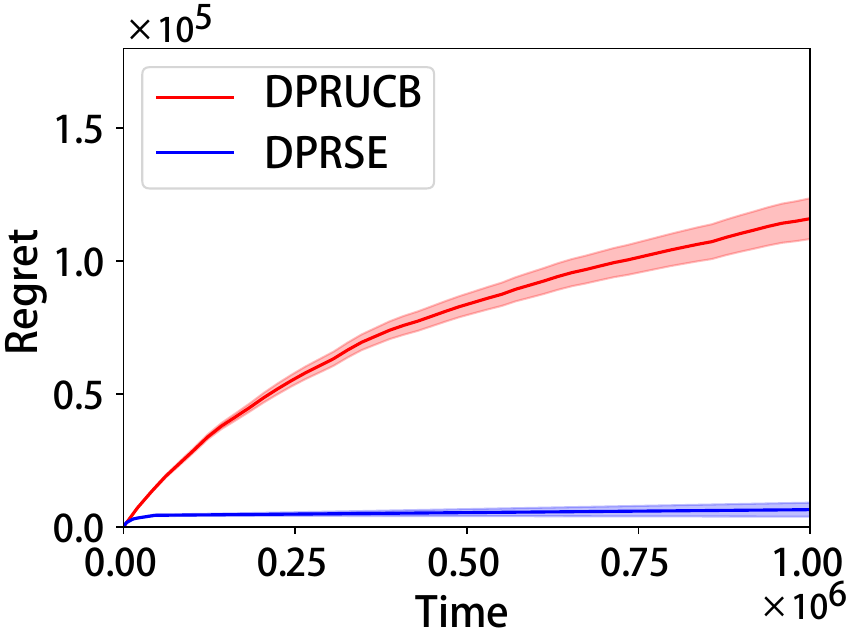}}
    \subcaptionbox{$v=0.9, \epsilon=1.0$}{\includegraphics[width=.493\linewidth]{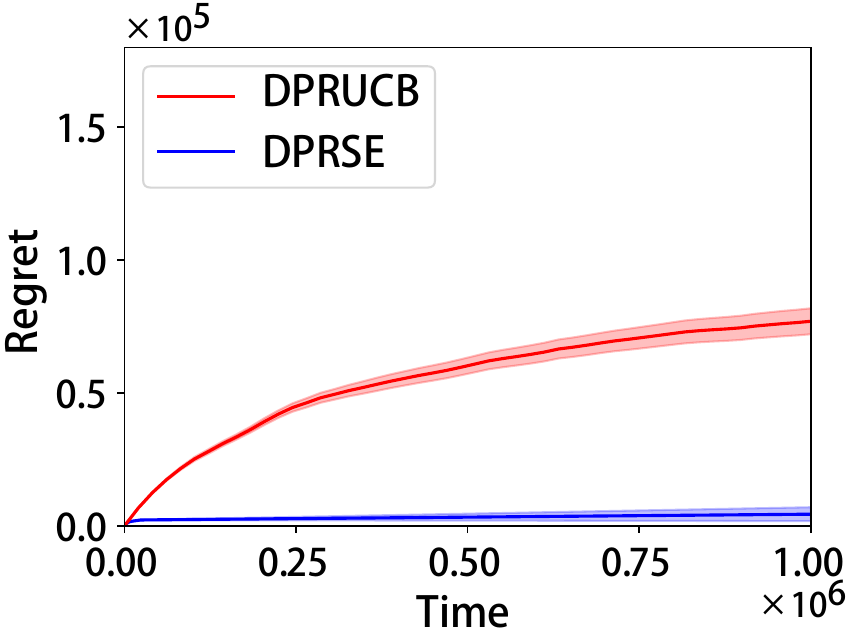}}
    \caption{DP Setting 1 ($S_1$)}
    \label{fig:DP1}
\end{figure}

\paragraph{Results and Discussion.} In the first part, we compare the performance of our proposed Algorithm~\ref{Algo-UCB} (DPRUCB) and Algorithm~\ref{Algo-DPSE} (DPRSE) for the central DP model. In each setting, we vary $\epsilon\in\{0.5, 1.0\}$ and $v\in\{0.5, 0.9\}$. The results of setting $S_1$ is given in Figure~\ref{fig:DP1}. And the results of setting $S_2$ and $S_3$ are presents in Figure \ref{fig:DP2} and Figure \ref{fig:DP3}, which are included in  Appendix~\ref{sec:appendix-experiments} due to space limitation. From these results, we can see that DPRSE always outperform DPRUCB among these three settings. Moreover, we can see when $\epsilon$ decreases, the regret will increase, and when the term $v$ becomes larger, we have smaller regret.

\begin{figure}[!htb]
    \centering
    \subcaptionbox{$v=0.5$}{\includegraphics[width=.493\linewidth]{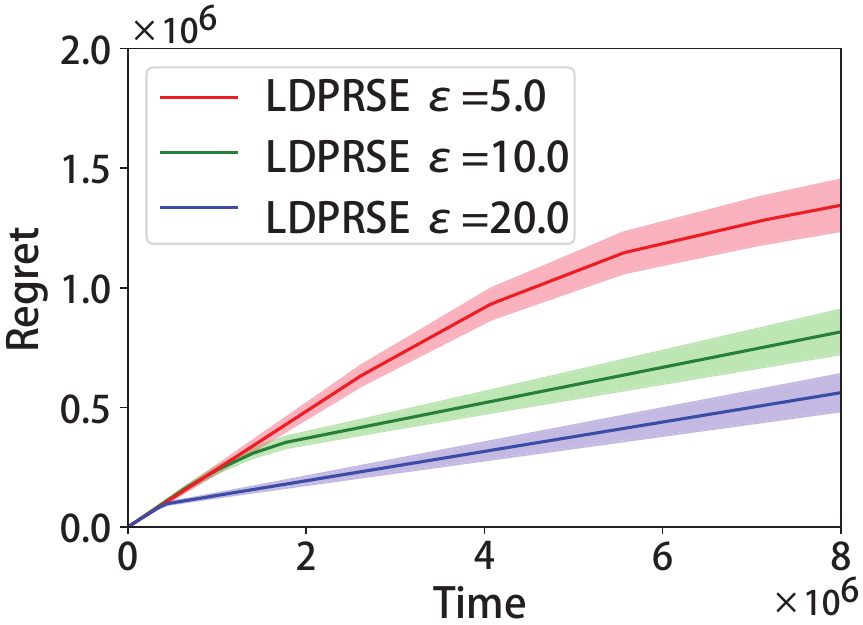}}
    \subcaptionbox{$v=0.9$}{\includegraphics[width=.493\linewidth]{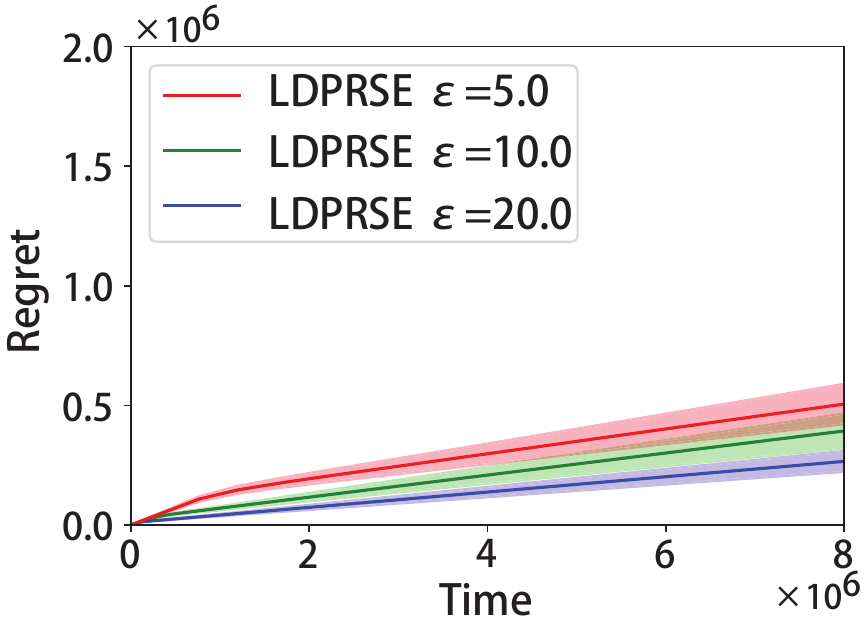}}
    \caption{LDP Setting 3 ($S_3$)}
    \label{fig:LDP}\vspace{-2mm}
\end{figure}

In the second part, we evaluate our proposed Algorithm~\ref{Algo-LDPSE} (LDPRSE) for the local DP model in the Setting 3 ($S_3$), since $S_3$  will has larger regret than $S_1$ and $S_2$ theoretically. We vary $\epsilon\in\{5.0, 10.0, 20.0\}$ and $v\in\{0.5, 0.9\}$, The results are given in Figure \ref{fig:LDP}. We can see that when $\epsilon$ or $v$ is larger, we have smaller regret. Moreover, compared with the central DP model, we can see the regret is larger in the LDP model.

In summary, we can observe that all the above results support our previous theoretical analysis.

\section{CONCLUSIONS}

In this paper, we provided the first study on the problem of MAB with heavy-tailed reward distributions in the (local) Differential Privacy model. We mainly focused on the case the reward distribution of each arm only has $(1+v)$-th moment with some $v\in (0, 1]$. In the central $\epsilon$-DP model, we first provided a near optimal result by developing a private and robust UCB algorithm. To achieve this we provided an adaptive version of the Tree-based mechanism. Then, we improved the result via a private and robust version of the SE algorithm. Finally, we showed that the instance-dependent regret bound of our improved algorithm is optimal by showing its lower bound. In the $\epsilon$-LDP model. We proposed an algorithm which could be seen a locally private and robust version of the SE algorithm, which provably achieve  (near) optimal rates for both  instance-dependent and instance-independent regret.

There are still many open problems besides the future work mentioned in the main context. First, throughout the whole paper we need to assume both $u$ and $v$ are known. How to address a more practical case where they are unknown? Recent work that addresses this issue for standard bandit problems~\citep{ashutosh2021bandit}, while it remains unknown whether they can be extended to the private case.
Secondly, for the setting of MAB with bounded reward, it has been shown that an UCB-based private algorithm can also attain an optimal regret guarantee. Thus, a natural question is whether it is possible to get an optimal DP variant of UCB algorithm for our problem.

\subsubsection*{Acknowledgements}
Di Wang and Yulian Wu were support in part by the baseline funding BAS/1/1689-01-01 and funding from the AI Initiative REI/1/4811-10-01 of King Abdullah University of Science and Technology (KAUST). Peng Zhao was supported by the Baidu Scholarship.

\bibliography{aistats}


\clearpage
\appendix

\thispagestyle{empty}

\onecolumn \makesupplementtitle

\section{TECHNICAL LEMMAS}\label{sec:appendix-tech}
\begin{lemma}[Tail Bound of Laplacian Vairable \citep{dwork2006calibrating}]\label{alemma2}
    If $X\sim{{\rm Lap}(b)}$, then
    \begin{equation*}
        \mathbb{P}(|X|\ge t\cdot b)=\exp(-t).
    \end{equation*}
\end{lemma}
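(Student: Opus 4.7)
The plan is to prove the identity $\mathbb{P}(|X| \ge tb) = \exp(-t)$ by a direct integration of the Laplace density, exploiting its symmetry. Recall from Definition~\ref{def:3} that a Laplace random variable $X \sim \mathrm{Lap}(b)$ has density $f_X(x) = \tfrac{1}{2b}\exp(-|x|/b)$ on $\mathbb{R}$.

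First, I would use the symmetry of $f_X$ about $0$ to write
\begin{equation*}
\mathbb{P}(|X| \ge tb) \;=\; \mathbb{P}(X \ge tb) + \mathbb{P}(X \le -tb) \;=\; 2\,\mathbb{P}(X \ge tb).
\end{equation*}
Then I would compute the one‑sided tail by a direct substitution $u = x/b$:
\begin{equation*}
\mathbb{P}(X \ge tb) \;=\; \int_{tb}^{\infty} \frac{1}{2b} e^{-x/b}\,dx \;=\; \frac{1}{2}\int_{t}^{\infty} e^{-u}\,du \;=\; \frac{1}{2}e^{-t}.
\end{equation*}
Multiplying by $2$ yields the claimed identity $\mathbb{P}(|X| \ge tb) = e^{-t}$.

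Since this is a one‑line computation using only the definition of the Laplace density and elementary calculus, there is no substantive obstacle; the only thing to be careful about is the factor of $1/(2b)$ in the normalization (so that the tail equals $e^{-t}$ rather than $\tfrac{1}{2}e^{-t}$ or $2e^{-t}$) and the use of symmetry to collapse the two‑sided tail to a single integral. The result holds for every $t \ge 0$, and in fact gives an equality rather than merely an upper bound, which is why the lemma is stated with ``$=$'' instead of ``$\le$''.
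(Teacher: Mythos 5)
Your computation is correct: the paper states this lemma as a standard fact cited from the differential privacy literature and gives no proof of its own, and your direct integration of the density $\frac{1}{2b}e^{-|x|/b}$ together with the symmetry argument is exactly the standard derivation, valid for all $t\ge 0$. Nothing is missing.
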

\begin{lemma}[Bernstein's Inequality \citep{vershynin2018high}]\label{alemma3}
Let $X_1, \cdots X_n$ be $n$ independent zero-mean random variables. Suppose $|X_i|\leq M$ and $\mathbb{E}[X_i^2]\leq s$ for all $i \in [n]$. Then for any $t>0$, we have
\begin{equation*}
    \mathbb{P}\left\{\frac{1}{n}\sum_{i=1}^n X_i \geq t \right\}\leq \exp\left(-\frac{\frac{1}{2}t^2n}{s+\frac{1}{3}Mt}\right)
\end{equation*}
\end{lemma}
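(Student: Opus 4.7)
The plan is to apply the standard Chernoff method with a careful bound on the moment generating function that exploits both the boundedness assumption $|X_i|\le M$ and the variance assumption $\mathbb{E}[X_i^2]\le s$. For any $\lambda>0$, Markov's inequality yields
\begin{equation*}
\mathbb{P}\Bigl\{\tfrac{1}{n}\sum_{i=1}^{n} X_i\ge t\Bigr\}=\mathbb{P}\Bigl\{\sum_{i=1}^{n}\lambda X_i\ge \lambda n t\Bigr\}\le e^{-\lambda n t}\,\prod_{i=1}^{n}\mathbb{E}\bigl[e^{\lambda X_i}\bigr],
\end{equation*}
so the whole proof reduces to bounding $\mathbb{E}[e^{\lambda X_i}]$ well and then optimizing over $\lambda$.

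The key step is a one-variable MGF bound. I would use the Taylor expansion $e^{x}=1+x+\sum_{k\ge 2}\tfrac{x^{k}}{k!}$ together with $|X_i|^{k}\le M^{k-2}X_i^{2}$ for $k\ge 2$, which gives
\begin{equation*}
\mathbb{E}\bigl[e^{\lambda X_i}\bigr]\le 1+\mathbb{E}[X_i^{2}]\sum_{k\ge 2}\frac{\lambda^{k}M^{k-2}}{k!}\le 1+\frac{\lambda^{2}\,\mathbb{E}[X_i^{2}]/2}{1-\lambda M/3}
\end{equation*}
for $\lambda<3/M$, where I use the zero-mean assumption to kill the linear term and the elementary inequality $\sum_{k\ge 2}\tfrac{y^{k-2}}{k!}\le \tfrac{1}{2(1-y/3)}$ (valid for $y\in[0,1)$) with $y=\lambda M$. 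Using $1+x\le e^{x}$ then yields
\begin{equation*}
\mathbb{E}\bigl[e^{\lambda X_i}\bigr]\le \exp\!\Bigl(\tfrac{\lambda^{2}s/2}{1-\lambda M/3}\Bigr).
\end{equation*}

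Substituting back, the product over $i=1,\dots,n$ gives
\begin{equation*}
\mathbb{P}\Bigl\{\tfrac{1}{n}\sum_{i=1}^{n} X_i\ge t\Bigr\}\le \exp\!\Bigl(-\lambda n t+\tfrac{n\lambda^{2}s/2}{1-\lambda M/3}\Bigr).
\end{equation*}
Finally I choose $\lambda=\tfrac{t}{s+Mt/3}$, which is the minimizer one gets by differentiating the exponent (after clearing the denominator, this is a quadratic in $\lambda$); a short calculation shows $1-\lambda M/3=\tfrac{s}{s+Mt/3}$ and the exponent collapses to $-\tfrac{n t^{2}/2}{s+Mt/3}$, matching the stated bound. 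The main (and only real) obstacle is the MGF bound: getting the denominator $1-\lambda M/3$ with the precise constant $1/3$ requires the elementary inequality on $\sum_{k\ge 2}y^{k-2}/k!$ rather than a looser geometric-series bound, but this is routine.
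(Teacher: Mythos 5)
The paper does not prove this lemma at all --- it is quoted as a standard result with a citation to Vershynin's textbook --- so there is no ``paper proof'' to compare against. Your argument is the classical Chernoff--Cram\'er derivation of Bernstein's inequality and it is correct: the MGF bound via $|X_i|^k\le M^{k-2}X_i^2$, the comparison $\sum_{k\ge2}\frac{y^{k-2}}{k!}\le\frac{1}{2(1-y/3)}$ (which follows term by term from $2\cdot 3^{k-2}\le k!$), and the choice $\lambda=\frac{t}{s+Mt/3}$ all check out, and the exponent does collapse to $-\frac{nt^2/2}{s+Mt/3}$ as claimed.

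One small correction: you state the elementary series inequality as ``valid for $y\in[0,1)$,'' but the term-by-term comparison actually gives it for all $y\in[0,3)$, and you need this larger range. Your optimal $\lambda=\frac{t}{s+Mt/3}$ satisfies $\lambda M=\frac{Mt}{s+Mt/3}<3$ but not necessarily $\lambda M<1$ (take $s$ small relative to $Mt$), so as written the MGF bound would not apply at the chosen $\lambda$. Replacing $[0,1)$ by $[0,3)$ --- with no change to the proof of the elementary inequality --- fixes this.
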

\begin{lemma}\label{alemma4}
Given a random variable $X$ with $\mathbb{E}[|X|^{1+v}]\leq u$ for some $v\in (0, 1]$, for any $B>0$ we have
\begin{equation*}
    \mathbb{E}\left[X \cdot \mathbb{I}_{|X|>B}\right]\leq \frac{u}{B^v}.
\end{equation*}
\end{lemma}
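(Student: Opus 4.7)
The plan is to bound $\mathbb{E}[X\mathbbm{1}_{|X|>B}]$ by $\mathbb{E}[|X|\mathbbm{1}_{|X|>B}]$ and then exploit the fact that on the event $\{|X|>B\}$ we have $|X|/B > 1$, so we can raise that ratio to the power $v$ without decreasing the value. This converts one factor of $|X|$ into a factor of $|X|^{1+v}/B^v$, which lets us invoke the moment bound $\mathbb{E}[|X|^{1+v}] \leq u$.

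Concretely, the first step is the trivial inequality $\mathbb{E}[X\mathbbm{1}_{|X|>B}] \leq \mathbb{E}[|X|\mathbbm{1}_{|X|>B}]$. The second step is the key observation: on $\{|X|>B\}$, $1 \leq (|X|/B)^v$ since $v>0$, so
\begin{equation*}
|X|\mathbbm{1}_{|X|>B} \leq |X|\cdot\Bigl(\frac{|X|}{B}\Bigr)^{v}\mathbbm{1}_{|X|>B} = \frac{|X|^{1+v}}{B^v}\mathbbm{1}_{|X|>B}.
\end{equation*}
Taking expectations on both sides and dropping the indicator in the upper bound gives
\begin{equation*}
\mathbb{E}[|X|\mathbbm{1}_{|X|>B}] \leq \frac{1}{B^v}\mathbb{E}\bigl[|X|^{1+v}\mathbbm{1}_{|X|>B}\bigr] \leq \frac{1}{B^v}\mathbb{E}[|X|^{1+v}] \leq \frac{u}{B^v}.
\end{equation*}
Chaining this with step one yields the claimed bound.

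There is no real obstacle here: the proof is a two-line application of the Markov-type truncation argument that is standard in the heavy-tailed mean estimation literature (see for instance the analysis of Catoni's estimator). The only thing worth noting is that the bound holds even when $X$ can be negative, because we upper bound $X\mathbbm{1}_{|X|>B}$ by $|X|\mathbbm{1}_{|X|>B}$ at the outset, and the assumption is on the absolute moment $\mathbb{E}[|X|^{1+v}]$.
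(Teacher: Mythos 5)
Your proof is correct. It takes a slightly different technical route from the paper's: you argue pointwise, observing that on the event $\{|X|>B\}$ one has $|X|\le |X|\cdot(|X|/B)^v=|X|^{1+v}/B^v$, and then take expectations; the paper instead goes through the layer-cake identity $\mathbb{E}|X|^{1+v}=\int_0^\infty(1+v)t^{v}\,\mathbb{P}(|X|>t)\,dt$ and manipulates tail integrals to extract the factor $B^v$. Both are instances of the same Markov-type truncation bound $B^v\,\mathbb{E}[|X|\mathbbm{1}_{|X|>B}]\le \mathbb{E}[|X|^{1+v}]$, but your pointwise version is more elementary and, frankly, cleaner: the paper's chain as printed contains a step that does not parse literally (it replaces $\mathbb{P}(|X|>t)$ by $\mathbb{P}(|X|>B)$ inside $\int_B^\infty\cdots dt$, which both reverses the inequality direction and produces a divergent integral before the final identification with $\mathbb{E}[X\mathbbm{1}_{|X|>B}]$), whereas every step of yours is airtight. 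Your closing remark about why the one-sided bound survives possibly negative $X$ — namely that you pass to $|X|\mathbbm{1}_{|X|>B}$ at the outset and the hypothesis is on the absolute moment — is exactly the right point to flag.
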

\begin{proof}[{of Lemma~\ref{alemma4}}]
By the definition of expectation, we have
\begin{align*}
   u\geq  \mathbb{E}[|X|^{1+v}] &= \int_{0}^{\infty} (1+v) t^{1+v-1} \mathbb{P}(|X|> t) \diff{t} \\
   &\geq \int_{B}^{\infty} t^{v} \mathbb{P}(|X|> t) \diff{t} \\
   &\geq B^v \int_{B}^{\infty}\mathbb{P}(|X|> t)\diff{t} \\
   &= B^v \int_{0}^{\infty}\mathbb{P}(X \cdot \mathbb{I}_{|X|> B}>t)\diff{t}\\
   &= B^v      \mathbb{E}\left[X \cdot \mathbb{I}_{|X|>B}\right].
\end{align*}
Rearranging the inequality finishes the proof.
\end{proof}

\begin{lemma}[Concentration of Laplace Variables~\citep{wang2018empirical}]\label{alemma5}
If $X_1, \cdots X_n \sim \operatorname{Lap}(s/\epsilon)$, then with probability at least $1-\beta$, we have
\[
	\left|\frac{1}{n}\sum_{i=1}^{n}X_i\right|\leq \frac{2s}{\epsilon\sqrt{n}}\sqrt{\log \frac{2}{\beta}}.
\]
\end{lemma}

\begin{lemma}[Jensen’s Inequality] \label{LemmaJensen}
Let $X$ be an integrable, real-valued random variable, and $\psi$ be a convex function. Then $\psi(\mathbb{E}[X]) \leq \mathbb{E}[\psi(X)]$.
\end{lemma}

\begin{lemma}[Relation between Raw Moment and Central Moment]\label{lemma-raw-central}

Let $X$ be a random variable over $\mathbb{R}$ such that $\mathbb{E}[X]=\mu$. We have the following two results:
\begin{enumerate}
    \item[(1)] When $\mathbb{E}[|X|^{1+v}]<\infty$ for some $v\in(0,1]$, we have
    \[
        \mathbb{E}[|X-\mu|^{1+v}]\le4\mathbb{E}[|X|^{1+v}]<\infty.
    \]
    \item[(2)] When $\mathbb{E}[|X-\mu|^{1+v}]<\infty$ for some $v\in(0,1]$, we have
\begin{equation*}
      \mathbb{E}[|X|^{1+v}]\leq 2\mathbb{E}[|X-\mu|^{1+v}]+ 2|\mu|^{1+v}<\infty.
\end{equation*}
\end{enumerate}
\end{lemma}

\begin{proof}[{of Lemma~\ref{lemma-raw-central}}]
    When $ \mathbb{E}[|X|^{1+v}]<\infty$, we have
    \begin{flalign*}
        \mathbb{E}[|X-\mu|^{1+v}]
        &\le\mathbb{E}[|X-\mu|^{1+v}+|X+\mu|^{1+v}]\\
        &\le\mathbb{E}[2(|X|^{1+v}+|\mu|^{1+v})]\\
        &=2\mathbb{E}[|X|^{1+v}]+2\mathbb{E}[|\mu|^{1+v}]\\
        &=2\mathbb{E}[|X|^{1+v}]+2|\mu|^{1+v}\\
        &=2\mathbb{E}[|X|^{1+v}]+2|\mathbb{E}[X]|^{1+v}\\
        &\le2\mathbb{E}[|X|^{1+v}]+2\mathbb{E}[|X|^{1+v}]\\
        &=4\mathbb{E}[|X|^{1+v}],
    \end{flalign*}
    where the second inequality is due to the inequality $(6)$ of \citet{Bahr1965inequalities}, and the last inequality is due to Jensen’s inequality (Lemma~\ref{LemmaJensen}).

    When $ \mathbb{E}[|X-\mu|^{1+v}]<\infty$, we have
    \begin{flalign*}
        \mathbb{E}[|X|^{1+v}]
        &\le \mathbb{E}[|X|^{1+v}]+\mathbb{E}[|X-2\mu|^{1+v}]\\
        &\le \mathbb{E}[2(|X-\mu|^{1+v}+|\mu|^{1+v})]\\
        &= 2\mathbb{E}[(|X-\mu|^{1+v})]+2|\mu|^{1+v},
    \end{flalign*}
    where the second inequality comes from the inequality $(6)$ of \citet{Bahr1965inequalities} as well.
\end{proof}


\section{OMITTED PROOFS FOR SECTION~\ref{sec:central} (central differential privacy)}

\begin{proof}[{of Lemma \ref{lemma-tree}}]
The proof can be directly followed by the proof in the original bounded case, which is given by \citep{chan2011private,dwork2010differential}.
\end{proof}

\begin{proof}[{of Theorem \ref{theorem-privacy-ucb}}]
    Note that all rewards generated by arm $a$ are inserted into the corresponding Tree-based Mechanism $Tree_a$ and each Tree-based Mechanism is $\epsilon$-differentially private (by Lemma \ref{lemma-tree}), the proof follows directly from the Parallel Composition Theorem of DP.
\end{proof}

\begin{proof}[{of Lemma \ref{lemma-esterr}}]
We will prove the following more general lemma:
\begin{lemma}\label{lemma:a1}
    In Algorithm \ref{Algo-UCB}, for a fixed arm $a$ and $t$, we have the following estimation error with probability at least $1-2\delta$ for any $\delta \geq \frac{1}{2T^4}$
    \begin{equation}
        \widehat{\mu}_{a}(n_a, t)\le \mu_a+18u^{\frac{1}{1+v}}
        \left(\frac{\log \frac{1}{\delta} \log^{1.5+\frac{1}{v}}T}
        {n_a\epsilon}
        \right)^{\frac{v}{1+v}}.
    \end{equation}

\end{lemma}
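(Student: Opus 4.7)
The plan is to decompose $\widehat{\mu}_a(n_a,t)-\mu_a$ into three pieces and bound each separately. Let $\widetilde{x}_{a,i}$ denote the value inserted at the $i$-th pull of arm $a$, truncated at threshold $B_i=(\epsilon u i/\log^{1.5}T)^{1/(1+v)}$. Since $\widehat{\mu}_a(n_a,t)=\widehat{S}_a(t)/n_a$, I write
$$\widehat{\mu}_a(n_a,t)-\mu_a=\underbrace{\tfrac{\widehat{S}_a(t)-\sum_{i=1}^{n_a}\widetilde{x}_{a,i}}{n_a}}_{(\mathrm{I})}+\underbrace{\tfrac{1}{n_a}\sum_{i=1}^{n_a}(\widetilde{x}_{a,i}-\mathbb{E}\widetilde{x}_{a,i})}_{(\mathrm{II})}+\underbrace{\tfrac{1}{n_a}\sum_{i=1}^{n_a}(\mathbb{E}\widetilde{x}_{a,i}-\mu_a)}_{(\mathrm{III})}.$$

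For (I), since the truncation bound $B_i$ is non-decreasing and peaks at $B_{n_a}$, the Adaptive Tree-based Mechanism (Lemma \ref{lemma-tree}) gives $|(\mathrm{I})|\le (2B_{n_a}/(n_a\epsilon))\log^{1.5}T\log(1/\delta)$ with probability at least $1-\delta$. For (II), the zero-mean variables $\widetilde{x}_{a,i}-\mathbb{E}\widetilde{x}_{a,i}$ are independent and each is bounded by $B_i\le B_{n_a}$; using $|X|^2\le|X|^{1+v}B_{n_a}^{1-v}$ on the truncation event together with Lemma \ref{alemma4}, each has variance at most $uB_{n_a}^{1-v}$. Bernstein's inequality (Lemma \ref{alemma3}) then yields $|(\mathrm{II})|\le \sqrt{2uB_{n_a}^{1-v}\log(1/\delta)/n_a}+\tfrac{2}{3}B_{n_a}\log(1/\delta)/n_a$ with probability at least $1-\delta$. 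For (III), Lemma \ref{alemma4} gives $|\mathbb{E}\widetilde{x}_{a,i}-\mu_a|\le u/B_i^v$, and the key algebraic step is the integral bound $\sum_{i=1}^{n_a}i^{-v/(1+v)}\le (1+v)n_a^{1/(1+v)}$, yielding $|(\mathrm{III})|\le (1+v)u^{1/(1+v)}(\log^{1.5}T/(\epsilon n_a))^{v/(1+v)}$.

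To finish, I substitute $B_{n_a}=(\epsilon un_a/\log^{1.5}T)^{1/(1+v)}$ into the bounds for (I) and (II), take a union bound over the two random events, and verify algebraically that each of the three contributions is absorbed into the claimed bound $18u^{1/(1+v)}(\log(1/\delta)\log^{1.5+1/v}T/(n_a\epsilon))^{v/(1+v)}$, with the privacy noise (I) being the dominant term. I expect the principal obstacle to be matching the logarithmic exponents: the raw bound on (I) scales as $\log(1/\delta)\cdot\log^{1.5v/(1+v)}T$ while the target scales as $\log(1/\delta)^{v/(1+v)}\cdot\log^{(1.5v+1)/(1+v)}T$, and these agree up to the ratio $(\log(1/\delta)/\log T)^{1/(1+v)}$, which is a constant thanks to the hypothesis $\delta\ge 1/(2T^4)$; this is where the extra $\log^{1/v}T$ factor inside the parenthesis of the target comes from. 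A secondary technicality is that the truncated samples are independent but \emph{not} identically distributed because $B_i$ grows with $i$; monotonicity of $B_i$ lets me use $B_{n_a}$ as a uniform envelope in both the Bernstein application and the tree-mechanism invocation, at the cost of only constant-factor slack.
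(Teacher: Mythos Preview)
Your proposal is correct and follows essentially the same route as the paper: the identical three-term decomposition (tree noise, Bernstein fluctuation, truncation bias), the same use of Lemma~\ref{lemma-tree}, Bernstein's inequality with the variance bound $\mathbb{E}[X^2\mathbbm{1}_{|X|\le B}]\le uB^{1-v}$, and Lemma~\ref{alemma4} together with the integral estimate $\sum_{i\le n_a} i^{-v/(1+v)}\le(1+v)n_a^{1/(1+v)}$. You have also correctly identified the one nontrivial algebraic point, namely that the hypothesis $\delta\ge 1/(2T^4)$ is what converts the extra factor $(\log(1/\delta))^{1/(1+v)}$ in the tree-noise term into $O((\log T)^{1/(1+v)})$, producing the $\log^{1/v}T$ inside the final parenthesis; the paper handles this in exactly the same way.
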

  Thus,  in Algorithm~\ref{Algo-UCB}, we set $\delta$ to be $\frac{1}{2t^4}$. Hence with probability at least $1-\frac{1}{t^4}$, we have
     \begin{equation*}
         \widehat{\mu}_{a}(n_a, t)\le \mu_a+18u^{\frac{1}{1+v}}
        \left(\frac{\log(2t^4)\log^{1.5+\frac{1}{v}}T}
        {n_a\epsilon}
        \right)^{\frac{v}{1+v}}.
     \end{equation*}
     In the following we will prove Lemma \ref{lemma:a1}.

    Denote the total noise introduced by the $\tree_{a}$ by $noise$ and the $i$-th reward obtained from arm $a$ by $x_{a,i}$. From Bernstein's inequality for bounded random variables, noting that $\mathbb{E}(X_a^2 \cdot \mathbb{I}_{|X_a|\le B})\le uB^{1-v}$ if $\mathbb{E}|X|^{1+v} \leq u$, we have, with probability at least $1-2\delta$,
    \begin{align*}
        &
        \left| \widehat{\mu}_a(n_a, t)-\mu_a\right|\\
        &
        =\left| \frac{1}{n_{a}}\sum_{i=1}^{n_a}x_{a,i}\mathbb{I}_{\left| x_{a,i}\right|\le B_{i}}+noise-\mu_a\right| \\
        &\le \left| \frac{1}{n_{a}}\sum_{i=1}^{n_a}x_{a,i}\mathbb{I}_{\left| x_{a,i}\right|\le B_{i}}-\mu_a\right|+\left|noise\right|\\
        &
        =\left| \frac{1}{n_{a}}\sum_{i=1}^{n_a}\left[x_{a,i}\mathbb{I}_{\left| x_{a,i}\right|\le B_{i}}-\mathbb{E}(X_a\mathbb{I}_{\left| X_a\right|\le B_{i}})\right]+\frac{1}{n_{a}}\sum_{i=1}^{n_a}\left[\mathbb{E}(X_a\mathbb{I}_{\left| X_a\right|\le B_{i}})-\mathbb{E}X_a\right]\right|+\left|noise\right|\\
        &
        \le \left|\frac{1}{n_{a}}\sum_{i=1}^{n_a}\left[x_{a,i}\mathbb{I}_{\left| x_{a,i}\right|\le B_{i}}-\mathbb{E}(X_a\mathbb{I}_{\left| X_a\right|\le B_{i}})\right] \right|+\left|\frac{1}{n_{a}}\sum_{i=1}^{n_a}\mathbb{E}(X_a\mathbb{I}_{\left| X_a\right|> B_{i}})\right|+|noise|\\
        &\le
      \sqrt{\frac{2B_{n_a}^{1-v}u\log(\frac{1}{\delta})}{n_a}}+\frac{B_{n_a}\log\frac{1}{\delta}}{3n_a}+  \frac{1}{n_a}\sum_{i=1}^{n_a}\frac{u}{B_{i}^v}+\frac{2B_{n_a}\log\frac{1}{\delta}\log^{1.5}T}{\epsilon n_a}.
    \end{align*}
    Where the last inequality is due to     Lemma \ref{alemma4} and Lemma \ref{alemma2}, and Lemma \ref{alemma3} that with probability at least $1-\delta$,
    \begin{equation*}
        \left|\frac{1}{n_{a}}\sum_{i=1}^{n_a}\left[x_{a,i}\mathbb{I}_{\left| x_{a,i}\right|\le B_{i}}-\mathbb{E}(X_a\mathbb{I}_{\left| X_a\right|\le B_{i}})\right] \right|\leq \sqrt{\frac{2B_{n_a}^{1-v}u\log(\frac{1}{\delta})}{n_a}}+\frac{B_{n_a}\log\frac{1}{\delta}}{3n_a}.
    \end{equation*}

    Recall that $B_{n}=\left(\frac{\epsilon un}{\log^{1.5}T}\right)^{\frac{1}{1+v}}$ for any $n\in\mathbb{N}^+$, with some evident calculations, we can bound each term above in the last inequality respectively. 
    \begin{align}
        \frac{1}{n_a}\sum_{i=1}^{n_a}\frac{u}{B_{i}^v}
        &=\frac{u}{n_a}\sum_{i=1}^{n_a}\left(\frac{\log^{1.5}T}{\epsilon ui}\right)^\frac{v}{1+v} \le \frac{u^{\frac{1}{1+v}}}{n_a}\frac{(\log^{1.5}T)^{\frac{v}{1+v}}}{\epsilon^{\frac{v}{1+v}}}\sum_{i=1}^{n_a}i^{-\frac{v}{1+v}} \notag \\
        &\le \frac{u^{\frac{1}{1+v}}}{n_a}\frac{(\log^{1.5}T)^{\frac{v}{1+v}}}{\epsilon^{\frac{v}{1+v}}}\cdot (1+v)\cdot n_a^{\frac{1}{1+v}}
        \le 2u^{\frac{1}{1+v}}\left(\frac{\log\frac{1}{\delta}\log^{1.5}T}{\epsilon n_a}\right)^\frac{v}{1+v}, \label{aeq:14}\\
        \sqrt{\frac{2B_{n_a}^{1-v}u\log(\frac{1}{\delta})}{n_a}}
        &
        =\frac{\sqrt{2}u^{\frac{1}{1+v}}\epsilon^{\frac{1-v}{2(1+v)}}(\log\frac{1}{\delta})^{\frac{1}{2}}}{n_a^{\frac{v}{1+v}}(\log^{1.5}T)^{\frac{1-v}{2(1+v)}}}\le \sqrt{10} u^{\frac{1}{1+v}}\left(\frac{\log\frac{1}{\delta}\log^{1.5}T}{\epsilon n_a}\right)^\frac{v}{1+v}, \label{aeq:15}\\
        \frac{B_{n_a}\log\frac{1}{\delta}}{3n_a}
        &
        \le\frac{5B_{n_a}(\log\frac{1}{\delta})^{\frac{v}{1+v}}(\log^{1.5}T)^{\frac{1}{1+v}}}{3n_a}
        \le\frac{5(\epsilon un)^\frac{1}{1+v}(\log\frac{1}{\delta})^{\frac{v}{1+v}}(\log^{1.5}T)^{\frac{1}{1+v}}}{3(\log^{1.5}T)^{\frac{1}{1+v}}n_a}\notag\\
        &
        =\frac{5\epsilon^{\frac{1}{1+v}}u^{\frac{1}{1+v}}(\log\frac{1}{\delta})^{\frac{v}{1+v}}}{3n_a^{\frac{v}{1+v}}}
        \le 2u^{\frac{1}{1+v}}\left(\frac{\log\frac{1}{\delta}\log^{1.5}T}{\epsilon n_a}\right)^\frac{v}{1+v}. \label{aeq:16}
    \end{align}
   The inequality of the second term is based on the fact that $\epsilon\leq 1$ and
   \begin{align*}
       \left(\log \frac{1}{\delta}\right)^{\frac{1}{2}-\frac{v}{1+v} }=  \left(\log \frac{1}{\delta}\right)^\frac{1-v}{2(1+v)} \leq (\log 2T^4)^\frac{1-v}{2(1+v)}\leq (5\log T)^\frac{1-v}{2(1+v)}\leq \sqrt{5} (\log^{1.5}T)^{\frac{1-v}{2(1+v)}}.
   \end{align*}
    The inequality of the third term holds due to the following fact:
    \begin{align*}
        \frac{\log \frac{1}{\delta}}{(\log^{1.5}T)^\frac{1}{(1+v)}}\leq \left( \frac{\log \frac{1}{\delta}}{\log^{1.5}T} \right)^\frac{1}{(1+v)} \left(\log \frac{1}{\delta}\right)^\frac{v}{1+v}\leq 5 \left(\log \frac{1}{\delta}\right)^\frac{v}{1+v}.
    \end{align*}
    Moreover we have with probability at least $1-\delta$
    \begin{align}
        &\frac{2B_{n_a}\log\frac{1}{\delta}\log^{1.5}T}{\epsilon n_a}=\frac{2u^\frac{1}{1+v}(\log\frac{1}{\delta} \log^{1.5} T)^\frac{v}{1+v}}{(\epsilon n_a)^\frac{v}{1+v}}(\log\frac{1}{\delta})^{\frac{1}{1+v}} \notag\\
        &\leq \frac{10u^\frac{1}{1+v}(\log\frac{1}{\delta} \log^{1.5} T)^\frac{v}{1+v}}{(\epsilon n_a)^\frac{v}{1+v}}\log^\frac{v}{1+v} T\leq \frac{10u^\frac{1}{1+v}(\log\frac{1}{\delta} \log^{1.5} T)^\frac{v}{1+v}}{(\epsilon n_a)^\frac{v}{1+v}}\log^\frac{1}{1+v} T \notag\\
        &\leq  \frac{10u^\frac{1}{1+v}(\log\frac{1}{\delta} \log^{1.5+\frac{1}{v}} T)^\frac{v}{1+v}}{(\epsilon n_a)^\frac{v}{1+v}}. \label{aeq:17}
    \end{align}

    Based on (\ref{aeq:14}), (\ref{aeq:15}), (\ref{aeq:16}), (\ref{aeq:17}), we obtain
    \begin{equation*}
        \left| \widehat{\mu}_a(n_a, t)-\mu_a\right|\le18u^{\frac{1}{1+v}}\left(\frac{\log\frac{1}{\delta}\log^{1.5+\frac{1}{v}}T}{\epsilon n_a}\right)^\frac{v}{1+v},
    \end{equation*}
\end{proof}

\begin{proof}[{of Theorem \ref{thm:2}}]
    We denote by $N_a(t)$ the (random) number of times arm $a$ is selected up to time $t$ and let $I_{s,t}=18u^{\frac{1}{1+v}}\left(\frac{\log[2(t+1)^4]\log^{1.5+\frac{1}{v}}T}{\epsilon s}\right)^\frac{v}{1+v}$. We first derive the upper bound on $\mathbb{E}[N_a(T)]$ for any arm $a$. Let $\ell$ be an arbitrary positive integer.
    \begin{align*}
        N_a(T)
        &
        =1+\sum_{t=K+1}^T{\mathbb{I}_{a_t=a}}\\
        &
        \le\ell+\sum_{t=K+1}^T{\mathbb{I}_{a_t=a\text{ and }N_a(t-1)\ge\ell}}\\
        &
        \le\ell+\sum_{t=K+1}^T{\mathbb{I}_{\hat{\mu}_{a^*}(n_{a^*},t-1)+I_{N_{a^*}(t-1),t-1}\le\hat{\mu}_a(n_a,t-1)+I_{N_a(t-1),t-1}\text{ and }N_a(t-1)\ge\ell}}\\
        &
        \le\ell+\sum_{t=K+1}^T{\mathbb{I}_{\min\limits_{0<s<t}\hat{\mu}_{a^*}(s,t-1)+I_{s,t-1}\le\max\limits_{\ell<s_a<t}\hat{\mu}_a(s_a,t-1)+I_{s_a,t-1}}}\\
        &
        \le\ell+\sum_{t=1}^{\infty}\sum_{s=1}^{t}\sum_{s_a=\ell}^{t}\mathbb{I}_{\hat{\mu}_{a^*}(s,t)+I_{s,t}\le\hat{\mu}_a(s_a,t)+I_{s_a,t}}.
    \end{align*}
    Note that $\hat{\mu}_{a^*}(s,t)+I_{s,t}\le\hat{\mu}_a(s_a,t)+I_{s_a,t}$ implies that at least one of the following three inequalities is true:
    \begin{align}
        \hat{\mu}_{a^*}(s,t)&\le\mu_{a^*}-I_{s,t}\label{ucb-eq1}\\
        \hat{\mu}_{a}(s_a,t)&\ge\mu_a+I_{s_a,t}\label{ucb-eq2}\\
        \Delta_a&<2I_{s_a,t}\label{ucb-eq3}
    \end{align}
    Otherwise, assume that all three inequalities are false, then we have
    \begin{align*}
        \hat{\mu}_{a^*}(s,t)+I_{s,t}
        &
        >\mu_{a^*}\\
        &
        =\mu_a+\Delta_a\\
        &
        >\hat{\mu}_{a}(s_a,t)-I_{s_a,t}+\Delta_a\\
        &
        \ge\hat{\mu}_{a}(s_a,t)-I_{s_a,t}+2I_{s_a,t}\\
        &
        =\hat{\mu}_{a}(s_a,t)+I_{s_a,t},
    \end{align*}
    which contradicts the condition. By the result of Lemma~\ref{lemma-esterr}, we know that (\ref{ucb-eq1}) or (\ref{ucb-eq2}) hold with probability at most $2t^{-4}$. For $s_a\ge36u^{\frac{1}{v}}\frac{\log(2t^4)\log^{1.5+\frac{1}{v}}T}{\epsilon\Delta_a^{\frac{1+v}{v}}}$, (\ref{ucb-eq3}) is false. Now, let $\ell=\left\lceil36u^{\frac{1}{v}}\frac{\log(2t^4)\log^{1.5+\frac{1}{v}}T}{\epsilon\Delta_a^{\frac{1+v}{v}}}\right\rceil$, we can bound the term $\mathbb{E}[N_a(T)]$ as follows,
    \begin{align*}
        \mathbb{E}[N_a(T)]
        &
        \le\left\lceil36u^{\frac{1}{v}}\frac{\log(2t^4)\log^{1.5+\frac{1}{v}}T}{\epsilon\Delta_a^{\frac{1+v}{v}}}\right\rceil+\sum_{t=1}^{\infty}\sum_{s=1}^{t}\sum_{s_a=\ell}^{t}\mathbb{P}(\text{(\ref{ucb-eq1}) or (\ref{ucb-eq2}) is true})\\
        &
        \le\left\lceil36u^{\frac{1}{v}}\frac{\log(2t^4)\log^{1.5+\frac{1}{v}}T}{\epsilon\Delta_a^{\frac{1+v}{v}}}\right\rceil+\sum_{t=1}^{\infty}\sum_{s=1}^{t}\sum_{s_a=\ell}^{t}\frac{2}{t^4}\\
        &
        \le\left\lceil36u^{\frac{1}{v}}\frac{\log(2t^4)\log^{1.5+\frac{1}{v}}T}{\epsilon\Delta_a^{\frac{1+v}{v}}}\right\rceil+2\sum_{t=1}^\infty\frac{1}{t^2}\\
        &
        \le36u^{\frac{1}{v}}\frac{\log(2t^4)\log^{1.5+\frac{1}{v}}T}{\epsilon\Delta_a^{\frac{1+v}{v}}}+1+\frac{\pi^2}{3}\\
        &\le36u^{\frac{1}{v}}\frac{\log(2t^4)\log^{1.5+\frac{1}{v}}T}{\epsilon\Delta_a^{\frac{1+v}{v}}}+5.
    \end{align*}
    Finally, using that $\mathcal{R}_T=\sum_{a=1}^K(\Delta_a\mathbb{E}N_a(T))$, we directly obtain
    \begin{equation*}
        \mathcal{R}_T\le\sum_{a=1}^K\left(36\frac{\log(2T^4)\log^{1.5+\frac{1}{v}}T}{\epsilon}\left(\frac{u}{\Delta_a}\right)^{\frac{1}{v}}+5\Delta_a\right)
    \end{equation*}
\end{proof}
\begin{proof}[{of Theorem \ref{thm:3}}]
    Consider two adjacent reward streams that differ only on one reward of arm $a$. In each epoch $\tau$, the difference of the mean of arm $a$ between the two adjacent streams is at most $\frac{2B_\tau}{R_\tau}$ since the reward of each arm is truncated by $[-B_\tau, B_\tau]$. Thus, adding noise of ${\rm Lap}(\frac{2B_\tau}{\epsilon R_\tau})$ to $\mu_a$ guarantees $\epsilon$-DP.
\end{proof}
\begin{proof}[{of Lemma \ref{lemma-pulltime}}]
    The bound of $T$ is trivial so we focus on proving the latter bound. Denote the $i$-th reward obtained from arm $a$ by $x_{a,i}$. We first bound $|\mu_a-\bar{\mu}_a|$ and $|\bar{\mu}_a-\widetilde{\mu}_a|$ for each epoch $\tau$ and each $a\in\mathcal{S}$. For $|\mu_a-\bar{\mu}_a|$, by Lemma \ref{alemma3}, Lemma \ref{alemma4} and noting that $\mathbb{E}(X^2\mathbb{I}_{|X|\le B})\le uB^{1-v}$, we have, with probability at least $1-\frac{\beta}{4|\mathcal{S}|\tau^2}$,
    \begin{align*}
        &
        \left| \mu_a-\bar{\mu}_a\right|\\
        &
        =\left| \frac{1}{R_\tau}\sum_{t=1}^{R_\tau}x_{a,t}\mathbb{I}_{\left| x_{a,t}\right|\le B_\tau}-\mu_a\right|\\
        &
        =\left| \frac{1}{R_\tau}\sum_{t=1}^{R_\tau}\left[x_{a,t}\mathbb{I}_{\left| x_{a,t}\right|\le B_\tau}-\mathbb{E}(x_{a,t}\mathbb{I}_{\left| x_{a,t}\right|\le B_\tau})\right]+\frac{1}{R_\tau}\sum_{t=1}^{R_\tau}\left[\mathbb{E}(x_{a,t}\mathbb{I}_{\left| x_{a,t}\right|\le B_e})-\mathbb{E}x_{a,t}\right]\right|\\
        &
        \le \left|\frac{1}{R_\tau}\sum_{t=1}^{R_\tau}\left[x_{a,t}\mathbb{I}_{\left| x_{a,t}\right|\le B_\tau}-\mathbb{E}(x_{a,t}\mathbb{I}_{\left| x_{a,t}\right|\le B_\tau})\right] \right|+\left|\frac{1}{R_\tau}\sum_{t=1}^{R_\tau}\mathbb{E}(x_{a,t}\mathbb{I}_{\left| x_{a,t}\right|> B_\tau})\right|\\
        &
       \le \sqrt{\frac{2B_\tau^{1-v}u\log\left(\frac{4|\mathcal{S}|\tau^2}{\beta}\right)}{R_\tau}}+\frac{B_\tau\log\left(\frac{4|\mathcal{S}|\tau^2}{\beta}\right)}{3R_\tau} +\frac{u}{B_\tau^v}\\
        &\le 4u^{\frac{1}{1+v}}\left(\frac{\log\left(\frac{4|\mathcal{S}|\tau^2}{\beta}\right)}{R_\tau\epsilon}\right)^{\frac{v}{1+v}}=4err_\tau.
    \end{align*}
    For $|\bar{\mu}_a-\widetilde{\mu}_a|$, by using the concentration of the Laplace distribution Lemma \ref{alemma2}, we have with probability at least $1-\frac{\beta}{4|\mathcal{S}|\tau^2}$,
    \begin{equation*}
        \left|\bar{\mu}_a-\widetilde{\mu}_a\right|
        =\left|{\rm Lap}\left(\frac{2B_\tau}{\epsilon R_\tau}\right)\right|
        \le \frac{2B_\tau}{\epsilon R_\tau}\log{\frac{4|\mathcal{S}|\tau^2}{\beta}}
        =2u^{\frac{1}{1+v}}\left(\frac{\log\left(\frac{4|\mathcal{S}|\tau^2}{\beta}\right)}{R_\tau\epsilon}\right)^{\frac{v}{1+v}}=2err_\tau.
    \end{equation*}
     Given an epoch $\tau$, we denote by $\mathcal{E}_\tau$ the event where for all $i\in\mathcal{S}$ it holds that $|\mu_a-\bar{\mu}_a|\le4err_\tau$ and $|\bar{\mu}_a-\widetilde{\mu}_a|\le 2err_\tau$ and denote $\mathcal{E}=\bigcup_{\tau\ge 1}\mathcal{E}_\tau$. By taking the union bound, we have $$\mathbb{P}(\mathcal{E}_\tau)\ge1-\frac{\beta}{2\tau^2}$$
     and
     $$\mathbb{P}(\mathcal{E})\ge1-\frac{\beta}{2}\left(\sum_{\tau\ge 1}\tau^{-2}\right)\ge1-\beta.$$
     In the remainder of the proof, we assume that $\mathcal{E}$ holds. So for any epoch $\tau$ and any viable arm $a$, we have $|\widetilde{\mu}_a-\mu_a|\le6err_\tau$. As a result, for any epoch $\tau$ and any two arms $a$ and $a^\prime$, we have $$\left|(\widetilde{\mu}_a-\widetilde{\mu}_{a^\prime})-(\mu_a-\mu_{a^\prime})\right|\le 12err_\tau.$$

     Next, we show that under $\mathcal{E}$, the optimal arm $a^*$ is never eliminated. For any epoch $\tau$, let $a_\tau=\arg\max_{a\in\mathcal{S}}{\widetilde{\mu}_a}$. Since $$\left|(\widetilde{\mu}_{a_\tau}-\widetilde{\mu}_{a^*})-(\mu_{a_\tau}-\mu_{a^*})\right|=|(\widetilde{\mu}_{a_\tau}-\widetilde{\mu}_{a^*})+\Delta_{a_\tau}|=(\widetilde{\mu}_{a_\tau}-\widetilde{\mu}_{a^*})+\Delta_{a_\tau}\le 12err_\tau,$$
     it is easy to see that the algorithm doesn't eliminate $a^*$.

     Next, we show that under $\mathcal{E}$, in any epoch $\tau$ the algorithm eliminate all viable arms with sub-optimality gap at least $D_\tau=2^{-\tau}$. Fix an epoch $\tau$ and a viable arm $a$ with sup-optimality gap $\Delta_a\ge D_\tau$. Due to $R_\tau=u^{\frac{1}{v}}\left(\frac{24^{(1+v)/v}\log\left(4\left|\mathcal{S}\right|\tau^2/\beta\right)}{\epsilon D_\tau^{(1+v)/v}}\right)+1$, we know that $12err_\tau<\frac{D_\tau}{2}$. Thus, $$\widetilde{\mu}_{a_\tau}-\widetilde{\mu}_a\ge\widetilde{\mu}_{a^*}-\widetilde{\mu}_a\ge\Delta_a-12err_\tau>D_\tau-\frac{D_\tau}{2}=\frac{D_\tau}{2}>12err_\tau,$$
     which means arm $a$ is eliminated by the algorithm.

     Finally, fix a suboptimal arm $a$, we derive the upper bound on the total number of timesteps that arm $a$ is pulled. Let $\tau(a)$ be the first epoch such that $\Delta_a\ge D_{\tau(a)}$, which implying $D_{\tau(a)}\le\Delta_a\le D_{\tau(a)-1}=2 D_{\tau(a)}$. Due to $\Delta_a\le2 D_{\tau(a)}$ and $D_{\tau(a)}=2^{-\tau(a)}$, we have $\tau(a)\le\log_2\left(\frac{2}{\Delta_a}\right)$. Thus, the total number of pulls of arm $a$ is
     \begin{align*}
       \sum_{\tau\le \tau(a)}{R_\tau}&\le\sum_{\tau\le \tau(a)}{2^{\frac{1+v}{v}[\tau-\tau(a)]}}R_{\tau(a)} \le R_{\tau(a)}\sum_{i=0}^{\tau(a)-1}{2^{-\frac{1+v}{v}i}} \\
         &\le\frac{1}{1-2^{-\frac{1+v}{v}}}R_{\tau(a)}\\
         &
         =\frac{2^{\frac{1+v}{v}}}{2^{\frac{1+v}{v}}-1}\left(u^{\frac{1}{1+v}}\frac{24^\frac{1+v}{v}}{D_\tau^{\frac{1+v}{v}}}\frac{\log\left(\frac{4|\mathcal{S}|\tau(a)^2}{\beta}\right)}{\epsilon}+1\right)\\
         &\le u^{\frac{1}{1+v}}\frac{48^\frac{1+v}{v}}{D_\tau^{\frac{1+v}{v}}}\frac{\log\left(\frac{4|\mathcal{S}|\tau(a)^2}{\beta}\right)}{\epsilon}+2\\
         &
         \le u^{\frac{1}{1+v}}\left(\frac{96}{\Delta_a}\right)^{\frac{1+v}{v}}\frac{\log\left(\frac{4K}{\beta}\right)+\log\log\left(\frac{2}{\Delta_a}\right)}{\epsilon}+2 \\
         &= O\left(  u^{\frac{1}{1+v}}\left(\frac{1}{\Delta_a}\right)^{\frac{1+v}{v}}\frac{\log\left(\frac{4K}{\beta}\right)+\log\log\left(\frac{2}{\Delta_a}\right)}{\epsilon}\right)
     \end{align*}
      where the last inequality is due to the bounds $D_\tau>\Delta_a/2$, $|\mathcal{S}|\le K$, $\tau(a)\le\log_2(2/\Delta_a)$ and $K\le2$. Combining with the trivial upper bound $T$, the lemma follows directly.
\end{proof}

\begin{proof}[{of Theorem \ref{thm:4}}]
We first consider the instance-dependent expected regret.
Denote the total number of rounds to pull each sub-optimal arm $a\ne a^*$ by $T_a$. Taking $\beta=\frac{1}{T}$, then with probability at least $1-\frac{1}{T}$, we have
    \begin{align*}
        \mathcal{R}_T
        &
        \le\sum_{\Delta_a>0}{T_a}\cdot\Delta_a=\sum_{\Delta_i>0}{\left[\frac{u^{\frac{1}{1+v}}}{\epsilon(\Delta_a)^{\frac{1}{v}}}\left(\log\left(\frac{K}{\beta}\right)+\log\log\left(\frac{1}{\Delta_a}\right)\right)\right]}\\
        &
        ={O}\left(\frac{u^{\frac{1}{1+v}}\log T}{\epsilon}\sum_{\Delta_a>0}\left(\frac{1}{\Delta_a}\right)^\frac{1}{v}\right).
    \end{align*}
    With probability at most $\frac{1}{T}$, Algorithm~\ref{Algo-DPSE} will fail to identify the optimal arm and thus incur an expected cumulative $regret$ of $O(T\cdot\max_{\Delta_a>0}\Delta_a)$. Combining the two cases, we obtain
    \begin{align*}
        \mathcal{R}_T
        &
        \le(1-\frac{1}{T})\cdot {O}\left[\frac{u^{\frac{1}{1+v}}\log T}{\epsilon}\sum_{\Delta_a>0}\left(\frac{1}{\Delta_a}\right)^\frac{1}{v}\right]+\frac{1}{T}\cdot O(T\cdot\max_{\Delta_a>0}\Delta_a)\\
        &
        \le\widetilde{O}\left(\frac{u^{\frac{1}{1+v}}\log T}{\epsilon}\sum_{\Delta_a>0}{\left(\frac{1}{\Delta_a}\right)^{\frac{1}{v}}}+\max_{\Delta_a>0}\Delta_a\right).
    \end{align*}

We now consider the instance-independent expected regret, which is inspired by \citep{sajed2019optimal}.

    Throughout the proof we assume Algorithm~\ref{Algo-DPSE} runs with a parameter $\beta=\frac{1}{T}$. Since any arm $a$ with $\Delta_a<\frac{1}{T}$ yields a negligible expected regret bound of at most $1$, we assume $\Delta_a\ge\frac{1}{T}$. Then the bound of Lemma~\ref{lemma-pulltime} becomes
    $$\min\left\{T, C\frac{u^{\frac{1}{1+v}}\log (KT)}{\epsilon}\left(\frac{1}{\Delta_a}\right)^{\frac{1+v}{v}}\right\}.$$
    It follows that for any suboptimal arm $a$, the expected regret from pulling arm $a$ is therefore at most $$\min\left\{\Delta_aT, C\frac{u^{\frac{1}{1+v}}\log (KT)}{\epsilon}\left(\frac{1}{\Delta_a}\right)^{\frac{1}{v}}\right\}.$$

    Denote by $\Delta^*$ the gap which equates the two possible regret bounds when all arms are pulled $T/K$ times. That is, $\Delta^*\frac{T}{K}=C\frac{u^{\frac{1}{1+v}}\log T}{\epsilon}\left(\frac{1}{\Delta^*}\right)^{\frac{1}{v}}$. It can be easily derived that
    $$\Delta^*=O\left(u^{\frac{v}{(1+v)^2}}\left(\frac{CK}{\epsilon}\frac{\log T}{T}\right)^{\frac{v}{1+v}}\right).$$
    Note that in the setting where all suboptimal arms have the gap of precisely $\Delta^*$, the expected regret bound is proportional to
    $$O\left(u^{\frac{v}{(1+v)^2}}\left(\frac{CK\log T}{\epsilon}\right)^{\frac{v}{1+v}}T^{\frac{1}{1+v}}\right).$$

    Next, we show that no matter how different the arm gaps are, the expected regret of Algorithm~\ref{Algo-DPSE} is still proportional to this bound. For an arbitrary MAB instance, we rearrange arm by the increasing order of arm gaps such that arm $1$ is the optimal arm. We partition the set of suboptimal arms $2,\cdots,K$ to two sets: $\{2,\cdots,k^\prime\}$ and $\{k^\prime+1,\cdots K\}$, where $k^\prime$ is the largest index of the arm with gap at most $\Delta^*$. If we only pull the arms in the former set, the upper bound on the expected regret will be $T\Delta^*$, since the incurred expected regret at each pull is at most $\Delta^*$. If we only pull the arms in the latter set, the expected regret will be at most
    \begin{align*}
        &
        C\frac{u^{\frac{1}{1+v}}\log (KT)}{\epsilon}\sum_{a=k^\prime+1}^{K}\left(\frac{1}{\Delta_a}\right)^{\frac{1}{v}}
        \le 2C\frac{u^{\frac{1}{1+v}}\log T}{\epsilon}\sum_{a=k^\prime+1}^{K}\left(\frac{1}{\Delta^*}\right)^{\frac{1}{v}}\\
        &
        =2(K-k^\prime)\cdot C\frac{u^{\frac{1}{1+v}}\log T}{\epsilon}\left(\frac{1}{\Delta^*}\right)^{\frac{1}{v}}=2(K-k^\prime)\Delta^*\frac{T}{K}\le 2T\Delta^*.
    \end{align*}
    Since $\{2,\cdots,k^\prime\}$ and $\{k^\prime+1,\cdots K\}$ is a partition of the subtoptimal arms, one of the two sets contributes at least half of the expected regret. It is simple to see that the expected regret is upper bounded by $O(T\Delta^*)=O\left(u^{\frac{v}{(1+v)^2}}\left(\frac{CK\log T}{\epsilon}\right)^{\frac{v}{1+v}}T^{\frac{1}{1+v}}\right)$.
\end{proof}

\begin{theorem}\label{thm:6}
   There exists a heavy-tailed two-armed bandit problem with arm $2$ being sub-optimal, $u\le1$ in (\ref{eq:2}), and $\Delta\triangleq\mu_1-\mu_2\in(0,\frac{1}{5})$. Such that for any $\epsilon$-DP algorithm $\mathcal{A}$ with expected regret at most $T^{\frac{3}{4}}$ for sufficiently large number of rounds $T$,\footnote{Note that we can replace $\frac{3}{4}$ to other constants. The same to other results.} we have
    \begin{equation}
        \mathcal{R}_T\ge\Omega\left(\frac{\log T}{\epsilon}(\frac{1}{\Delta})^{\frac{1}{v}}\right).
    \end{equation}
\end{theorem}

\begin{proof}[{of Theorem \ref{thm:6}}]
    Let $\gamma=(5\Delta)^{\frac{1}{v}}$. Consider the instance $\bar{P}$ where the distribution of arm $1$ is $$\nu_1=\left(1-\frac{\gamma^{1+v}}{2}\right)\delta_0+\frac{\gamma^{1+v}}{2}\delta_{1/\gamma}$$
    and the distribution of arm $2$ is $$\nu_2=\left[1-\left(\frac{\gamma^{1+v}}{2}-\Delta\gamma\right)\right]\delta_0+\left(\frac{\gamma^{1+v}}{2}-\Delta\gamma\right)\delta_{1/\gamma},$$
    where $\delta_x$ is the Dirac distribution on $x$ and the distribution $p\cdot\delta_x+(1-p)\cdot\delta_y$ takes the value $x$ with probability $p$ and the value $y$ with probability $1-p$. It is easy to verify that $$\mathbb{E}[\nu_1]=\frac{5}{2}\Delta, u(\nu_1)=\frac{1}{2}<1$$
    and $$\mathbb{E}[\nu_2]=\frac{3}{2}\Delta, u(\nu_2)=\frac{3}{10}\le1.$$
    Denote $\mathcal{E}$ as the event that arm $2$ is pulled at most $t_2\triangleq\frac{\log T}{100\epsilon\cdot 5^{1/v}}\left(\frac{1}{\Delta}\right)^{\frac{1+v}{v}}$ times. We show in the following that $\mathbb{P}_{\mathcal{A},\bar{P}}(\mathcal{E})\le\frac{1}{2}$. Consider another instance $\bar{Q}$ where the distribution of arm $1$ remains unchanged, the distribution of arm $2$ is
    $$\nu_2^\prime=\left[1-\left(\frac{\gamma^{1+v}}{2}+\Delta\gamma\right)\right]\delta_0+\left(\frac{\gamma^{1+v}}{2}+\Delta\gamma\right)\delta_{1/\gamma}.$$
    Note that since $\Delta\in(0,\frac{1}{5})$, $(\frac{\gamma^{1+v}}{2}+\Delta\gamma)<1$, hence the instance $\bar{Q}$ is reasonable. It is also easy to verify that $$\mathbb{E}[\nu_2^\prime]=\frac{7}{2}\Delta, u(\nu_2^\prime)=\frac{7}{10}\le1.$$ Denote the regret of algorithm $\mathcal{A}$ under the instance $\bar{Q}$ by $\mathcal{R}_{T,\bar{Q}}^{\mathcal{A}}$. Then we have
    $$\mathcal{R}_{T,\bar{Q}}^{\mathcal{A}}\ge\mathbb{P}_{\mathcal{A},\bar{Q}}(\mathcal{E})(T-t_2)\Delta\ge\frac{T\Delta}{2}\mathbb{P}_{\mathcal{A}, \bar{Q}}(\mathcal{E}),$$
    where the first inequality is since that when $\mathcal{E}$ holds, we have additional $(T-t_2)\Delta$ regret and the second inequality is since that we assume $T$ is sufficiently large. Recall that $\mathcal{R}_T^{\mathcal{A}}$ is at most $T^{\frac{3}{4}}$, then we can obtain that $$\mathbb{P}_{\mathcal{A},\bar{Q}}(\mathcal{E})\le\frac{2}{\Delta T^{\frac{1}{4}}}.$$

    Now we consider the influence of differential privacy. Before that, we recall the following lemma.
    \begin{lemma}[Lemma 6.1 in~\citep{KarwaV_ITCS18}]\label{lemma-grpri}
    For each pair of distribution $\mathcal{D}$ and $\mathcal{D}^\prime$ and any $\epsilon$-differentially private mechanism $\mathcal{M}$, let $\mathbb{P}$ and $\mathbb{P}^\prime$ be the two marginal distributions on the output of $\mathcal{M}$ evaluated on $n$ data sampled i.i.d. from $\mathcal{D}$ and $\mathcal{D}^\prime$ respectively, then for any event $E$, we have
    \begin{equation}
        \mathbb{P}[E]\le e^{6\epsilon n\cdot d_{TV}(\mathcal{D}, \mathcal{D}^\prime)}\mathbb{P}^\prime[E],
    \end{equation}
    where $d_{TV}(\mathcal{D}, \mathcal{D}^\prime)$ is the total-variation distance between $\mathcal{D}$ and $\mathcal{D}^\prime$.
\end{lemma}

    Lemma~\ref{lemma-grpri} suggests that the ``effective" group privacy for the case that $n$ data items of the inputs are drawn i.i.d. either from distribution $\mathcal{D}$ or from distribution $\mathcal{D}^\prime$ is proportional to $\exp(6\epsilon n\cdot d_{TV}(\mathcal{D},\mathcal{D}^\prime))$. We apply the coupling argument in~\citep{KarwaV_ITCS18} to our setting. Note that we only consider the change under the event $\mathcal{E}$ here. Suppose there is an oracle $\mathcal{O}$ that can generate a collection of at most $t_2$ pairs of data, where the left ones are i.i.d. samples from $\nu_2$ and the right ones are i.i.d. samples from $\nu_2^\prime$. Whenever the algorithm needs to sample a reward from arm $2$, it turns to the oracle $\mathcal{O}$ and $\mathcal{O}$ provides either a fresh left-sample or a right-sample depending on the true environment (the true reward distribution of arm $2$). Suppose there is a counter $\mathcal{C}$ standing between the algorithm $\mathcal{A}$ and the Oracle $\mathcal{O}$. And if $\mathcal{O}$ runs out of $t_2$ samples, $\mathcal{C}$ routes $\mathcal{A}$'s oracle calls to another oracle. Lemma~\ref{lemma-grpri} guarantees that, the oracle never runs out of $t_2$ samples, i.e. event $E$ happens, with similar probabilities under $\nu_2$ and $\nu_2^\prime$. Formally, using the result of Lemma~\ref{lemma-grpri}, for sufficiently large $T$ such that $T^{0.13}>\frac{4}{\Delta}$, we have
    \begin{align*}
        \mathbb{P}_{\mathcal{A},\bar{P}}(E)
        &
        \le e^{6\epsilon t_2d_{TV}\left(\nu_2, \nu_2^\prime\right)}\cdot\mathbb{P}_{\mathcal{A},\bar{Q}}(E)
        \le e^{6\epsilon t_2\cdot 2\Delta\gamma}\cdot\frac{2}{\Delta T^{\frac{1}{4}}}\\
        &
        \le e^{\frac{12}{100}\log T}\cdot \frac{2}{\Delta}T^{-\frac{1}{4}}
        =T^{-0.13}\frac{2}{\Delta}<\frac{1}{2}.
    \end{align*}
    Thus we obtain
    \begin{equation*}
        \mathcal{R}_T\ge\Omega\left(\Delta\cdot t_2\right)\ge\Omega\left(\frac{\log T}{\epsilon}\left(\frac{1}{\Delta}\right)^{\frac{1}{v}}\right).
    \end{equation*}
\end{proof}

\begin{proof}[{of Theorem \ref{thm:7}}]
    We focus on the $K$ arms with mean reward satisfying $\frac{1}{2}\ge\mu_1\ge\cdots\ge\mu_K$. Consider the instance $\bar{P}$ where the distribution for each arm $a\in[K]$ is $$\nu_a=\left(1-\frac{s_a^{1+v}}{2}\right)\delta_0+\frac{s_a^{1+v}}{2}\delta_{1/s_a},$$
    where $s_a=(2\mu_a)^{\frac{1}{v}}$. It is easy to verify for each $a\in[K]$ that $$\mathbb{E}[\nu_a]=\mu_a, u(\nu_a)=\frac{1}{2}<1.$$
    Now, we fix an arm $a\ne1$ and denote $\mathcal{E}_a$ as the event that the arm $a$ is pulled at most $t_a\triangleq\frac{\log T}{100\epsilon 4^{1/v}}\left(\frac{1}{\Delta_a}\right)^\frac{1+v}{v}$ times. We show in the following that $\mathbb{P}_{\mathcal{A},\bar{P}}(\mathcal{E}_a)\le\frac{1}{2K}$.
    Consider another instance $\bar{Q}_a$ where the distribution of any arm $a^\prime\ne a$ remains unchanged, and the distribution of arm $a$ is $$\nu_a^\prime=\left[ 1-\left(\frac{s_a^{1+v}}{2}+2\Delta_a\gamma_a\right)\right]\delta_0+\left(\frac{s_a^{1+v}}{2} \right)\delta_{1/s_a}+(2\Delta_a\gamma_a)\delta_{1/\gamma_a},$$
    where $\gamma_a=(4\Delta_a)^{\frac{1}{v}}$. Note that since $\mu_a\le\frac{1}{6}$ and $\Delta_a\le\frac{1}{12}$, we have  $\mu_a^{1+v}<\frac{1}{6}$ and $\Delta_a^{1+v}<\frac{1}{12}$, and then
    \[\frac{s_a^{1+v}}{2}+2\Delta_a\gamma_a=2^{\frac{1}{v}}\mu_a^{\frac{1+v}{v}}+2\cdot4^{\frac{1}{v}}\cdot\Delta_a^{\frac{1+v}{v}}<\left(\frac{1}{3}\right)^{\frac{1}{v}}+2\cdot\left(\frac{1}{3}\right)^{\frac{1}{v}}<1,\]
    thus the postulated $\nu_a^\prime$ is reasonable. It is also easy to verify that $$\mathbb{E}[\nu_a^\prime]=\mu_a+2\Delta_a=\mu_1+\Delta_a, u(v_a^\prime)=1.$$
    Then for sufficiently large $T$ we have
    $$\mathcal{R}_{T,\bar{Q}_a}^{\mathcal{A}}\ge\mathbb{P}_{\mathcal{A},\bar{Q}_a}(\mathcal{E}_a)\cdot(T-t_a)\cdot\Delta_a\ge\frac{T\Delta_a}{2}\mathbb{P}_{\mathcal{A},\bar{Q}_a}(\mathcal{E}_a).$$
    Combining with $\mathcal{R}_{T}^{\mathcal{A}} \le T^{\frac{3}{4}}$, we have $$\mathbb{P}_{\mathcal{A},\bar{Q}_a}(\mathcal{E}_a)\le\frac{2}{\Delta_a\cdot T^{\frac{1}{4}}}.$$
    By lemma~\ref{lemma-grpri}, for sufficiently large $T$ such that $T^{0.13}>\max\limits_{\Delta_a>0}\frac{4K}{\Delta_a}$, we have
    \begin{align*}
        \mathbb{P}_{\mathcal{A},\bar{P}}(\mathcal{E}_a)
        &
        \le e^{6\epsilon t_ad_{TV}\left(\nu_a, \nu_a^\prime\right)}\cdot\mathbb{P}_{\mathcal{A},\bar{Q}_a}(\mathcal{E}_a)
        \le e^{6\epsilon t_a\cdot 2\Delta_a\gamma_a}\cdot\frac{2}{\Delta_a T^{\frac{1}{4}}}\\
        &
        \le e^{\frac{12}{100}\log T}\cdot \frac{2}{\Delta_a}T^{-\frac{1}{4}}
        =T^{-0.13}\frac{2}{\Delta_a}<\frac{1}{2K}.
    \end{align*}
    Then, with probability at least $1-K\cdot\frac{1}{2K}=\frac{1}{2}$, $\mathcal{A}$ will pull each $a\ne 1$ at least $t_a$ times. Thus we obtain
    \begin{equation*}
        \mathcal{R}_T\ge\Omega\left(\sum_{\Delta_a>0}{\Delta_a\cdot t_a}\right)\ge\Omega\left(\frac{\log T}{\epsilon}\sum_{\Delta_a>0}{\left(\frac{1}{\Delta_a}\right)^{\frac{1}{v}}}\right).
    \end{equation*}
\end{proof}

\section{OMITTED PROOFS FOR SECTION~\ref{sec:LDP} (local differential privacy)}
\label{sec:appendix-LDP}
\begin{proof}[{of Theorem \ref{DPofLDP}}]
Since each $|\tilde{x}_{a, r}|$ is bounded by $ B_r$. Thus, adding noise of Lap($\frac{B_r}{\epsilon}$) to $\tilde{x}_{a, r}$ guarantees $\epsilon$-LDP.
\end{proof}


\begin{proof}[{of Theorem \ref{UpperBoundLDP}}]
Similar to Lemma \ref{lemma-esterr}, we prove the following lemma.
\begin{lemma}
 For any instance of the $K$-armed MAB problem, denote by $a^*$ the optimal arm and by $\Delta_a$ the gap between the mean of arm $a^*$ and any sub-optimal arm $a\ne a^*$. Fix the time horizon $T$ and confidence level $\beta \in (0,1)$. Then, with probability at least $1-\beta$, in Algorithm~\ref{Algo-LDPSE}, the total number of rounds to pull each sub-optimal arm $a\ne a^*$, denoted by $T_a$, is at most
    \begin{equation}
        \min \left\{T, O\left( \frac{u^{\frac{2}{v}}}{\epsilon^2(\Delta_a)^{\frac{1+v}{v}}}\left(\log\Big(\frac{K}{\beta}\Big)+\log\log\Big(\frac{1}{\Delta_a}\Big)\right)\right)\right\}.
    \end{equation}
\end{lemma}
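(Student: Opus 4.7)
The plan is to adapt the proof of Lemma~\ref{lemma-pulltime} from the central-DP setting to LDP. The key structural difference is that Algorithm~\ref{Algo-LDPSE} injects Laplace noise \emph{per reward} rather than into the empirical mean. So when bounding $|\bar\mu_a-\mu_a|$, the noise contribution is a normalized sum $\tfrac{1}{R_\tau}\sum_r \mathrm{Lap}(2B_\tau/\epsilon)$, which by Lemma~\ref{alemma5} concentrates at a scale proportional to $B_\tau\sqrt{\log(\cdot)}/(\sqrt{R_\tau}\,\epsilon)$; this $\sqrt{R_\tau}$ rate (rather than $R_\tau$) is precisely what turns the $1/\epsilon$ scaling in Lemma~\ref{lemma-pulltime} into $1/\epsilon^2$ here, and the extra $u^{2/v}$ instead of $u^{1/(1+v)}$ in the statement.

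First, for each epoch $\tau$ and each viable arm $a$, I would write
\begin{equation*}
\bar\mu_a-\mu_a \;=\; \tfrac{1}{R_\tau}\!\sum_{r}\!\bigl(\widetilde{x}_{a,r}-\mathbb{E}\widetilde{x}_{a,r}\bigr) \;+\; \bigl(\mathbb{E}\widetilde{x}_{a,r}-\mu_a\bigr) \;+\; \tfrac{1}{R_\tau}\!\sum_r\mathrm{Lap}(2B_\tau/\epsilon),
\end{equation*}
and bound each piece with confidence $1-\beta/(8|\mathcal{S}|\tau^2)$: Bernstein's inequality (Lemma~\ref{alemma3}) for the first using $\mathbb{E}[X^2\mathbbm{1}_{|X|\le B_\tau}]\le uB_\tau^{1-v}$; Lemma~\ref{alemma4} for the truncation bias, giving $u/B_\tau^v$; and Lemma~\ref{alemma5} for the Laplace term. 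A union bound over $a\in\mathcal{S}$ and $\tau\ge 1$ (using $\sum_\tau\tau^{-2}<2$) then yields a global good event $\mathcal{E}$ of probability at least $1-\beta$ on which $|\bar\mu_a-\mu_a|\le c\cdot err_\tau$ for some absolute constant $c$, for every viable arm throughout the execution. The algorithmic choice $B_\tau=(u\sqrt{R_\tau}\,\epsilon/\sqrt{\log(\cdot)})^{1/(1+v)}$ is exactly what equates the truncation-bias term and the Laplace term at $\Theta(err_\tau)$, while the additive $+\log(\cdot)$ inside $R_\tau$ keeps the Bernstein residuals lower order even for small $R_\tau$.

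Next, under $\mathcal{E}$, I would establish the two structural facts needed for the argument: (i) the optimal arm $a^*$ is never eliminated, because $\bar\mu_{a_\tau}-\bar\mu_{a^*}\le(\mu_{a_\tau}-\mu_{a^*})+2c\cdot err_\tau\le 2c\cdot err_\tau$, which stays strictly below the elimination threshold $14\cdot err_\tau$ if the constant $c$ is calibrated against the slack $14$; and (ii) any suboptimal arm whose gap is at least $\Delta_\tau=4^{-\tau}$ is removed at epoch $\tau$, because the large constant $28^{2(1+v)/v}$ in the definition of $R_\tau$ is chosen so that $err_\tau$ becomes small enough to make the test $\bar\mu_{\max}-\bar\mu_a>14\cdot err_\tau$ fire once $\Delta_a\ge\Delta_\tau$.

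Finally, let $\tau(a)$ denote the (random) epoch at which arm $a$ is eliminated. By (ii), $\tau(a)=O(\log(1/\Delta_a))$ and $\Delta_{\tau(a)}$ is within a constant factor of $\Delta_a$. Because $R_\tau$ grows geometrically in $\tau$ (its ratio is bounded below by the factor arising from $\Delta_\tau=4^{-\tau}$ combined with the exponent $2(1+v)/v$), the total pull count satisfies $T_a=\sum_{\tau\le\tau(a)}R_\tau=O(R_{\tau(a)})$; substituting the algorithm's explicit value of $R_{\tau(a)}$ together with the relation between $\Delta_{\tau(a)}$ and $\Delta_a$, and using $|\mathcal{S}|\le K$ and $\tau(a)\le\log_4(2/\Delta_a)$ to convert $\log(|\mathcal{S}|\tau(a)^2/\beta)$ into $\log(K/\beta)+\log\log(1/\Delta_a)$, yields the claimed bound; combining with the trivial estimate $T_a\le T$ completes the argument. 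The main obstacle will be Step~2: verifying with explicit constants that under the precise algorithmic choice of $B_\tau$ all three error pieces collapse to a constant multiple of $err_\tau$, since the balance between the Laplace scale $B_\tau/(\sqrt{R_\tau}\,\epsilon)$ and the bias scale $u/B_\tau^v$ is what drives the final $\epsilon^{-2}$ dependence, and the additive $\log(\cdot)$ addend in $R_\tau$ must be sufficient to keep the Bernstein term subdominant even in the earliest epochs.
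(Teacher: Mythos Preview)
Your proposal is correct and follows essentially the same approach as the paper: decompose the per-epoch estimation error into the Bernstein fluctuation of the truncated rewards, the truncation bias $u/B_\tau^v$, and the normalized Laplace sum (controlled via Lemma~\ref{alemma5}), verify that the algorithmic choice of $B_\tau$ and the additive $\log(8|\mathcal{S}|\tau^2/\beta)$ in $R_\tau$ make all three pieces $O(err_\tau)$, then run the standard SE argument (optimal arm survives, arms with gap $\ge\Delta_\tau$ are eliminated at epoch $\tau$, geometric sum of $R_\tau$). The paper's own proof carries out exactly these steps with explicit constants; your identification of the $\sqrt{R_\tau}$ versus $R_\tau$ distinction as the source of the $1/\epsilon^2$ scaling is precisely the point.
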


We first bound the error of $|\mu_a-\widetilde{\mu}_a|$ for each epoch $\tau$ and each arm $a$. Recall that
\begin{equation}\label{DecompMu}
\left|\widetilde{\mu}_{a}-\mu_{a}\right|=\left|\frac{\sum_{i=1}^{R_\tau}\widetilde{x}_{a, i}}{R_\tau}-\mu_a \right|+\left|\frac{\sum_{i=1}^{R_\tau} Y_{a, i}}{R_\tau}\right|
\end{equation}
where $Y_{a, i} \sim {\rm Lap}\left(2\frac{B_\tau}{\epsilon}\right) $ and $B_\tau=\left(\frac{u\sqrt{R_\tau}\epsilon}{\sqrt{\log\left(8\left|\mathcal{S} \right|\tau^2/\beta\right)}}\right)^{\frac{1}{1+v}}$.
According to Heoffding bound (Lemma \ref{alemma5}), we can get with probability $1-\delta$,$$
\left|\frac{\sum_{i=1}^{R_\tau} Y_{a, i}}{R_\tau}\right| \leq \frac{4\sqrt{\log \frac{2}{\delta}}}{\sqrt{R_\tau}\epsilon}\left(\frac{u\sqrt{R_\tau}\epsilon}{\sqrt{\log\left(8\left|\mathcal{S} \right|\tau^2/\beta\right)}}\right)^{\frac{1}{1+v}}.$$

Setting $\delta= \frac{\beta}{4\left|\mathcal{S} \right|e^2}$, we have $$
\left|\frac{\sum_{i=1}^{R_\tau} Y_{a, i}}{R_\tau}\right| \leq 4u^{\frac{1}{1+v}}\left(\frac{\sqrt{\log\left(8\left|\mathcal{S} \right|\tau^2/\beta\right)}}{\sqrt{R_\tau}\epsilon}\right)^{\frac{v}{1+v}}.$$
Now we consider the first term on the right side of Equation (\ref{DecompMu}). From Lemma \ref{alemma3}, Lemma \ref{alemma4},
noting that $\mathbb{E}\left(X^{2} \mathbb{I}_{|X| \leq B}\right) \leq u B^{1-v}$, we have, with probability at least $1-\delta$
\begin{equation}\nonumber
\begin{aligned}
& \mu_a-\frac{1}{R_\tau} \sum_{i=1}^{R_\tau} \widetilde{x}_{a, i} \\ &=\frac{1}{R_\tau} \sum_{i=1}^{R_\tau}\left(\mu_a-\mathbb{E}\left(X \cdot \mathbb{I}_{|X| \leq B_\tau}\right)\right)+\frac{1}{R_\tau} \sum_{i=1}^{R_\tau}\left(\mathbb{E}\left(X \cdot \mathbb{I}_{|X| \leq B_\tau}\right)-x_{a, i} \mathbb{I}_{\left|x_{a, i}\right| \leq B_\tau}\right) \\
&=\frac{1}{R_\tau} \sum_{i=1}^{R_\tau} \mathbb{E}\left(X \cdot \mathbb{I}_{|X|>B_\tau}\right)+\frac{1}{R_\tau} \sum_{i=1}^{R_\tau}\left(\mathbb{E}\left(X \cdot \mathbb{I}_{|X| \leq B_\tau}\right)-x_{a, i}\mathbb{I}_{\left|x_{a, i}\right| \leq B_\tau}\right) \\
& \leq \frac{u}{B_\tau^{v}}+\sqrt{\frac{2 B_\tau^{1-v} u \log \left(\delta^{-1}\right)}{R_\tau}}+\frac{B_\tau \log \left(\delta^{-1}\right)}{3 R_\tau}.
\end{aligned}
\end{equation}
Taking $\delta=\frac{\beta}{8\left|\mathcal{S} \right|\tau^2}$, we have
\begin{align*}
    \frac{u}{B_\tau^{v}}\leq u^{\frac{1}{1+v}}\left(\frac{\sqrt{\log\left(8\left|\mathcal{S} \right|\tau^2/\beta\right)}}{\sqrt{R_\tau}\epsilon}\right)^{\frac{v}{1+v}},
\end{align*}
\begin{align*}
    \sqrt{\frac{2 B_\tau^{1-v} u \log \left(\delta^{-1}\right)}{R_\tau}}\leq   \sqrt{\frac{2 B_\tau^{1-v} u \log \left(\delta^{-1}\right)}{\sqrt{R_\tau}}}\sqrt{\frac{\sqrt{\log \frac{1}{\delta}}}{\sqrt{R_\tau}}}\leq  \sqrt{2}u^{\frac{1}{1+v}}\left(\frac{\sqrt{\log\left(8\left|\mathcal{S} \right|\tau^2/\beta\right)}}{\sqrt{R_\tau}\epsilon}\right)^{\frac{v}{1+v}},
\end{align*}
where the last inequality is due to the fact that $R_\tau \geq \log\left(8\left|\mathcal{S} \right|\tau^2/\beta\right).$ Moreover we have
\begin{align*}
    \frac{B_\tau \log \left(\delta^{-1}\right)}{3 R_\tau}\leq   \frac{B_\tau \sqrt{\log \left(\delta^{-1}\right)}}{3\sqrt{R_\tau}} \frac{\sqrt{\log \left(\delta^{-1}\right)}}{\sqrt{R_\tau}}\leq \frac{1}{3} u^{\frac{1}{1+v}}\left(\frac{\sqrt{\log\left(8\left|\mathcal{S} \right|\tau^2/\beta\right)}}{\sqrt{R_\tau}\epsilon}\right)^{\frac{v}{1+v}}.
\end{align*}
Thus, in total we have $\left|\frac{\sum_{i=1}^{R_\tau}\widetilde{X}_{a,i}}{R_\tau}-\mu_k \right| \leq 3u^{\frac{1}{1+v}}\left(\frac{\sqrt{\log\left(8\left|\mathcal{S} \right|\tau^2/\beta\right)}}{\sqrt{R_\tau}\epsilon}\right)^{\frac{v}{1+v}}.$ Taking union bound yields that with probability at least $1-\frac{3\beta}{8\left|\mathcal{S} \right|\tau^2}$,
\begin{equation}\label{eq-perarm}
    \left|\widetilde{\mu}_{a}-\mu_{a}\right| \leq 7u^{\frac{1}{1+v}}\left(\frac{\sqrt{\log\left(8\left|\mathcal{S} \right|\tau^2/\beta\right)}}{\sqrt{R_\tau}\epsilon}\right)^{\frac{v}{1+v}}.
\end{equation}
Denote by $\mathcal{E}_\tau$ the event where for all arms $a\in\mathcal{S}$, (\ref{eq-perarm}) holds and denote $\mathcal{E}=\cup_{\tau\ge1}\mathcal{E}_{\tau}$. Taking the union for all epochs and arms, we have $\mathcal{E}$ holds w.p. $1-\frac{3\beta}{8}\sum_{\tau \geq 1} \tau^{-2} \geq 1-\beta. $ As a result, for any epoch $\tau$ and any two arms $i,j \in \mathcal{S}$ we have that $$\left| (\widetilde{\mu}_i-\widetilde{\mu}_j)-(\mu_i-\mu_j)\right| \leq 14u^{\frac{1}{1+v}}\left(\frac{\sqrt{\log\left(8\left|\mathcal{S} \right|\tau^2/\beta\right)}}{\sqrt{R_\tau}\epsilon}\right)^{\frac{v}{1+v}}.$$

Next, we show that under $\mathcal{E}$, in each epoch, the optimal arm $a^*$ will not be eliminated. Let $a_{\tau}=\operatorname{argmax}_{a \in \mathcal{S}} \widetilde{\mu}_{a}$, then in the epoch $\tau$, $$\widetilde{\mu}_{a_\tau}-\widetilde{\mu}_{a^*}+\Delta_{a_\tau} \leq14u^{\frac{1}{1+v}}\left(\frac{\sqrt{\log\left(8\left|\mathcal{S} \right|\tau^2/\beta\right)}}{\sqrt{R_\tau}\epsilon}\right)^{\frac{v}{1+v}},$$ so the algorithm doesn't eliminate $\mu^*$.

Next, we show that under $\mathcal{E}$, in each epoch $\tau$, we eliminate all arms with sub-optimality gap $\geq 2^{-\tau} = D_\tau$. Fix a sub-optimal arm $a$ such that $\Delta_a \geq D_\tau$. By the definition of $R_\tau$, We know that $14u^{\frac{1}{1+v}}\left(\frac{\sqrt{\log\left(8\left|\mathcal{S} \right|\tau^2/\beta\right)}}{\sqrt{R_\tau}\epsilon}\right)^{\frac{v}{1+v}} \leq \frac{D_\tau}{2}$. Therefore, since arm $a^*$ remains viable, we have that \[\begin{aligned}
\widetilde{\mu}_{a_\tau}-\widetilde{\mu}_a \geq \widetilde{\mu}_{a^*}-\widetilde{\mu}_a &\geq \Delta_a-14u^{\frac{1}{1+v}}\left(\frac{\sqrt{\log\left(8\left|\mathcal{S} \right|\tau^2/\beta\right)}}{\sqrt{R_\tau}\epsilon}\right)^{\frac{v}{1+v}}\\
&\geq \Delta_\tau- \frac{\Delta_\tau}{2} \geq \frac{\Delta_\tau}{2} \geq 14u^{\frac{1}{1+v}}\left(\frac{\sqrt{\log\left(8\left|\mathcal{S} \right|\tau^2/\beta\right)}}{\sqrt{R_\tau}\epsilon}\right)^{\frac{v}{1+v}},
\end{aligned}\]
which ensures that arm $a$ is removed from $\mathcal{S}$.

Lastly, for any fixed sub-optimal arm $a$, let $\tau(a)$ be the first epoch s.t. $\Delta_a \geq D^2_{\tau(a)}$, implying $D^2_{\tau(a)} \leq \Delta_a < D^2_{\tau(a)-1}=4D^2_{\tau(a)}$. For any epoch $\tau$, we have $R_{\tau+1} \geq 4^{-\frac{1+v}{v}} R_\tau$, we have that the total number of pulls of arm $a$ is
\begin{equation}\nonumber
\begin{aligned}
\sum_{\tau \leq \tau(a)} R_{\tau} &\leq \sum_{\tau \leq \tau(a)} \left(4^{-\frac{1+v}{v}}\right)^{\tau-\tau(a)} R_{\tau(a)}
\\&\leq R_{\tau(a)} \sum_{i \geq 0} \left(4^{- \frac{1+v}{v}}\right)^i \\
&\leq \frac{1}{1-4^{- \frac{1+v}{v}}}\left(\frac {28^{\frac{2(1+v)}{v}} \log \left(8\left|\mathcal{S} \right|\tau^2(a)/\beta\right)}{\epsilon^2D_{\tau(a)}^{\frac{2(1+v)}{v}}}u^{\frac{2}{v}}+\log \left(8\left|\mathcal{S} \right|\tau^2(a)/\beta\right)\right)\\
&\leq O\left(\frac { \log \left(\left|\mathcal{S} \right|\tau^2(a)/\beta\right)}{\epsilon^2\Delta_{a}^{\frac{1+v}{v}}}u^{\frac{2}{v}}\right)
\end{aligned}
\end{equation}
Note that $\tau(a)=O(\log \frac{1}{\Delta_a})$, thus the algorithm pulls sub-optimal arm $a$ for a number of timesteps is bounded by $ O\left( \frac {\left(\log T + \log \log \frac{1}{\Delta_a} \right)}{\epsilon^2\Delta_{a}^{\frac{1+v}{v}}}u^{\frac{2}{v}}\right)$ with probability $1-\frac{1}{T}.$ Thus the regret is bounded by
\begin{equation*}
    \mathcal{R}_T\leq (1-\frac{1}{T}) O\left( \frac {\left(\log T + \log \log \frac{1}{\Delta_a} \right)}{\epsilon^2\Delta_{a}^{\frac{1}{v}}}u^{\frac{2}{v}}\right)+\frac{1}{T} T\max_{a}\Delta_a.
\end{equation*}
The proof of the instance-independent regret is almost the same as the proof of Theorem \ref{thm:4}, we omit it here.
\end{proof}


Before proofing the lower bounds in the LDP model, we first recall the two useful lemmas provided by \citep{basu2019differential}. Let $\mathcal{H}_T\triangleq\{(a_i,x_i)\}_{i=1}^T$ be the observed history produced by the interaction between the algorithm $\mathcal{A}$ and bandit problem instance $\mathcal{P}$ up to round $T$. Obviously, an observed history $\mathcal{H}_T$ is a random variable sampled from the measurable space $(([K]\times\mathbb{R})^T, \mathcal{B}(([K]\times\mathbb{R})^T), \mathbb{P}_{\mathcal{A}P})$, where $\mathcal{B}(([K]\times\mathbb{R})^T)$ is the Borel set on $([K]\times\mathbb{R})^T$ and $\mathbb{P}_{\mathcal{A}P}$ is the probability measure induced by the algorithm $A$ and the instance $P$. The probability measure $\mathbb{P}_{\mathcal{A}P}$ satisfies the following two properties,
\begin{enumerate}
    \item the probability of selecting arm $a$ at time $t$ is dictated only by the algorithm $\mathcal{A}$, and we denote the probability by $\mathcal{A}(a|\mathcal{H}_{t-1})$.
    \item the reward distribution of arm $a_t$, denoted by $\mathcal{X}_{a_t}$, is independent of the previous observed history $\mathcal{H}_{t-1}$.
\end{enumerate}

Hence, for any observed history $\mathcal{H}_T$, we have
\begin{equation}\label{eq-measure}
    \mathbb{P}^T_{\mathcal{A}P}\triangleq\mathbb{P}_{\mathcal{A}P}(\mathcal{H}_T)=\prod_{t=1}^T\mathcal{A}(a_t|\mathcal{H}_{t-1})\mathcal{X}_{a_t}(x_t).
\end{equation}

The next lemma states the KL-divergence decomposition for canonical bandit framework. Intuitively, by the decomposition, we separate the effect of the algorithm and the reward generation.

\begin{lemma}\label{KLDecomp}
(KL-divergence Decomposition). Given a bandit algorithm $\mathcal{A}$, two distinct instances $P_1,P_2$ and a probability measure $\mathbb{P}_{\mathcal{A}P}$ satisfying (\ref{eq-measure}). Then
\begin{equation}\label{eq-kldecom}
    {\rm KL}\left(\mathbb{P}_{\mathcal{A}P_1}^{T} \| \mathbb{P}_{\mathcal{A}P_2}^{T}\right)=\sum_{t=1}^{T} \mathbb{E}_{\mathcal{A}P_1}\left[{\rm KL}\left(\mathcal{A}\left(a_{t} \mid \mathcal{H}_{t}, P_{1}\right) \| \mathcal{A}\left(a_{t} \mid \mathcal{H}_{t}, P_{2}\right)\right)\right]+\sum_{a=1}^{K} \mathbb{E}_{\mathcal{A}P_1}\left[N_{a}(T)\right] {\rm KL}\left(\mathcal{X}_a^1 \|\mathcal{X}_a^2\right).
\end{equation}
where we use $\mathcal{X}_a^1$ and $\mathcal{X}_a^2$ to represent the reward distributions of arm $a$ in instance $P_1$ and $P_2$ respectively, and $N_a(T)$ is the times of pulling arm $a$ among the $T$ rounds.
\end{lemma}
For locally differentially private bandit algorithms, the first term on the LHS of (\ref{eq-kldecom}) vanishes since given the same history $\mathcal{H}_{t}$, $A\left(a_{t} \mid \mathcal{H}_{t}, P_{1}\right)$ and $A\left(a_{t} \mid \mathcal{H}_{t}, P_{2}\right)$ should be the same as they depends only on the internal randomness of the algorithm $\mathcal{A}$. The following lemma is about the locally private KL-divergence decomposition.

\begin{lemma}\label{LDPKLDecomp}
(Locally Private KL-divergence Decomposition). If the reward generation process is $\epsilon$-local differentially private for both the instance $P_1$ and $P_2$,  Then we have
\begin{equation}
    {\rm KL}\left(\mathbb{P}_{\mathcal{A}P_1}^{T} \| \mathbb{P}_{\mathcal{A}P_2}^{T}\right) \leq 2 \min \left\{4, e^{2 \epsilon}\right\}\left(e^{\epsilon}-1\right)^{2} \sum_{a=1}^{K} \mathbb{E}_{\mathcal{A}P_1}\left[N_{a}(T)\right] {\rm KL}\left(\mathcal{X}^1_{a} \| \mathcal{X}^2_{a}\right).
\end{equation}
\end{lemma}

Next, we prove Theorem \ref{LDPInstDepend}, Theorem \ref{LDPKIDLB} and Theorem \ref{LDPIndepLower}. We use $\mathcal{R}^\mathcal{A}_{T, P}$ to represent the regret of algorithm $\mathcal{A}$ under the instance $P$ with total $T$ rounds.

\begin{theorem}\label{LDPInstDepend}
(2-Armed LDP Instance-dependent Lower Bound). There exists a heavy-tailed two-armed  bandit instance with $u\le 1$ in (\ref{eq:2}) and $\Delta\triangleq\mu_1-\mu_2 \in (0,\frac{1}{5})$, such that for any $\epsilon$-LDP algorithm with $\epsilon \in (0,1]$ and regret $\leq o(T^\alpha)$ for any $\alpha > 0$, the regret satisfies
\[
\liminf _{T \rightarrow \infty} \frac{\mathcal{R}_T}{\log T}\geq  \Omega \left(\frac{1}{\epsilon^2 \Delta^{\frac{1}{v}}} \right).
\]
\end{theorem}
\begin{proof}[{of Theorem \ref{LDPInstDepend}}]
Consider the following instance $\bar P_1$:
the distribution of the first arm $a_1$ is $$\nu_{1}=\left(1-\frac{\gamma^{1+v}}{2}\right) \delta_{0}+\frac{\gamma^{1+v}}{2} \delta_{1 / \gamma}$$ with $\gamma=(5 \Delta)^{\frac{1}{v}} (\Delta \in \left(0,\frac{1}{5} \right))$, and the distribution of the second arm $a_2$ is $$\nu_{2}=\left(1-\left(\frac{\gamma^{1+v}}{2}-\Delta \gamma \right)\right) \delta_{0}+\left(\frac{\gamma^{1+v}}{2}-\Delta \gamma \right) \delta_{1 / \gamma}.$$ Thus, $$\begin{aligned}
 \mathbb{E}[\nu_1]&=\frac{5}{2}\Delta,  &u(\nu_1)&= \left(\frac{1}{\gamma}\right)^{1+v} \cdot \frac{\gamma^{1+v}}{2} =\frac{1}{2}<1,\\  \mathbb{E}[\nu_2]&=\frac{3}{2}\Delta,   &u(\nu_2)&= \left(\frac{1}{\gamma}\right)^{1+v} \cdot \left(\frac{\gamma^{1+v}}{2}-\Delta \gamma \right)<1.
\end{aligned}$$

Suppose we have another instance $\bar P_2 $: the distribution of the first arm $\nu_1^\prime$ is the same as $\nu_1$, and the distribution of the second arm is $$\nu_{2}^\prime=\left(1-\left(\frac{\gamma^{1+v}}{2}+\Delta \gamma \right)\right) \delta_{0}+\left(\frac{\gamma^{1+v}}{2}+\Delta \gamma \right) \delta_{1 / \gamma}.$$ Then $$\mathbb{E}[\nu_{2}^\prime]=\frac{7}{2}\Delta ,\ \ \  u(\nu_{2}^\prime)= \left(\frac{1}{\gamma}\right)^{1+v}\cdot \left(\frac{\gamma^{1+v}}{2}+\Delta \gamma \right) = \frac{1}{2}+\frac{\Delta}{\gamma^v}=\frac{7}{10}.$$

Since $\mathcal{R}_{T}=\sum \limits _{a:\Delta_a>0} \Delta_{a} \mathbb{E}\left[N_{a}(T)\right]$, we have
$$\mathbb{E}[\mathcal{R}^\mathcal{A}_{T, \bar{P}_1}] \geq \mathbb{P}^T_{\mathcal{A}\bar{P_1}}\left(N_2(T)\geq \frac{T}{2} \right) \cdot \left(\frac{5}{2}\Delta-\frac{3}{2}\Delta \right)\cdot\frac{T}{2}=\frac{T\Delta}{2} \cdot \mathbb{P}^T_{\mathcal{A}\bar{P}_{1}}\left(N_2(T)\geq \frac{T}{2}\right),$$
$$\mathbb{E}[\mathcal{R}^\mathcal{A}_{T, \bar{P}_2}] \geq \mathbb{P}^T_{\mathcal{A}\bar{P_2}}\left(N_2(T)\leq \frac{T}{2} \right) \cdot \left(\frac{7}{2}\Delta-\frac{5}{2}\Delta \right)\cdot\frac{T}{2}=\frac{T\Delta}{2} \cdot \mathbb{P}^T_{\mathcal{A}\bar{P}_{2}}\left(N_2(T)\leq \frac{T}{2}\right).$$

Thus, by Bretagnolle-Huber inequality~\citep[Theorem 14.2]{lattimore2020bandit}, we obtain
\begin{equation*}
\begin{aligned}
\mathbb{E}[\mathcal{R}^\mathcal{A}_{T, \bar{P}_1}]+\mathbb{E}[\mathcal{R}^\mathcal{A}_{T, \bar{P}_2}] &\geq \frac{T\Delta}{2}\left[\mathbb{P}^T_{\mathcal{A}\bar{P}_{1}}\left(N_2(T)\geq \frac{T}{2}\right)+\mathbb{P}^T_{\mathcal{A}\bar{P}_{2}}\left(N_2(T)\leq \frac{T}{2}\right) \right]\\
&\geq \frac{T\Delta}{4} \cdot \exp{\left(-{\rm KL}(\mathbb{P}^T_{\mathcal{A}\bar{P}_{1}}\|\mathbb{P}^T_{\mathcal{A}\bar{P}_{2}})\right)}.
\end{aligned}
\end{equation*}
By Lemma \ref{LDPKLDecomp}, we have ${\rm KL}(\mathbb{P}^T_{\mathcal{A}\bar{P}_{1}}\|\mathbb{P}^T_{\mathcal{A}\bar{P}_{2}}) \leq 8\left(e^\epsilon-1 \right)^2 \cdot \mathbb{E}_{\mathcal{A} \bar P_1}[N_2(T)]\cdot {\rm KL}(\nu_2 \|\nu_2^\prime)$. Thus,
\[
    \mathbb{E}[\mathcal{R}^\mathcal{A}_{T, \bar{P}_1}]+\mathbb{E}[\mathcal{R}^\mathcal{A}_{T, \bar{P}_2}] \geq \frac{T \Delta}{4}\exp{\left(-8\left(e^\epsilon-1 \right)^2 \cdot \mathbb{E}_{\mathcal{A}\bar P_1}[N_2(T)]\cdot {\rm KL}(\nu_2 \|\nu_2^\prime) \right)}.
\]
Then, we obtain
\begin{equation}\label{eq-EN2T}
    \mathbb{E}_{\mathcal{A} \bar P_1}[N_2(T)] \geq \frac{\log \frac{T\Delta}{4}-\log\left(\mathbb{E}[\mathcal{R}^\mathcal{A}_{T, \bar{P}_1}]+\mathbb{E}[\mathcal{R}^\mathcal{A}_{T, \bar{P}_2}] \right)}{8\left(e^\epsilon-1 \right)^2 \cdot {\rm KL}(\nu_2 \|\nu_2^\prime)}\geq \frac{\log \frac{T\Delta}{4}-2\alpha \log T}{8\epsilon^2 \cdot \rm KL(\nu_2 \|\nu_2^\prime)},
\end{equation}
where the last inequality is due to the assumption of sub-linear regret $\mathcal{R}_T \leq o(T^\alpha)$ and the fact that $e^\epsilon-1 \approx \epsilon$ when $\epsilon$ is small.

By using ${\rm KL}\left(\operatorname{Ber}(p) \|  \operatorname{Ber}(q)\right) \leq \frac{(p-q)^{2}}{q(1-q)}$, we obtain
\begin{align*}
    {\rm KL}(\nu_2 \|\nu_2^\prime) &=\rm{KL} \left(\rm{Ber}\left(\frac{\gamma^{1+\it v}}{2}-\Delta\gamma\right)\Big\| \rm{Ber} \left(\frac{\gamma^{1+\it v}}{2}+\Delta\gamma\right)\right)\\
    &\leq \frac{(2\Delta\gamma)^2}{\left(\frac{\gamma^{1+\it v}}{2}+\Delta\gamma\right)\cdot \left(1-\left(\frac{\gamma^{1+\it v}}{2}+\Delta\gamma\right)\right)}.
\end{align*}

Note that $\gamma=(5\Delta)^\frac{1}{v}$, we get
\begin{align*}
    \rm{KL}(\nu_2 \|\nu_2^\prime) & \leq \frac{\left(2\cdot 5^{\frac{1}{\it v}} \cdot \Delta^{\frac{1+\it v}{\it v}}\right)^2}{\left(\frac{5^{\frac{1+\it v}{\it v}}\cdot \Delta^{\frac{1+\it v}{\it v}}}{2}+5^{\frac{1}{\it v}}\cdot \Delta^{\frac{1+\it v}{\it v}}\right)\left(1-\frac{7}{2}\cdot 5^{\frac{1}{\it v}}\Delta^{\frac{1+\it v}{\it v}}\right)} \\
    &\leq \frac{4\cdot 5^{\frac{1}{\it v}} \cdot \Delta^{\frac{1+\it v}{\it v}}}{\frac{7}{2}\left(1-\frac{7}{2}\cdot 5^{\frac{1}{\it v}}\Delta^{\frac{1+\it v}{\it v}}\right)}\leq C \cdot 5^{\frac{1}{\it v}} \cdot \Delta^{\frac{1+\it v}{\it v}},
\end{align*}
where $C$ is some constant and the last inequality holds when $\Delta\in(0,\frac{1}{5})$ is sufficiently small.

Thus, according to (\ref{eq-EN2T}),
\[
    \liminf _{T \rightarrow \infty} \frac{\mathbb{E}_{\mathcal{A} \bar P_1}[N_2(T)]}{\log T} \geq \Omega\left(\frac{1}{5^{\frac{1}{\it v}} \Delta^{\frac{1+\it v}{\it v}}\epsilon^2}\right),
\]
then,
\[
  \liminf _{T \rightarrow \infty} \frac{\mathcal{R}^\mathcal{A}_{T, \bar{P}_1}}{\log T} \geq \liminf _{T \rightarrow \infty} \frac{\mathbb{E}_{\mathcal{A} \bar P_1}[N_2(T)]}{\log T}\cdot \Delta \ge \Omega\left(\frac{1}{5^{\frac{1}{\it v}} \Delta^{\frac{1}{\it v}}\epsilon^2}\right).
\]
\end{proof}

\begin{proof}[{of Theorem \ref{LDPKIDLB}}]
We focus on the $K$ arms with mean reward satisfying $\frac{1}{2}\ge\mu_1\ge\cdots\ge\mu_K$ and $\frac{1}{5}\mu_a\le\Delta_a\le\frac{1}{2}\mu_a$. Consider the instance $\bar{P}$ where the distribution for each arm $a\in[K]$ is
\[
    \nu_a=\left(1-\frac{s_a^{1+v}}{2}\right)\delta_0+\frac{s_a^{1+v}}{2}\delta_{1/s_a},
\]
where $s_a=(2\mu_a)^{\frac{1}{v}}$. It is easy to verify for each $a\in[K]$ that
\[
    \mathbb{E}[\nu_a]=\mu_a, u(\nu_a)=\frac{1}{2}<1.
\]
Then consider another instance $\bar{Q}_a$, where the reward distribution of any arm $a^\prime\neq a$ remains unchanged and the reward distribution of $a$ becomes
\[
    \nu_a^\prime=[1-(\frac{s_a^{1+v}}{2}+2\Delta_as_a)]\delta_0+(\frac{s_a^{1+v}}{2}+2\Delta_as_a)\delta_{1/s_a}.
\]
Note that, $\frac{s_a^{1+v}}{2}+2\Delta_as_a=2^{\frac{1}{v}}\mu_a^{\frac{1+v}{v}}+2^{\frac{1+v}{v}}\Delta_a\mu_a^{\frac{1}{v}}\le\mu_a+2\Delta_a=\mu_1+\Delta_a\le1$, where the first inequality is due to $\mu_a\le\frac{1}{2}$, hence the postulated $\nu_a^\prime$ is reasonable.

For $\nu_a^\prime$, we have $\mathbb{E}[\nu_a^\prime]=\mu_a+2\Delta_a=\mu_1+\Delta_a$ and $u(\nu_a^\prime)=\frac{1}{2}+\frac{\Delta_a}{\mu_a}\le1$, where the inequality is due to $\Delta_a\le\frac{1}{2}\mu_a$.

Since $\mathcal{R}_T=\sum\limits_{a:\Delta_a>0}\Delta_a\mathbb{E}[N_a(T)]$, we have
\begin{equation*}
    \mathbb{E}[\mathcal{R}^\mathcal{A}_{T,\bar{P}}]\ge\mathbb{P}^T_{\mathcal{A}\bar{P}}\left(N_a(T)\ge\frac{T}{2}\right)\cdot\Delta_a\cdot\frac{T}{2}=\frac{T}{2}\Delta_a\mathbb{P}^T_{\mathcal{A}\bar{P}}\left(N_a(T)>\frac{T}{2}\right),
\end{equation*}
\begin{equation*}
    \mathbb{E}[\mathcal{R}^\mathcal{A}_{T,\bar{Q}_a}]\ge\mathbb{P}^T_{\mathcal{A}\bar{Q}_a}\left(N_a(T)\le\frac{T}{2}\right)\cdot\Delta_a\cdot\frac{T}{2}=\frac{T}{2}\Delta_a\mathbb{P}^T_{\mathcal{A}\bar{Q}_a}\left(N_a(T)\le\frac{T}{2}\right).
\end{equation*}
By Bretagnolle-Huber inequality~\citep[Theorem 14.2]{lattimore2020bandit}, we obtain
\begin{align*}
\mathbb{E}[\mathcal{R}^\mathcal{A}_{T, \bar{P}}]+\mathbb{E}[\mathcal{R}^\mathcal{A}_{T, \bar{Q}_a}] &\ge \frac{T\Delta_a}{2}\left[\mathbb{P}^T_{\mathcal{A}\bar{P}}\left(N_a(T)\geq \frac{T}{2}\right)+\mathbb{P}^T_{\mathcal{A}\bar{Q}_{a}}\left(N_a(T)\leq \frac{T}{2}\right) \right]\\
&\geq \frac{T\Delta_a}{4} \cdot \exp{\left(-{\rm KL}(\mathbb{P}^T_{\mathcal{A}\bar{P}}\|\mathbb{P}^T_{\mathcal{A}\bar{Q}_{a}})\right)}.
\end{align*}\
Due to Lemma \ref{LDPKLDecomp}, we have ${\rm KL}(\mathbb{P}^T_{\mathcal{A}\bar{P}}\|\mathbb{P}^T_{\mathcal{A}\bar{Q}_{a}}) \leq 8\left(e^\epsilon-1 \right)^2 \cdot \mathbb{E}_{\mathcal{A} \bar P}[N_a(T)]\cdot {\rm KL}(\nu_a \|\nu_a^\prime)$. Thus we obtain that
\begin{equation*}
    \mathbb{E}[\mathcal{R}^\mathcal{A}_{T, \bar{P}}]+\mathbb{E}[\mathcal{R}^\mathcal{A}_{T, \bar{Q}_a}] \ge \frac{T\Delta_a}{4} \cdot \exp{\left(-8\left(e^\epsilon-1 \right)^2 \cdot \mathbb{E}_{\mathcal{A} \bar P}[N_a(T)]\cdot {\rm KL}(\nu_a \|\nu_a^\prime)\right)},
\end{equation*}
which gives that
\begin{equation}\label{eq-ENAT}
    \mathbb{E}_{\mathcal{A}\bar{P}}[N_a(T)]\ge\frac{\log\frac{T\Delta_a}{4}-\log(\mathbb{E}[\mathcal{R}^\mathcal{A}_{T,\bar{P}}]+\mathbb{E}[\mathcal{R}^\mathcal{A}_{T,\bar{Q}_a}])}{8(e^\epsilon-1)^2\cdot {\rm KL}(\nu_a\|\nu_a^\prime)}\ge\frac{\log\frac{T\Delta_a}{4}-2\alpha\log T}{8\epsilon^2\cdot {\rm KL}(\nu_a\|\nu_a^\prime)},
\end{equation}
where the last inequality is due to the assumption of sub-linear regret $\mathcal{R}_T \leq o(T^\alpha)$ and the fact that $e^\epsilon-1 \approx \epsilon$ when $\epsilon$ is small.

By using the fact that ${\rm KL}\left(\operatorname{Ber}(p) \|  \operatorname{Ber}(q)\right) \leq \frac{(p-q)^{2}}{q(1-q)}$, we can obtain that
\begin{align*}
    {\rm KL}(\nu_a \|\nu_a^\prime) &={\rm KL} \left({\rm Ber}\left(\frac{s_a^{1+v}}{2}\right)\Big\| {\rm Ber} \left(\frac{s_a^{1+v}}{2}+2\Delta_a s_a\right)\right)\\
    &\leq \frac{(2\Delta\gamma)^2}{\left(\frac{\gamma^{1+\it v}}{2}+\Delta\gamma\right)\cdot \left(1-\left(\frac{\gamma^{1+\it v}}{2}+\Delta\gamma\right)\right)}\\
    &\le \frac{25^\frac{1}{v}\Delta_a^{\frac{1+v}{v}}}{1-7\cdot10^{\frac{1}{v}}\Delta_a^{\frac{1+v}{v}}}\le C\cdot(25)^{\frac{1}{v}}\Delta_a^{\frac{1+v}{v}},
\end{align*}
where $C$ is some constant and the last inequality holds since $\frac{1}{5}\mu_a \le \Delta_a\le\frac{1}{2}\mu_a$ and $\mu_a\le\frac{1}{2}$.
Thus, according to (\ref{eq-ENAT}), we have
\begin{equation*}
    \liminf _{T \rightarrow \infty} \frac{\mathbb{E}_{\mathcal{A} \bar P}[N_a(T)]}{\log T}\ge\Omega\left(\frac{1}{\epsilon^2\Delta_a^{\frac{1+v}{v}}}\right),
\end{equation*}
then,
\begin{equation*}
    \liminf _{T \rightarrow \infty} \frac{\mathcal{R}^\mathcal{A}_{T, \bar{P}}}{\log T} \geq \liminf _{T \rightarrow \infty}
    \sum_{a:\Delta_a>0}\frac{\mathbb{E}_{\mathcal{A} \bar P}[N_a(T)]\cdot \Delta_a}{\log T} \ge \Omega \left(\frac{1}{\epsilon^2}\sum_{\Delta_a>0}(\frac{1}{\Delta_a})^{\frac{1}{v}} \right).
\end{equation*}
\end{proof}

\begin{proof}[{of Theorem \ref{LDPIndepLower}}]
We first define the instance $\bar{P}_1$. In $\bar{P}_1$, the optimal arm (denoted by $a_1$) follows the reward distribution
\[
    \nu_1=\left(1-\frac{\gamma^{1+v}}{2}\right)\delta_0 +\frac{\gamma^{1+v}}{2}\delta_{1/\gamma},
\]
where $\gamma=(5\Delta)^{\frac{1}{v}} (\Delta \in \left(0,\frac{1}{5} \right))$. Note that $\mathbb{E}[\nu_1]=\frac{5}{2}\Delta,u(\nu_1)=\frac{1}{2}$.

 Any other sub-optimal arm $a\neq a_1$ in $\bar{P}_1$ follows the same reward distribution
 \[
    \nu_a=\left(1-\frac{\gamma^{1+v}}{2}+\Delta\gamma\right)\delta_0 +\left(\frac{\gamma^{1+v}}{2}-\Delta\gamma\right)\delta_{1/\gamma}.
\]
Note that for all $a\neq a_1$ $\mathbb{E}[\nu_a]=\frac{3}{2}\Delta$, $u(\nu_a)=\frac{1}{2}-\frac{1}{5}=\frac{3}{10}<1$.
We denote the corresponding locally private reward distribution for each arm $a\in[K]$ as $\bar \nu_a$.

For algorithm $\mathcal{A}$ and instance $\bar P_1$, we denote $
    i={\arg \min} _{a\in \{2,\cdots,K\}}\mathbb{E}_{\mathcal{A}\bar{P}_1}[N_a(T)].$
Thus, $\mathbb{E}_{\mathcal{A}\bar{P}_1}[N_i(T)]\leq\frac{T}{K-1}$.

Now, consider another instance $\bar{P}_i$ where $\nu_1,\cdots,\nu_K$ are the same as those in $\bar P_1$ except the $i$-th arm such that
\[
    \nu_i^\prime=\left(1-\frac{\gamma^{1+v}}{2}-\Delta\gamma\right)\delta_0 +\left(\frac{\gamma^{1+v}}{2}+\Delta\gamma\right)\delta_{1/\gamma}.
\]
Note that now $\mathbb{E}[\nu_i^\prime]=\frac{7}{2}\Delta$, $u(\nu_i^\prime)=\frac{7}{10}<1$. Similarly, we denote the corresponding locally private reward distribution for arm $i$ as $\bar {\nu}_i^\prime$.

Thus, $$\mathbb{E}[\mathcal{R}^\mathcal{A}_{T, \bar{P}_1}]\geq \mathbb{P}^T_{\mathcal{A}\bar P_1} \left[ N_i(T) \geq \frac{T}{2} \right]\frac{T}{2}\Delta,$$
$$\mathbb{E}[\mathcal{R}^\mathcal{A}_{T, \bar{P}_i}]\geq \mathbb{P}^T_{\mathcal{A}\bar P_i} \left[ N_i(T) \leq \frac{T}{2} \right]\frac{T}{2}\Delta.$$
Thus by Bretagnolle-Huber inequality~\citep[Theorem 14.2]{lattimore2020bandit} and Lemma \ref{LDPKLDecomp} we have,
\begin{align*}
    \mathbb{E}[\mathcal{R}^\mathcal{A}_{T, \bar{P}_1}]+\mathbb{E}[\mathcal{R}^\mathcal{A}_{T, \bar{P}_i}] &\geq \frac{T\Delta}{4}\exp{\left(- {\rm KL}(\mathbb{P}^T_{\mathcal{A}\bar{P}_1} \| \mathbb{P}^T_{\mathcal{A}\bar{P}_i})\right)} \\
    &\ge \frac{T\Delta}{4}\exp{\left(- \mathbb{E}_{\mathcal{A}\bar{P}_1}[N_i(T)] \cdot \rm KL(\bar{\nu}_i \| \bar{\nu}_i^\prime)\right)}\\
    &\geq \frac{T\Delta}{4}\exp{\left(-8(\rm{e}^\epsilon-1)^2 \cdot \mathbb{E}_{\mathcal{A}\bar{P}_1}[N_i(T)] \cdot \rm KL(\nu_i \| \nu_i^\prime)\right)}.
\end{align*}

Since $$\rm KL(\nu_i \| \nu_i^\prime) \leq \frac{(2\Delta\gamma)^2}{\left(\frac{\gamma^{1+\it v}}{2}+\Delta\gamma\right)\left(1-\left(\frac{\gamma^{1+\it v}}{2}+\Delta\gamma\right)\right)} \leq C\cdot 5^{\frac{1}{\it v}}\cdot \Delta^{\frac{1+\it v}{\it v}},$$
for some constant $C>0$ and $\Delta$ is sufficiently small.

We obtain
\[
    \mathbb{E}[\mathcal{R}^\mathcal{A}_{T, \bar{P}_1}]\geq \frac{T\Delta}{8}\exp{\left(-8\epsilon^2\cdot\frac{T}{K-1}\cdot C\cdot 5^{\frac{1}{v}}\cdot \Delta^{\frac{1+v}{v}}\right)}.
\]
Taking $\Delta=\left(\frac{K}{T\epsilon^2}\right)^{\frac{v}{1+v}}$, we get the result
$$\mathbb{E}[\mathcal{R}^\mathcal{A}_{T, \bar{P}_1}]\geq \Omega\left(T^{\frac{1}{1+v}}\left(\frac{K}{\epsilon^2}\right)^{\frac{v}{1+v}}\right). $$
\end{proof}

\section{OMITTED EXPERIMENTAL RESULTS FOR SECTION~\ref{sec:experiments}}
\label{sec:appendix-experiments}
\begin{figure}[!htb]
    \centering
    \subcaptionbox{$v=0.5, \epsilon=0.5$}{\includegraphics[width=.245\linewidth]{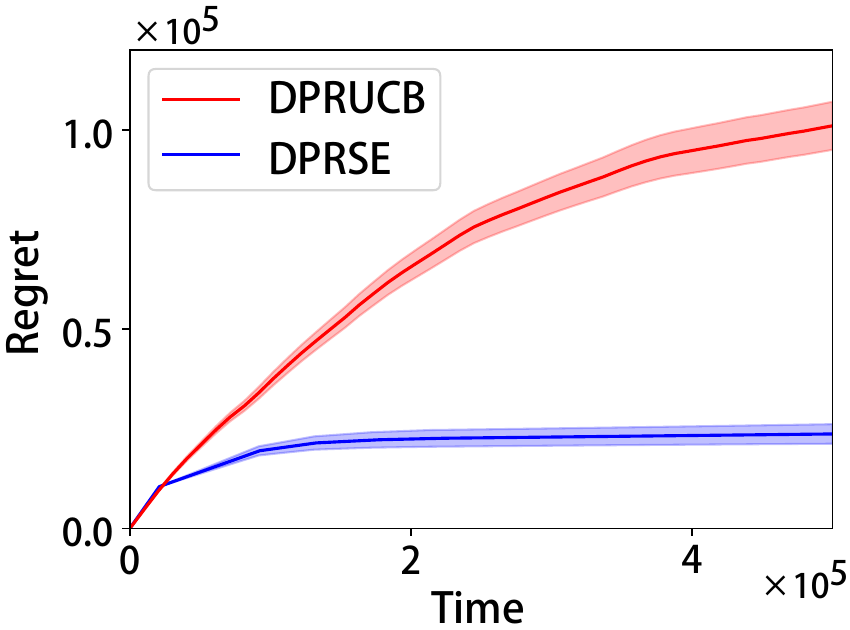}}
    \subcaptionbox{$v=0.5, \epsilon=1.0$}{\includegraphics[width=.245\linewidth]{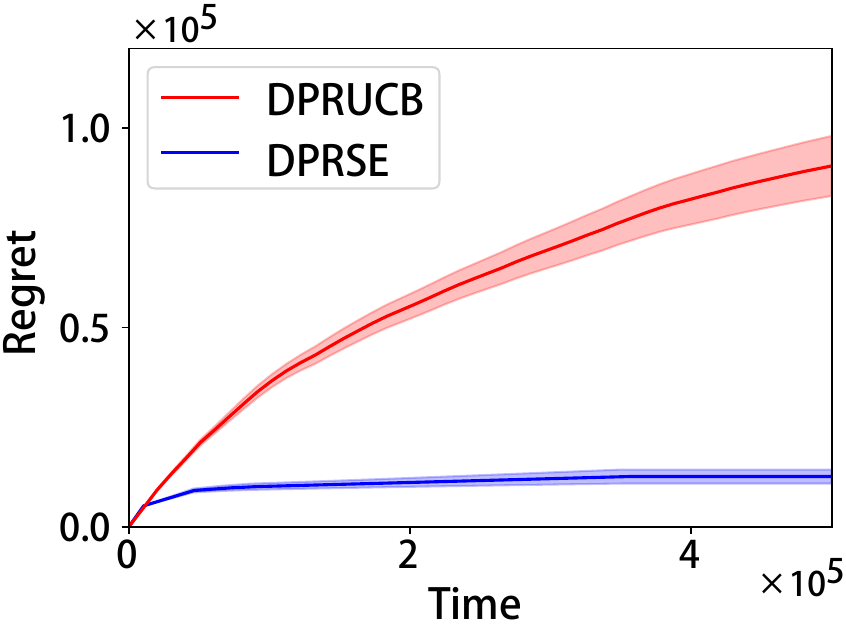}}
    \subcaptionbox{$v=0.9, \epsilon=0.5$}{\includegraphics[width=.245\linewidth]{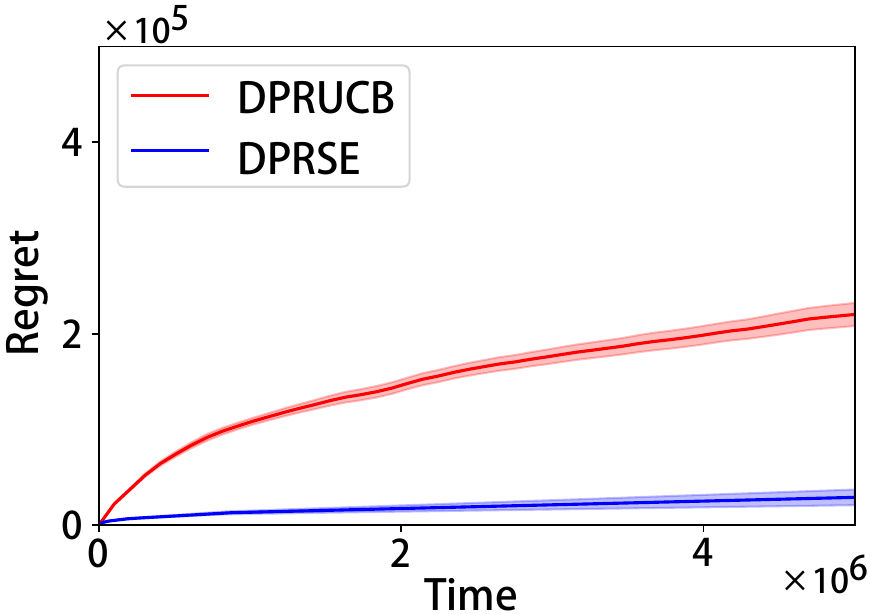}}
    \subcaptionbox{$v=0.9, \epsilon=1.0$}{\includegraphics[width=.245\linewidth]{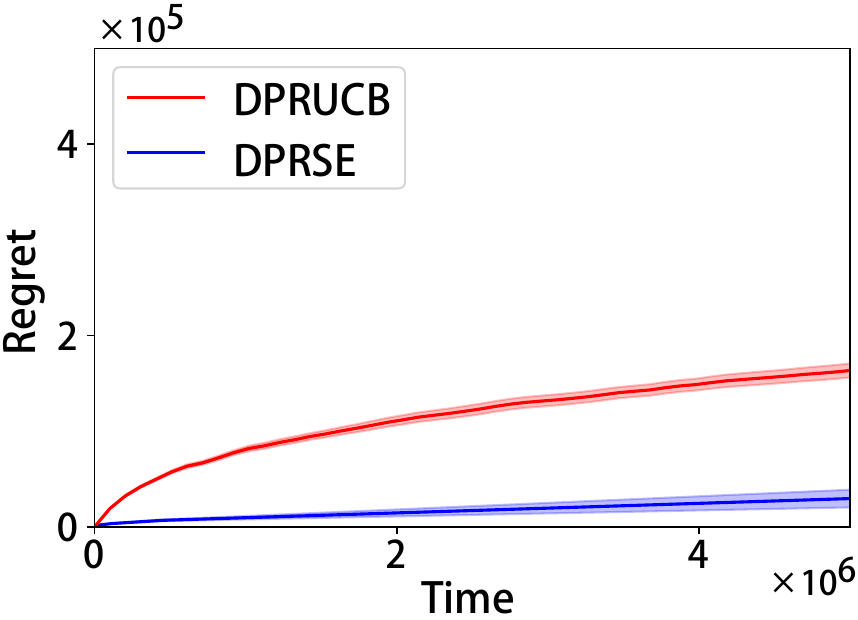}}
    \caption{DP Setting 2 ($S_2$)}
    \label{fig:DP2}
\end{figure}

\begin{figure}[!htb]
    \centering
    \subcaptionbox{$v=0.5, \epsilon=0.5$}{\includegraphics[width=.245\linewidth]{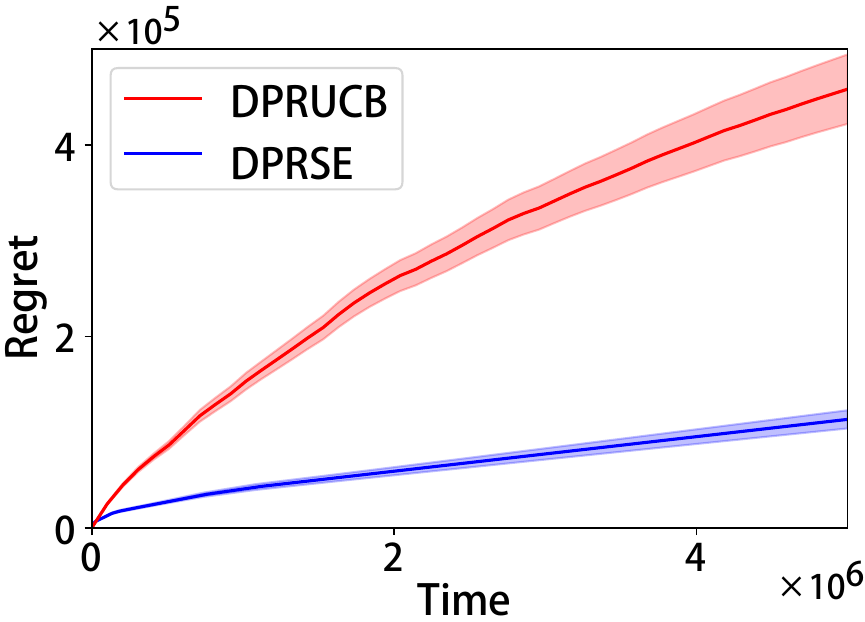}}
    \subcaptionbox{$v=0.5, \epsilon=1.0$}{\includegraphics[width=.245\linewidth]{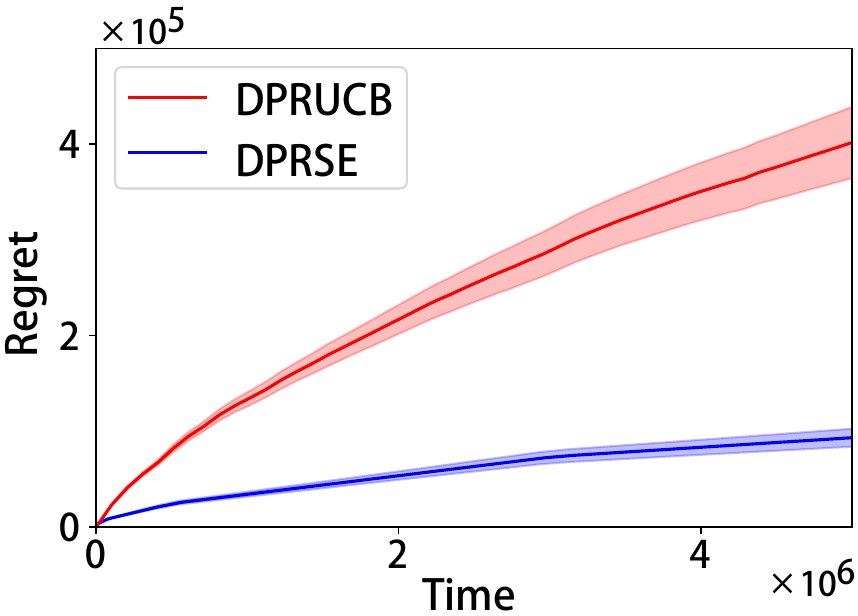}}
    \subcaptionbox{$v=0.9, \epsilon=0.5$}{\includegraphics[width=.245\linewidth]{DPS3F3.pdf}}
    \subcaptionbox{$v=0.9, \epsilon=1.0$}{\includegraphics[width=.245\linewidth]{DPS3F4.pdf}}
    \caption{DP Setting 3 ($S_3$)}
    \label{fig:DP3}
\end{figure}

\end{document}